\documentclass{article}

\usepackage{microtype}
\usepackage{graphicx}
\usepackage{subcaption}
\usepackage{booktabs} 
\usepackage[normalem]{ulem}

\usepackage{hyperref}

\usepackage[accepted]{icml2026}

\usepackage{mathtools}
\usepackage{booktabs} 
\usepackage[T1]{fontenc}    
\usepackage{hyperref}       
\usepackage{url}            
\usepackage{booktabs}       
\usepackage{nicefrac}       
\usepackage{microtype}      
\usepackage{xcolor}         
\usepackage{mathtools}
\usepackage{amsmath,amsfonts,amssymb,dsfont,bm,amsthm}
\usepackage{csquotes}
\usepackage{float}
\usepackage{subcaption}
\usepackage{txfonts}
\usepackage{dsfont}
\usepackage{hyperref}
\usepackage{thm-restate}
\usepackage{soul}
\usepackage{fix-cm}
\usepackage{comment}
\usepackage{soul}

\usepackage{tikz} 
\usetikzlibrary{babel, arrows, decorations, decorations.pathmorphing, decorations.pathreplacing, decorations.shapes, calc,arrows.meta, positioning, decorations.markings,fit,shapes.geometric, backgrounds}
\definecolor{coolorange}{HTML}{F28E00}

\tikzset{
    obs/.style={
        circle, draw=black, 
        minimum size=9mm,
        line width=0.4pt,        
        inner sep=1pt,           
        execute at begin node=\scriptsize$\scriptstyle,
        execute at end node=$
    },
    doop/.style={
        circle, draw=black, 
                line width=0.4pt,        
        inner sep=1pt,           
        minimum size=9mm,
        execute at begin node=\scriptsize$\scriptstyle,
        execute at end node=$
    },
    target/.style={
        circle, draw=black,
                line width=0.4pt,        
        inner sep=1pt,           
        minimum size=9mm,
        execute at begin node=\scriptsize$\scriptstyle,
        execute at end node=$
    },
        edge/.style={-, thick, gray},
    elabel/.style={midway, fill=white, inner sep=1pt, font=\scriptsize},
        r1/.style={-, line width=0.5pt, draw=blue!70},
    r2/.style={-, line width=0.5pt, draw=red!70},
    r3/.style={-, line width=0.5pt, draw=green!60!black}
}

 \newtheorem{theorem}{Theorem}

 \newtheorem{definition}{Definition}
 \newtheorem{example}{Example}

\usepackage{slashed}
\newcommand{\ind}{{\perp\!\!\!\perp}}

\newcommand{\notind}{\slashed{\ind}}
\newcommand{\V}{\mathcal{V}}

\newcommand{\E}{\mathcal{E}}
\newcommand{\X}{\mathbf{X}}
\newcommand{\Y}{\mathbf{Y}}
\newcommand{\Z}{\mathbf{Z}}
\newcommand{\x}{\mathbf{x}}
\newcommand{\y}{\mathbf{y}}

\newcommand{\w}{\mathbf{w}}
\newcommand{\Zu}{\mathbf{Z}}
\newcommand{\zu}{\mathbf{z}}
\newcommand{\Zd}{\mathbf{\widetilde Z}}
\newcommand{\zd}{\mathbf{\widetilde z}}
\newcommand{\W}{\mathbf{W}}
\newcommand{\Pa}{\mathrm{Pa}}
\newcommand{\Anc}{\text{Anc}}
\newcommand{\Desc}{\text{Desc}}
\newcommand{\G}{{\mathcal{G}}}

\newcommand{\C}{\mathcal{C}}
\newcommand{\U}{\mathcal{U}}
\newcommand{\Do}{\mathrm{do}}

\newcommand{\Ga}{\G_{\text{a}}}
\newcommand{\Gb}{\G_{\text{b}}}
\newcommand{\Gc}{\G_{\text{c}}}
\newcommand{\Gd}{\G_{\text{d}}}
\newcommand{\Ge}{\G_{\text{e}}}
\newcommand{\Ru}{\mathcal{R}^\uparrow}
\newcommand{\Rd}{\mathcal{R}^\downarrow}

\usepackage[capitalize,noabbrev]{cleveref}

\theoremstyle{plain}

\usepackage[textsize=tiny]{todonotes}

\icmltitlerunning{Unveiling the Structure of Do-Calculus Reasoning via Derivation Graphs}

\begin{document}

\twocolumn[
  \icmltitle{Unveiling the Structure of Do-Calculus Reasoning via Derivation Graphs}



  \icmlsetsymbol{equal}{*}

  \begin{icmlauthorlist}
    \icmlauthor{Clément Yvernes}{UGA}
    \icmlauthor{Emilie Devijver}{UGA}
    \icmlauthor{Marianne Clausel}{Lor}
    \icmlauthor{Eric Gaussier}{UGA}
  \end{icmlauthorlist}

  \icmlaffiliation{UGA}{Univ Grenoble Alpes, CNRS, Grenoble INP, LIG, France}
  \icmlaffiliation{Lor}{Université de Lorraine, CNRS, CRAN, 54000 Nancy, France}

  \icmlcorrespondingauthor{Clément Yvernes}{clement.yvernes@univ-grenoble-alpes.fr}
  \icmlcorrespondingauthor{Emilie Devijver}{emilie.devijver@univ-grenoble-alpes.fr}  
  \icmlcorrespondingauthor{Marianne Clausel}{marianne.clausel@univ-lorraine.fr}  
  \icmlcorrespondingauthor{Eric Gaussier}{eric.gaussier@univ-grenoble-alpes.fr}
  \icmlkeywords{Machine Learning, ICML, Causal Inference, Do-calculus}

  \vskip 0.3in
]



\printAffiliationsAndNotice{}  

\begin{abstract}

The do-calculus defines a general system of inference for interventional queries, allowing causal quantities to be transformed through successive applications of its rules. This process induces a rich space of equivalent interventional expressions, but combining and ordering these rules remains challenging.
In this work, we introduce derivation graphs, which represent how do-calculus rules are applied and combined, and characterize the full space of observational and interventional probabilities which are equivalent under the do-calculus. The structure of these graphs yields a simple procedure that uses at most four applications of do-calculus rules. 
Finally, we show how applying identification algorithms to equivalent causal queries produces multiple valid estimands for the same causal quantity, eventually yielding more efficient estimators.

\end{abstract}

\section{Introduction}

Interventional queries play a central role in causal inference, as they enable
reasoning about the effects of external manipulations beyond mere associational
relationships.
Such queries are naturally expressed using Pearl’s $\Do$-operator
\citep{pearl_causality_2009}, which provides a formal language for representing
interventional distributions through modifications of the underlying
data-generating process. In this paper, we consider interventional and conditional probabilities of the form
\(
P(\mathbf{y} \mid \mathrm{do}(\mathbf{x}), \mathbf{w}),
\)
where $\mathbf x$ and $\mathbf w$ denote value assignments for the disjoint and possibly empty sets of variables $\mathbf X$ and $\mathbf W$, respectively. We refer to probabilities of this form as \emph{expressions} throughout the paper. 

Pearl's do-calculus introduces three graphical rewriting rules that govern how interventional and observational variables can be added, removed, or exchanged within such expressions. While these rules are grounded in conditional independences in appropriately mutilated causal graphs, they define a structured space of transformations over expressions. Despite the central role of the do-calculus in causal reasoning, the relationships
between different sequences of rule applications remain poorly characterized.
We propose a derivation graph that organizes do-calculus transformations and
makes explicit the connections between equivalent expressions. This structured view proves useful in several settings.

The first example concerns identifiability, that is,
whether a target interventional distribution can be
expressed only in terms of
the observational data-generating process.
This question has been extensively studied for adjustment-based estimands
\cite{JMLR:v18:16-319,pmlr-v124-perkovic20a}, including their statistical properties
\cite{JMLR:v21:19-1026,10.5555/3455716.3455962,10.1093/biomet/asac062,SmuclerRotnitzky+2022+174+189,christgau2025efficientadjustmentcomplexcovariates,10.1214/24-AOS2448},
and particularly in linear and Gaussian settings
\cite{10.5555/3540261.3541467,JMLR:v23:21-023,10.1111/rssb.12451}.
However, adjustment alone is not sufficient for identification in general.
The do-calculus is complete for identifying (conditional) interventional
distributions \citep{huang_pearls_2006}, and the ID algorithm
\citep{shpitser_identification_2006} applies the do-calculus rules to derive a valid
estimand whenever identification from observational data is possible; its output
can further be simplified using probability-based reasoning
\citep{Tikka1, Tikka2}.
While the ID algorithm returns a single identification formula, the do-calculus admits multiple equivalent representations of the same expression which may exhibit markedly different statistical behavior when estimated from finite data. We illustrate this issue in the following example.


\begin{figure}[H]\centering
\begin{tikzpicture}[baseline=-0.5ex, node distance=1.1cm,
      node/.style = {inner sep=.4pt, minimum size=.9cm},
      >={Stealth[length=6pt]}, every edge/.style={->, thick}]
      
      \node[node] (W) {$W$};
      \node[node] (Z) [right=of W] {$Z$};
      \node[node] (X) [right=of Z] {$X$};
      \node[node] (Y) [right=of X] {$Y$};
      
      \draw[->] (W) -- (Z);
      \draw[->] (Z) -- (X);
      \draw[->] (X) -- (Y);
      \draw[dashed,<->,bend left=35] (W) to (Y);

    \end{tikzpicture}
\caption{A causal diagram with 4 variables used in Example~\ref{ex:1}.}
\label{fig:1}
\end{figure}
\begin{example}\label{ex:1}
Consider the ADMG shown in Figure~\ref{fig:1}, and the expression
$P(y \mid \Do(w,z))$. Using the front-door formula, one gets
\begin{align*}
    P(y\mid \Do(w,z)) = \sum_x P(x \mid w,z) \sum_{w^\prime,z^\prime}P(y \mid w^\prime,z^\prime,x) P(w^\prime, z^\prime).
\end{align*}
However, by using directly {usual }do-calculus rules and the backdoor formula, the same expression can be rewritten as: 
\begin{align*}
   P(y\mid \Do(w,z)) = P(y\mid \Do(z)) = \sum_{w} P(y\mid z,w) P(w),
\end{align*}
which is simpler to estimate and makes explicit the independence
of the query from the intervention on $W$.  
These two expressions are provably equal, yet arise from  different
derivations. {Moreover, the statistical properties of the associated estimators of the causal effect are different.} We provide more details on this example in Section \ref{sec:app}. 
\end{example}

A second important, yet less explored, question concerns the choice of
experimental data.
In many scientific domains, such as biology~\cite{lyle2023discobax} or medicine~\cite{bosdriesz2020evidence}, some interventions may
be feasible while others are costly or impossible to perform.
When {several } interventional queries correspond to the same causal effect,
understanding their equivalence structure enables principled choices between
experimental designs~\cite{ivanova2022efficient}.
Characterizing the full set of interventional queries that identify a given
causal effect allows one to reason about alternative experimental
strategies and to select interventions that are  statistically efficient 
and practically attainable.

Our main contributions are as follows:
\begin{itemize}
\item We introduce {the new concept of} derivation graphs, which explicitly represent the space of
expressions equivalent under the do-calculus.
\item We characterize the commutativity properties of do-calculus rules, leading to a structured and minimal operational calculus. {In particular, we prove that two expressions which can be derived from one another by the recursive application of any number of do-calculus rules\footnote{We say in this case that the two expressions are equivalent.} can in fact be derived by two applications of Rule $\mathcal{R}2$ and two applications of Rule $\mathcal{R}3$. We furthermore introduce an efficient graphical criterion allowing one to decide whether two expressions are equivalent or not.}
\item This framework directly reveals that the set of equivalent expressions can grow exponentially with the number of variables. It can furthermore be used to derive multiple valid estimands for the same causal effect using identification algorithms, {paving the way to insights about causal inference and experimental design}.
\end{itemize}

\begin{figure*}[htbp]
\centering
\begin{tikzpicture}[
    dofree/.style = {
      draw=coolorange!60!black,
      fill=coolorange!80,
      line width=0.6pt,
      inner sep=1pt,
      font=\scriptsize
    },
    withdo/.style = {
      draw=blue!60!black,
      fill=blue!25,
      line width=0.6pt,
      inner sep=1pt,
      font=\scriptsize
    },
    styleR1/.style = {thick, -, gray, line width=1.2pt},
    styleR2/.style = {thick, -, orange!60!brown, dashed, line width=1.2pt},
    styleR3/.style = {thick, -, blue!55, dotted, line width=1.5pt},
]

\begin{scope}[scale=2]
    \node[withdo] (c2n0) at (0, 0) {$P(a \mid \mathrm{do}(b,c))$};
    \node[withdo] (c2n1) at (0.5, -0.5) {$P(a \mid \mathrm{do}(c),b)$};
    \node[withdo] (c2n2) at (-0.5, 0.5) {$P(a \mid \mathrm{do}(b),c)$};
    \node[withdo] (c2n3) at (0.5, 0.5) {$P(a \mid \mathrm{do}(c))$};
    \node[withdo] (c2n4) at (-0.5, -0.5) {$P(a \mid \mathrm{do}(b))$};
    \node[dofree] (c2n5) at (0, -1.414) {$P(a \mid b)$};
    \node[dofree] (c2n6) at (-1.414, 0) {$P(a)$};

    \node[dofree] (c2n7) at (0, 1.414) {$P(a \mid c)$};
    \node[dofree] (c2n8) at (1.414, 0) {$P(a \mid b,c)$};
    
\begin{pgfonlayer}{background}
    \draw[styleR2] (c2n0) -- (c2n1);
    \draw[styleR2] (c2n0) -- (c2n2);
    \draw[styleR3] (c2n0) -- (c2n3);
    \draw[styleR3] (c2n0) -- (c2n4);
    \draw[styleR2] (c2n1) -- (c2n8);
    \draw[styleR3] (c2n1) -- (c2n5);
    \draw[styleR1] (c2n1) -- (c2n3);
    \draw[styleR1] (c2n2) -- (c2n4);
    \draw[styleR3] (c2n2) -- (c2n7);
    \draw[styleR2] (c2n2) -- (c2n8);
    \draw[styleR3] (c2n3) -- (c2n6);
    \draw[styleR2] (c2n3) -- (c2n7);
    \draw[styleR2] (c2n4) -- (c2n5);
    \draw[styleR3] (c2n4) -- (c2n6);
    \draw[styleR1] (c2n5) -- (c2n6);
    \draw[styleR1] (c2n5) -- (c2n8);
    \draw[styleR1] (c2n6) -- (c2n7);
    \draw[styleR1] (c2n7) -- (c2n8);
\end{pgfonlayer}
\end{scope}

\begin{scope}[scale=2, xshift=2.625cm, yshift=-1cm]
    \node[withdo] (c3n0) at (0, 0) {$P(b \mid \mathrm{do}(a,c))$};
    \node[withdo] (c3n1) at (0.5, -0.5) {$P(b \mid \mathrm{do}(a),c)$};
    \node[withdo] (c3n2) at (-0.5, 0.5) {$P(b \mid \mathrm{do}(c),a)$};
    \node[withdo] (c3n3) at (0.5, 0.5) {$P(b \mid \mathrm{do}(a))$};
    \node[withdo] (c3n4) at (-0.5, -0.5) {$P(b \mid \mathrm{do}(c))$};
    \node[dofree] (c3n5) at (0, -1.414) {$P(b \mid c)$};
    \node[dofree] (c3n6) at (-1.414, 0) {$P(b)$};

    \node[dofree] (c3n7) at (0, 1.414) {$P(b \mid a)$};
    \node[dofree] (c3n8) at (1.414, 0) {$P(b \mid a,c)$};
    \node[dofree] (solo) at (0, 1.78) {$P(a,b,c)$};

\begin{pgfonlayer}{background}
    \draw[styleR2] (c3n0) -- (c3n1);
    \draw[styleR2] (c3n0) -- (c3n2);
    \draw[styleR3] (c3n0) -- (c3n3);
    \draw[styleR3] (c3n0) -- (c3n4);
    \draw[styleR2] (c3n1) -- (c3n8);
    \draw[styleR3] (c3n1) -- (c3n5);
    \draw[styleR1] (c3n1) -- (c3n3);
    \draw[styleR1] (c3n2) -- (c3n4);
    \draw[styleR3] (c3n2) -- (c3n7);
    \draw[styleR2] (c3n2) -- (c3n8);
    \draw[styleR3] (c3n3) -- (c3n6);
    \draw[styleR2] (c3n3) -- (c3n7);
    \draw[styleR2] (c3n4) -- (c3n5);
    \draw[styleR3] (c3n4) -- (c3n6);
    \draw[styleR1] (c3n5) -- (c3n6);
    \draw[styleR1] (c3n5) -- (c3n8);
    \draw[styleR1] (c3n6) -- (c3n7);
    \draw[styleR1] (c3n7) -- (c3n8);
\end{pgfonlayer}
\end{scope}

\begin{scope}[scale=2, xshift=5.25cm]
    \node[withdo] (c4n0) at (0, 0) {$P(c \mid \mathrm{do}(a,b))$};
    \node[withdo] (c4n1) at (0.5, -0.5) {$P(c \mid \mathrm{do}(b),a)$};
    \node[withdo] (c4n2) at (-0.5, 0.5) {$P(c \mid \mathrm{do}(a),b)$};
    \node[withdo] (c4n3) at (0.5, 0.5) {$P(c \mid \mathrm{do}(b))$};
    \node[withdo] (c4n4) at (-0.5, -0.5) {$P(c \mid \mathrm{do}(a))$};
    \node[dofree] (c4n5) at (0, -1.414) {$P(c \mid a)$};
    \node[dofree] (c4n6) at (-1.414, 0) {$P(c)$};

    \node[dofree] (c4n7) at (0, 1.414) {$P(c \mid b)$};
    \node[dofree] (c4n8) at (1.414, 0) {$P(c \mid a,b)$};

\begin{pgfonlayer}{background}
    \draw[styleR2] (c4n0) -- (c4n1);
    \draw[styleR2] (c4n0) -- (c4n2);
    \draw[styleR3] (c4n0) -- (c4n3);
    \draw[styleR3] (c4n0) -- (c4n4);
    \draw[styleR2] (c4n1) -- (c4n8);
    \draw[styleR3] (c4n1) -- (c4n5);
    \draw[styleR1] (c4n1) -- (c4n3);
    \draw[styleR1] (c4n2) -- (c4n4);
    \draw[styleR3] (c4n2) -- (c4n7);
    \draw[styleR2] (c4n2) -- (c4n8);
    \draw[styleR3] (c4n3) -- (c4n6);
    \draw[styleR2] (c4n3) -- (c4n7);
    \draw[styleR2] (c4n4) -- (c4n5);
    \draw[styleR3] (c4n4) -- (c4n6);
    \draw[styleR1] (c4n5) -- (c4n6);
    \draw[styleR1] (c4n5) -- (c4n8);
    \draw[styleR1] (c4n6) -- (c4n7);
    \draw[styleR1] (c4n7) -- (c4n8);
\end{pgfonlayer}
\end{scope}

\begin{scope}[xshift=2.25cm, yshift=2.3cm, scale=1.3]
    \node[withdo] (doC) at (0,.866) {$P(a,b \mid \mathrm{do}(c))$};
    \node[dofree] (givenC) at (.5,0) {$P(a,b \mid c)$};
    \node[dofree] (marginal) at (-.5,0) {$P(a,b)$};

\begin{pgfonlayer}{background}
    \draw[styleR2] (doC) -- (givenC);
    \draw[styleR3] (doC) -- (marginal);
    \draw[styleR1] (givenC) -- (marginal);
\end{pgfonlayer}
\end{scope}

\begin{scope}[xshift=5.15cm, yshift=2.3cm, scale=1.3]
    \node[withdo] (doA) at (0,.866) {$P(b,c \mid \mathrm{do}(a))$};
    \node[dofree] (givenA) at (.5,0) {$P(b,c \mid a)$};
    \node[dofree] (marginal) at (-.5,0) {$P(b,c)$};

\begin{pgfonlayer}{background}
    \draw[styleR2] (doA) -- (givenA);
    \draw[styleR3] (doA) -- (marginal);
    \draw[styleR1] (givenA) -- (marginal);
\end{pgfonlayer}
\end{scope}

\begin{scope}[xshift=8.05cm, yshift=2.3cm, scale=1.3]
    \node[withdo] (doC) at (0,.866) {$P(a,c \mid \mathrm{do}(b))$};
    \node[dofree] (givenC) at (.5,0) {$P(a,c \mid b)$};
    \node[dofree] (marginal) at (-.5,0) {$P(a,c)$};

\begin{pgfonlayer}{background}
    \draw[styleR2] (doC) -- (givenC);
    \draw[styleR3] (doC) -- (marginal);
    \draw[styleR1] (givenC) -- (marginal);
\end{pgfonlayer}
\end{scope}

\end{tikzpicture}
\caption{Derivation graph for the  ADMG $\mathcal{G}_0$ over variables $A$, $B$, and $C$ with no edges. Blue nodes denote expressions involving interventions, while orange nodes denote observational (do-free) expressions. 
Edges represent atomic applications of the do-calculus rules (one variable at a time): gray solid lines for Rule $\mathcal{R}1$, orange dashed lines for Rule $\mathcal{R}2$, and blue dotted lines for Rule $\mathcal{R}3$.}
\label{fig:derivation_lattice_AfBfC}
\end{figure*}

The remainder of the paper is organized as follows.
Section~\ref{sec:notions} introduces the necessary preliminaries, while Section~\ref{sec:derivationGraph} presents the derivation graph and investigates its main properties. 
Section~\ref{sec:commutativity} analyzes the commutativity of the rules and
derives an operational calculus equivalent to the do-calculus, { which provides surprisingly simple interpretations on the derivation graph}. Section~\ref{sec:app}
discusses the application to the derivation of multiple estimands from the ID algorithm.
Section~\ref{sec:conc} concludes the paper. All the proofs are provided in Appendix, and Jupyter notebooks for building the derivation graph and reproducing the experiments are provided in \href{https://gricad-gitlab.univ-grenoble-alpes.fr/yvernesc/do-calculus-derivation-graphs}{this repository}.\footnote{https://gricad-gitlab.univ-grenoble-alpes.fr/yvernesc/do-calculus-derivation-graphs}

\section{Preliminaries}
\label{sec:notions}

%
A  variable is denoted by an uppercase letter $X$ and its 
value by a small letter $x$. 
A bold uppercase letter $\X$ denotes a set {of variables, the realizations of which are denoted by $\mathbf{x}$}.

\paragraph{Graphs.} We denote by $\Anc(\X, \G)$ and $\Desc(\X, \G)$ the sets of ancestors and descendants of $\X$ in the graph ${\G}$, respectively. By convention, each node is regarded as its own ancestor and descendant. 
%
A vertex $V$ is said to be \emph{active} on a path relative to a subset of variables $\Z$ if 1) $V$ is a collider and $V$ or any of its descendants are in $\Z$ or 2) $V$ is a non-collider and is not in $\Z$. A path $\pi$  is said to be \emph{active} given (or conditioned on) $\Z$ if every vertex on $\pi$ is active relative to $\Z$. Otherwise, $\pi$   is said to be \emph{inactive} given $\Z$. 
%
Given a graph $\mathcal{G}$, the sets $\X$ and $\Y$ are said to be d-separated by $\Z$ if every path between $\X$ and $\Y$ is inactive given $\Z$. We denote this by $\X \ind_{\G} \Y \mid \Z$. Otherwise, $\X$ and $\Y$ are d-connected given $\Z$, which we denote by $\X \notind_{\G} \Y \mid \Z$.
%
The mutilated graph 
 ${\mathcal{G}}_{\overline{\X}\underline{\Z}}$ is the result of
removing from a graph ${\mathcal{G}}$ edges 
with an arrowhead into $\X$ (e.g., $A \rightarrow \X$, $A \dashleftrightarrow \X$),  
and edges with a tail from $\Z$ (e.g., $A \leftarrow \Z$). 
For two graphs $\G_1 = (\V_1, \E_1)$ and $\G_2 = (\V_2, \E_2)$, the union is $\G_1 \cup \G_2 \coloneqq (\V_1 \cup \V_2, \E_1 \cup \E_2)$. 

\paragraph{Structural Causal Models.}  
A \emph{Structural Causal Model} (SCM) $\mathcal{M}$ is a 4-tuple $\langle \U, \V, \mathcal{F}, P(\U)\rangle$, where $\U$ is a set of mutually independent exogenous (latent) variables and $\V$ is a set of endogenous (observed) variables. $\mathcal{F} = \{f_i\}_{i=1}^{|\V|}$ assigns each $V_i \in \V$ a function $f_i(\U_i \cup \Pa(V_i))$, with $\U_i \subseteq \U$ and $\Pa(V_i) \subseteq \V \setminus V_i$, and uncertainty is encoded by $P(\U)$.  
Each SCM induces an acyclic directed mixed graph (ADMG) $\G(\V,\E)$, or \emph{causal diagram}, representing structural relations among $\V \cup \U$. Every $V_i \in \V$ is a vertex, directed edges $(V_j \to V_i)$ link each variable to its parents, and bidirected edges $(V_j \dashleftrightarrow V_i)$ connect variables sharing a common exogenous parent ($\U_i \cap \U_j \neq \emptyset$).

\paragraph{Interventions, Do-Operator and Do-Calculus}  
Performing an intervention $\X\!\!=\!\! \mathbf{x}$ is represented through the do-operator, \textit{do}($\X\!=\!\mathbf{x}$), which represents the operation of fixing a set $\X$ to a constant $\mathbf{x}$, and induces a submodel $\mathcal{M}_\X$, which is $\mathcal{M}$ with $f_X$ replaced to $x$ for every $X \in \X$. The post-interventional distribution induced by $\mathcal{M}_\X$ is denoted by $P(\mathbf{v} \setminus \mathbf{x} \mid \Do(\mathbf{x}))$. 
{A probability distribution $P$ is said to be \emph{compatible} with an ADMG $\mathcal{G}$ if it can be generated by a Structural Causal Model inducing $\mathcal{G}$. Then $P$ is Markovian with respect to $\G$ (Proposition 6.31 in \citet{PetJanSch17}).}
The three do-calculus rules are provided in the next theorem.

\begin{theorem}[Do-Calculus Rules, \cite{pearl_causality_2009}]
Let $\mathcal{G} = (\mathcal{V}, \mathcal{E})$ be an ADMG, and let $\mathbf{X,Y, Z, W}$ be pairwise disjoint subsets of $\mathcal{V}$, with $\Y \neq \emptyset$. Then the following rules hold for all distributions compatible with $\mathcal{G}$:

\textbf{Rule} $\mathcal{R}_1$ - $G \models \mathcal{R}_1(\Z)$: \text{insertion/deletion of observations}\newline
$\Y \ind_{\G_{\overline{\X}}} \Z \mid \X,\W 
    \Rightarrow P(\mathbf{y} \mid \Do(\mathbf{x}), \mathbf{z}, \mathbf{w})
= P(\mathbf{y} \mid \Do(\mathbf{x}), \mathbf{w})$,

\vspace{0.2cm}
\textbf{Rule} $\mathcal{R}_2$ - $\G \models \mathcal{R}_2(\Z)$: \text{action/observation exchange}\newline
$\mathbf{Y} \ind_{\mathcal{G}_{\overline{\mathbf{X}}\, \underline{\mathbf{Z}}}} \mathbf{Z} \mid \mathbf{X}, \mathbf{W} \Rightarrow P(\mathbf{y} \mid \Do(\mathbf{x},\mathbf{z}), \mathbf{w})
= P(\mathbf{y} \mid \Do(\mathbf{x}), \mathbf{z}, \mathbf{w})$,

\vspace{0.2cm}
\textbf{Rule} $\mathcal{R}_3$ - $\G \models \mathcal{R}_3(\Z)$: \text{insertion/deletion of actions }\newline
$\mathbf{Y} \ind_{\mathcal{G}_{\overline{\mathbf{X}}\, \overline{\mathbf{Z(W)}}}} \mathbf{Z} \mid \mathbf{X}, \mathbf{W} \Rightarrow P(\mathbf{y} \mid \Do(\mathbf{x},\mathbf{z}), \mathbf{w})
= P(\mathbf{y} \mid \Do(\mathbf{x}), \mathbf{w})$,

\vspace{0.2cm}
where $\mathbf{Z}(\mathbf{W})$ is the set of $\mathbf{Z}$-nodes non-ancestors of $\mathbf{W}$ in $\G_{\overline{\mathbf X}}$, and where $\mathcal{G}\models \mathcal{R}$ means that the rule $\mathcal{R}$ is valid in $\mathcal{G}$, i.e. the corresponding graphical condition holds in $\G$. 
\end{theorem}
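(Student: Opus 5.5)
We will prove the three rules by working in the SCM that generates $P$. The main tool is the twin-network / augmented-graph construction: for every $Z\in\Z$ we add an intervention indicator $F_Z$ with a single edge $F_Z\to Z$. Here $F_Z=\mathrm{idle}$ leaves $f_Z$ untouched, and $F_Z=z$ replaces $f_Z$ by the constant $z$. We also fix $\Do(\x)$ once and for all and work in the submodel $\mathcal{M}_\X$. Its induced graph is $\G_{\overline{\X}}$, and by the compatibility remark above it is Markovian with respect to that graph. Since $\X$ is constant in $\mathcal{M}_\X$, conditioning on $\X$ is innocuous, and each rule reduces to a statement about $\mathcal{M}_\X$ with $\X$ removed from the conditioning.

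\textbf{Rule 1.} In $\mathcal{M}_\X$ the distribution $P(\cdot\mid\Do(\x))$ is Markovian with respect to $\G_{\overline{\X}}$. The hypothesis $\Y\ind_{\G_{\overline{\X}}}\Z\mid\X,\W$ then gives, by the global Markov property, $P(\y\mid\Do(\x),\z,\w)=P(\y\mid\Do(\x),\w)$.

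\textbf{Rule 2.} Consider the augmented graph $\G_{\overline{\X}}^{F}$, which is $\G_{\overline{\X}}$ together with the nodes $F_Z$. The identity to prove is $P(\y\mid F_\Z=\z,\z,\w)=P(\y\mid F_\Z=\mathrm{idle},\z,\w)$ in the augmented model. Because $\Z$ is conditioned on and the value of $F_\Z$ then only matters through whether $\Z$ is forced, it suffices to show $\Y\ind F_\Z\mid\Z,\W$ in $\G_{\overline{\X}}^F$. Take any path from $F_Z$ to $\Y$. It must start $F_Z\to Z$, with $Z$ conditioned on. If the path continues out of $Z$ through an edge with an arrowhead at $Z$, then $Z$ is a collider, hence active, and the rest is a path from $Z$ into $\Y$ whose first edge points into $Z$. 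If it continues through a tail out of $Z$, then $Z$ is a conditioned non-collider, so the path is blocked. Therefore the only potentially active paths are those entering $Z$ through arrowheads, which are exactly the paths that survive in $\G_{\overline{\X}\underline{\Z}}$. By hypothesis these are blocked given $\X,\W$, and the block persists once $\Z$ is added to the conditioning set. For the non-collider case this is immediate; for a collider it has to be checked that conditioning on $\Z$ does not open it through a descendant, and we handle this by choosing the first $\Z$-node along the path.

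\textbf{Rule 3.} This is the main obstacle. We must show $F_\Z\ind\Y\mid\W$ in the augmented graph, where $\Z$ is not conditioned on. We split $\Z$ into $\Z(\W)$, the nodes that are not ancestors of $\W$ in $\G_{\overline{\X}}$, and the remaining nodes, which are ancestors of $\W$.
\begin{itemize}
\item For $Z\in\Z(\W)$, a path $F_Z\to Z\cdots$ that leaves $Z$ through an incoming edge makes $Z$ a collider with no descendant in $\W$, so that path is blocked. Paths that leave through outgoing edges remain.
\item For ancestors of $\W$, the incoming edges may open, but removing them is exactly what $\G_{\overline{\X}\,\overline{\Z(\W)}}$ does not do. Since removing the edges into $\Z(\W)$ is justified above, the hypothesis must cover the ancestor nodes directly.
\end{itemize}
The plan is to show that any active path from some $F_Z$ to $\Y$ given $\W$ in the augmented graph yields an active path from $\Z$ to $\Y$ given $\X,\W$ in $\G_{\overline{\X}\,\overline{\Z(\W)}}$. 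The delicate point is colliders on the original path whose only route to $\W$ passes through a $\Z(\W)$ node. Such a route is impossible, since a $\Z(\W)$ node has no descendant in $\W$, so deleting the edges into $\Z(\W)$ does not deactivate any collider. To deal with paths that revisit $\Z$, we take a shortest active path and truncate it at its last $\Z$-node.

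Finally, setting $F_\Z=\z$ versus $\mathrm{idle}$ gives $P(\y\mid\Do(\x,\z),\w)=P(\y\mid\Do(\x),\w)$, since these are the two regimes of the augmented model.
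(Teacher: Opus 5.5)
The paper gives no proof of this theorem; it quotes it from Pearl. Your strategy is the standard augmented-graph proof: add indicators $F_Z\to Z$ and reduce Rules~2 and~3 to d-separation of $\Y$ from $F_\Z$ in $\G^F_{\overline{\X}}$, given $\X,\Z,\W$ and given $\X,\W$ respectively. Rule~1 and the probabilistic reductions are fine.

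\textbf{Rule~3.} The graphical step has a gap, and it sits at the real delicate point. That point is not the collider issue you flag, which is easy as you note. It is the truncation. After cutting an active path $\pi$ at its last $\Z$-node $Z^*$, you must check that $\pi_{[Z^*,\Y]}$ exists in $\G_{\overline{\X}\,\overline{\Z(\W)}}$. This fails if $Z^*\in\Z(\W)$ and the edge of $\pi$ at $Z^*$ on the $\Y$-side has an arrowhead at $Z^*$, because that edge is deleted. Taking a shortest path does not rule this out, so you must show the configuration cannot occur.
\begin{itemize}
\item Such a $Z^*$ cannot be a collider on $\pi$, since it would then lie in $\Anc(\W,\G_{\overline{\X}})$. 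So its other edge is $Z^*\to a$.
\item Walking from $Z^*$ back towards $F_Z$, every edge is directed away from $Z^*$ until the first collider. If no collider occurs earlier, then $Z$ itself is one ($F_Z\to Z\leftarrow\cdots$).
\item Either way, an active collider is a descendant of $Z^*$ and an ancestor of $\W$ in $\G_{\overline{\X}}$. This contradicts $Z^*\in\Z(\W)$.
\item If $Z^*=Z$, the arrowhead at $Z$ directly makes $Z$ a collider, with the same contradiction.
\end{itemize}

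\textbf{Rule~2.} This needs a similar repair. Your claim that blocking given $\X,\W$ persists once $\Z$ is added is false path by path. A collider $C$ with $C\in\Anc(\Z,\G_{\overline{\X}})$ but $C\notin\Anc(\W,\G_{\overline{\X}\underline{\Z}})$ blocks the path given $\X,\W$ and opens it given $\X,\Z,\W$. What you need is a rerouting:
\begin{itemize}
\item Every $\Z$-node on an active path given $\X,\Z,\W$ is a collider (including the start, since $F_Z\to Z$). So the path truncated at its last $\Z$-node lies in $\G_{\overline{\X}\underline{\Z}}$.
\item On it, take the first such collider $C$ from the $\Y$-end, and follow a shortest directed path in $\G_{\overline{\X}}$ from $C$ to $\Z\cup\W$.
\item This directed path must end at some $Z'\in\Z$; otherwise $C$ would be an ancestor of $\W$ in $\G_{\overline{\X}\underline{\Z}}$. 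It has no interior $\W$-node and no tail out of $\Z$.
\item Splice $\Y\cdots C\to\cdots\to Z'$, shortcutting if the two pieces meet. This path is active given $\X,\W$ in $\G_{\overline{\X}\underline{\Z}}$, contradicting the hypothesis.
\end{itemize}

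\textbf{Keep $\X$ in the conditioning set.} In $\G^F_{\overline{\X}}$, keep conditioning on $\X$ (or delete the $\X$-nodes) rather than dropping it, as you do in Rule~2. The $\X$-nodes are sources there. Without conditioning on them, a path such as $F_Z\to Z\leftarrow X\to Y$ is active given $Z$, even though the Rule~2 hypothesis holds in that graph. With these repairs the argument is complete.
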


The do-calculus operates by successively applying its rules to rewrite observational and interventional probabilities of the form \(P(\mathbf{y}\mid\mathrm{do}(\mathbf{x}),\mathbf{w})\). Starting from a causal expression of interest, each step replaces the
current quantity by an equivalent one, whose equality is certified by a
graphical condition in the graph. Given two rules $\mathcal{R}_i(\Z)$ and $\mathcal{R}_j(\widetilde\Z)$, we denote by $\mathcal{R}_i(\Z)\,\mathcal{R}_j(\widetilde\Z)$ the successive application of $\mathcal{R}_i(\Z)$ followed by $\mathcal{R}_j(\widetilde\Z)$.  A sequence of do-calculus rule application is valid if each rule in the sequence is valid when applied in order.

\section{Derivation Graphs and Equivalent Expressions} 
\label{sec:derivationGraph}

The main contribution of this section is to introduce the so--called derivation graph. It allows an extensive exploration of all expressions that are equivalent under the do-calculus. 

\begin{figure*}[htbp]
\centering
\begin{tikzpicture}[
    dofree/.style = {
      draw=coolorange!60!black,
      fill=coolorange!80,
      line width=0.6pt,
      inner sep=1pt,
      font=\scriptsize
    },
    withdo/.style = {
      draw=blue!60!black,
      fill=blue!25,
      line width=0.6pt,
      inner sep=1pt,
      font=\scriptsize
    },
    styleR1/.style = {thick, -, gray, line width=1.2pt},
    styleR2/.style = {thick, -, orange!60!brown, dashed, line width=1.2pt},
    styleR3/.style = {thick, -, blue!55, dotted, line width=1.5pt},
]

\begin{scope}[scale=2]
    \node[withdo] (c2n0) at (0, 0) {$P(a \mid \mathrm{do}(b,c))$};
    \node[withdo] (c2n1) at (0.5, -0.5) {$P(a \mid \mathrm{do}(c),b)$};
    \node[withdo] (c2n2) at (-0.5, 0.5) {$P(a \mid \mathrm{do}(b),c)$};
    \node[withdo] (c2n3) at (0.5, 0.5) {$P(a \mid \mathrm{do}(c))$};
    \node[withdo] (c2n4) at (-0.5, -0.5) {$P(a \mid \mathrm{do}(b))$};
    \node[dofree] (c2n5) at (0, -1.414) {$P(a \mid b)$};
    \node[dofree] (c2n6) at (-1.414, 0) {$P(a)$};

    \node[dofree] (c2n7) at (0, 1.414) {$P(a \mid c)$};
    \node[dofree] (c2n8) at (1.414, 0) {$P(a \mid b,c)$};

\begin{pgfonlayer}{background}
    \draw[styleR2] (c2n0) -- (c2n2);
    \draw[styleR3] (c2n0) -- (c2n3);
    \draw[styleR3] (c2n0) -- (c2n4);
    \draw[styleR2] (c2n1) -- (c2n8);
    \draw[styleR3] (c2n1) -- (c2n5);
    \draw[styleR1] (c2n2) -- (c2n4);
    \draw[styleR3] (c2n3) -- (c2n6);
    \draw[styleR3] (c2n4) -- (c2n6);
    \draw[styleR1] (c2n5) -- (c2n8);
\end{pgfonlayer}
\end{scope}

\begin{scope}[scale=2, xshift=2.625cm, yshift=-1cm]
    \node[withdo] (c3n0) at (0, 0) {$P(b \mid \mathrm{do}(a,c))$};
    \node[withdo] (c3n1) at (0.5, -0.5) {$P(b \mid \mathrm{do}(a),c)$};
    \node[withdo] (c3n2) at (-0.5, 0.5) {$P(b \mid \mathrm{do}(c),a)$};
    \node[withdo] (c3n3) at (0.5, 0.5) {$P(b \mid \mathrm{do}(a))$};
    \node[withdo] (c3n4) at (-0.5, -0.5) {$P(b \mid \mathrm{do}(c))$};
    \node[dofree] (c3n5) at (0, -1.414) {$P(b \mid c)$};
    \node[dofree] (c3n6) at (-1.414, 0) {$P(b)$};

    \node[dofree] (c3n7) at (0, 1.414) {$P(b \mid a)$};
    \node[dofree] (c3n8) at (1.414, 0) {$P(b \mid a,c)$};
    \node[dofree] (solo) at (0, 1.78) {$P(a,b,c)$};

\begin{pgfonlayer}{background}
    \draw[styleR2] (c3n0) -- (c3n2);
    \draw[styleR3] (c3n0) -- (c3n3);
    \draw[styleR2] (c3n1) -- (c3n8);
    \draw[styleR3] (c3n2) -- (c3n7);
    \draw[styleR2] (c3n3) -- (c3n7);
    \draw[styleR3] (c3n4) -- (c3n6);
\end{pgfonlayer}
\end{scope}

\begin{scope}[scale=2, xshift=5.25cm]
    \node[withdo] (c4n0) at (0, 0) {$P(c \mid \mathrm{do}(a,b))$};
    \node[withdo] (c4n1) at (0.5, -0.5) {$P(c \mid \mathrm{do}(b),a)$};
    \node[withdo] (c4n2) at (-0.5, 0.5) {$P(c \mid \mathrm{do}(a),b)$};
    \node[withdo] (c4n3) at (0.5, 0.5) {$P(c \mid \mathrm{do}(b))$};
    \node[withdo] (c4n4) at (-0.5, -0.5) {$P(c \mid \mathrm{do}(a))$};
    \node[dofree] (c4n5) at (0, -1.414) {$P(c \mid a)$};
    \node[dofree] (c4n6) at (-1.414, 0) {$P(c)$};

    \node[dofree] (c4n7) at (0, 1.414) {$P(c \mid b)$};
    \node[dofree] (c4n8) at (1.414, 0) {$P(c \mid a,b)$};

\begin{pgfonlayer}{background}
    \draw[styleR2] (c4n0) -- (c4n1);
    \draw[styleR2] (c4n0) -- (c4n2);
    \draw[styleR3] (c4n0) -- (c4n4);
    \draw[styleR2] (c4n1) -- (c4n8);
    \draw[styleR3] (c4n1) -- (c4n5);
    \draw[styleR1] (c4n2) -- (c4n4);
    \draw[styleR2] (c4n2) -- (c4n8);
    \draw[styleR2] (c4n3) -- (c4n7);
    \draw[styleR2] (c4n4) -- (c4n5);
    \draw[styleR1] (c4n5) -- (c4n8);
\end{pgfonlayer}
\end{scope}

\begin{scope}[xshift=2.25cm, yshift=2.3cm, scale=1.3]
    \node[withdo] (doC) at (0,.866) {$P(a,b \mid \mathrm{do}(c))$};
    \node[dofree] (givenC) at (.5,0) {$P(a,b \mid c)$};
    \node[dofree] (marginal) at (-.5,0) {$P(a,b)$};

\begin{pgfonlayer}{background}
    \draw[styleR3] (doC) -- (marginal);
\end{pgfonlayer}
\end{scope}

\begin{scope}[xshift=5.15cm, yshift=2.3cm, scale=1.3]
    \node[withdo] (doA) at (0,.866) {$P(b,c \mid \mathrm{do}(a))$};
    \node[dofree] (givenA) at (.5,0) {$P(b,c \mid a)$};
    \node[dofree] (marginal) at (-.5,0) {$P(b,c)$};

\begin{pgfonlayer}{background}
    \draw[styleR2] (doA) -- (givenA);
\end{pgfonlayer}
\end{scope}

\begin{scope}[xshift=8.05cm, yshift=2.3cm, scale=1.3]
    \node[withdo] (doC) at (0,.866) {$P(a,c \mid \mathrm{do}(b))$};
    \node[dofree] (givenC) at (.5,0) {$P(a,c \mid b)$};
    \node[dofree] (marginal) at (-.5,0) {$P(a,c)$};

\begin{pgfonlayer}{background}
\end{pgfonlayer}
\end{scope}

\end{tikzpicture}
\caption{Derivation graph for the ADMG $\mathcal{G}: A \rightarrow B \rightarrow C$. Blue nodes denote expressions involving interventions, while orange nodes denote observational (do-free) expressions. 
Edges represent atomic applications of the do-calculus rules (one variable at a time): gray solid lines for Rule $\mathcal{R}1$, orange dashed lines for Rule $\mathcal{R}2$, and blue dotted lines for Rule $\mathcal{R}3$.}
\label{fig:derivation_lattice_do_bc}
\end{figure*}

\begin{definition}
Let $\mathcal{G} = (\mathcal{V}_\mathcal{G}, \mathcal{E}_\mathcal{G})$ be an ADMG. 
Its \emph{derivation graph} is defined as the following undirected graph $\mathcal{D}[\mathcal{G}] = (\mathcal{V}_\mathcal{D}, \mathcal{E}_\mathcal{D})$, where:
\begin{itemize}
        \item The set of vertices $\mathcal{V}_\mathcal{D}$ consists of all expressions of the form $P(\mathbf{y} \mid \Do(\mathbf{x}), \mathbf{w})$, with pairwise disjoint subsets  $(\mathbf{Y}, \mathbf{X},  \mathbf{W}) \subseteq \mathcal{V}_\mathcal{G}$, with $\mathbf{Y} \neq \emptyset$.
    \item There is an edge $(Q_1, Q_2) \in \mathcal{E}_\mathcal{D}$ if $Q_2$ can be obtained from $Q_1$ by applying a single do-calculus rule  that is valid in $\mathcal{G}$.
\end{itemize}
\end{definition}

Figure~\ref{fig:derivation_lattice_AfBfC} illustrates the derivation graph for the  causal graph $\mathcal{G}_0 = (\{A,B,C\}, \emptyset)$, while Figure~\ref{fig:derivation_lattice_do_bc} is the derivation graph for the ADMG $\mathcal{G} = (\{A,B,C\}, \{A\rightarrow B, B \rightarrow C\})$. 

The derivation graph exhibit several structural properties.


\begin{restatable}[]{proposition}{propemptyADMG}
\label{prop:emptyADMG}
    Let $\mathcal{G}_0 = (\V_{\G_0}, \emptyset)$ be the causal graph with no edges.
    Then its derivation graph $\mathcal{D}[\mathcal{G}_0]$ consists of $2^{\left|\V_{\G_0}\right|}-1$ connected components, each corresponding to a non-empty subset of outcome variables $\Y$, and expressions of the form
        $
        \left\{ P(\mathbf{y} \mid \Do(\mathbf{x}), \mathbf{w}) 
        : \mathbf{X}, \mathbf{W} \subseteq \V_{\G_0} \setminus \mathbf{Y}, \X \cap \W = \emptyset \right\}$.
        
    
     Moreover, for any causal graph $\mathcal{G}$ defined on the same vertex set $\V_{\G_0}$, the derivation graph $\mathcal{D}[\mathcal{G}]$ is a subgraph of $\mathcal{D}[\mathcal{G}_0]$.
\end{restatable}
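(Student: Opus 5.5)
The plan has three parts. First, show that no rule changes the outcome set $\Y$. Second, show that in $\G_0$ every rule condition holds, so that within each fixed $\Y$ all expressions are connected. Third, obtain the subgraph claim from monotonicity of d-separation under edge removal.

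\textbf{Invariance of the outcome set.} By inspection of the three rules, each one rewrites $P(\mathbf{y}\mid\Do(\mathbf{x}),\ldots)$ into an expression with the same $\mathbf{y}$. Rule $\mathcal{R}_1$ inserts or deletes $\Z$ among the observations. Rule $\mathcal{R}_2$ moves $\Z$ between the intervened and observed sets. Rule $\mathcal{R}_3$ inserts or deletes $\Z$ among the interventions. Hence every edge of any derivation graph joins two expressions with the same $\Y$. The vertex set therefore splits into classes indexed by the nonempty $\Y\subseteq\V_{\G_0}$, and there are $2^{|\V_{\G_0}|}-1$ such classes. Each class consists exactly of the expressions $P(\mathbf{y}\mid\Do(\mathbf{x}),\mathbf{w})$ with disjoint $\X,\W\subseteq\V_{\G_0}\setminus\Y$. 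No edge crosses between classes.

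\textbf{Connectivity within a class.} In $\G_0$ every mutilated graph $\G_{0,\overline{\X}\underline{\Z}}$ or $\G_{0,\overline{\X}\,\overline{\Z(\W)}}$ is again edgeless. There are no paths at all, so $\Y\ind\Z\mid\X,\W$ holds trivially, using $\Y\cap\Z=\emptyset$. All three rules are therefore valid for every choice of disjoint sets. Now fix $\Y$ and an expression $P(\mathbf{y}\mid\Do(\mathbf{x}),\mathbf{w})$. Applying $\mathcal{R}_3$ with $\Z=\X$ connects it to $P(\mathbf{y}\mid\mathbf{w})$. Applying $\mathcal{R}_1$ with $\Z=\W$ then connects this to $P(\mathbf{y})$. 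Every expression in the class is thus joined to the common vertex $P(\mathbf{y})$, so each class is a single connected component. This is consistent with Figure~\ref{fig:derivation_lattice_AfBfC}.

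\textbf{Subgraph property.} Let $\G$ be any ADMG on $\V_{\G_0}$. The two derivation graphs have the same vertex set, since vertices depend only on the variable set. Suppose $(Q_1,Q_2)$ is an edge of $\mathcal{D}[\G]$ via some rule applied with sets $\X,\Z,\W$. The corresponding rule for the same sets holds in $\G_0$ by the previous step, because every d-separation statement holds there. So the same edge lies in $\mathcal{D}[\G_0]$.

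The one point needing care is the set $\Z(\W)$ in $\mathcal{R}_3$, which depends on the graph. This is harmless: whatever subset of $\Z$ it defines, removing edges from an edgeless graph leaves it edgeless, so the condition still holds in $\G_0$. I also need the edge to be defined purely by the syntactic transformation together with validity. Then edge membership is monotone in validity, and validity is total in $\G_0$.

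I expect the main obstacle to be bookkeeping rather than substance. Specifically, one must check that every syntactically admissible rule application, with pairwise disjoint sets and $\Y\neq\emptyset$, corresponds to a vertex pair in the same class, so that the component count is exact.
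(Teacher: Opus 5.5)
Your proposal is correct and follows the same three-step structure as the paper's proof. Outcome-set invariance separates the expressions into $2^{|\mathcal{V}_{\mathcal{G}_0}|}-1$ classes, each class is connected in $\mathcal{D}[\mathcal{G}_0]$, and the subgraph claim follows because every rule condition holds in the edgeless graph. Your connectivity step is more explicit than the paper's, which only invokes mutual independence of the variables: you check that the graphical conditions hold vacuously in every edgeless mutilated graph and route each expression to $P(\mathbf{y})$ via $\mathcal{R}_3$ then $\mathcal{R}_1$, which matches the definition of edges as graphically valid rule applications.
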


The connected components of $\mathcal{D}[\mathcal{G}_0]$ are visible in Figure~\ref{fig:derivation_lattice_AfBfC}.
The relationship between $\mathcal{D}[\mathcal{G}]$ and $\mathcal{D}[\mathcal{G}_0]$ is seen by comparing Figure~\ref{fig:derivation_lattice_AfBfC} with Figure~\ref{fig:derivation_lattice_do_bc}: while the set of nodes remains unchanged, adding causal edges removes derivations, resulting in more connected components in $\mathcal{D}[\mathcal{G}]$.


Given a causal graph $\mathcal{G}$ and its associated derivation graph $\mathcal{D}[\mathcal{G}]$, 
the connected component of $\mathcal{D}[\mathcal{G}]$ that contains a specific expression $\mathcal{Q}$ {is of special interest: it represents all expressions which are equivalent to $\mathcal{Q}$ under the do-calculus.}

\begin{definition}[Equivalent Expressions]
    Let $\mathcal{G} = (\mathcal{V}_\mathcal{G}, \mathcal{E}_\mathcal{G})$ be an ADMG, 
    and let $\mathcal{D}[\mathcal{G}] = (\mathcal{V}_\mathcal{D}, \mathcal{E}_\mathcal{D})$ be its derivation graph.
    For a given expression $\mathcal{Q} \in \mathcal{V}_\mathcal{D}$, 
    the set of \emph{equivalent expressions} 
    $\mathcal{A}^\mathcal{G}_\mathcal{Q}$ is defined as the set of nodes in the connected component of $\mathcal{D}$ that contains $\mathcal{Q}$.
\end{definition}

Note that being identifiable is a stable property within the connected component: the set of equivalent interventional queries in a connected component are either all identifiable, or none of them are.

\begin{example}
    In $\mathcal{G}_0 = (\{A,B,C\}, \emptyset)$, one can see in Figure~\ref{fig:derivation_lattice_AfBfC} that:
\[
\mathcal{A}^{\mathcal{G}_0}_{P(a,b \mid \Do(c))} =
\left\{P(a,b \mid \Do(c)), P(a,b), P(a,b\mid c)\right\}.
\]

In the ADMG $\mathcal{G} = (\{A,B,C\}, \{A\rightarrow B, B \rightarrow C\})$ (see Figure~\ref{fig:derivation_lattice_do_bc}), one has:
\[
\mathcal{A}^{\mathcal{G}}_{P(b \mid a)} =
\left\{
\begin{aligned}
& P(b \mid a), P(b \mid \Do(c), a),\\
& P(b \mid \Do(a,c)), P(b \mid \Do(a))
\end{aligned}
\right\}.
\]
\end{example}

The next proposition bounds the complexity of the exploration of the derivation graph.

\begin{restatable}[]{proposition}{proptailleconnectedcomponent}
    Let $\mathcal{G} = (\mathcal{V}_\mathcal{G}, \mathcal{E}_\mathcal{G})$ be an ADMG, and let $\Y,\X,\W$ be pairwise disjoint subsets of~$\V_\G$, with $\Y \neq \emptyset$. Then, 
    \begin{align*}
     1\leq  \left|  \mathcal{A}^{\mathcal{G}}_{{P}(\mathbf{y}\mid \Do(\mathbf{x}),\mathbf{w})} \right|  \leq 3^{\left| \mathcal{V}_{\mathcal{G}} \setminus \Y \right| }.
    \end{align*}
\end{restatable}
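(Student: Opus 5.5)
The lower bound is immediate: $\mathcal{A}^{\mathcal{G}}_{P(\mathbf{y}\mid \Do(\mathbf{x}),\mathbf{w})}$ is by definition the connected component of $\mathcal{D}[\mathcal{G}]$ containing $P(\mathbf{y}\mid \Do(\mathbf{x}),\mathbf{w})$, so it contains at least this node. For the upper bound, the plan is to show that every node in the component has the same outcome set $\Y$, and then to count the expressions with a fixed outcome set.

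\textbf{Step 1: the outcome set is invariant along edges.} Inspecting the three rules, each one modifies only the intervention set $\X$ and the conditioning set $\W$; the outcome set $\Y$ is never altered. Rule $\mathcal{R}_1$ inserts or deletes $\Z$ in the conditioning set. Rule $\mathcal{R}_2$ moves $\Z$ between the intervention set and the conditioning set. Rule $\mathcal{R}_3$ inserts or deletes $\Z$ in the intervention set. Since each edge of $\mathcal{D}[\mathcal{G}]$ is a single rule application, it joins two expressions with the same outcome set. By induction on path length, every node of the connected component has outcome set exactly $\Y$. Alternatively, one can invoke Proposition~\ref{prop:emptyADMG}: $\mathcal{D}[\mathcal{G}]$ is a subgraph of $\mathcal{D}[\mathcal{G}_0]$ with the same vertex set, so each component of $\mathcal{D}[\mathcal{G}]$ lies inside a component of $\mathcal{D}[\mathcal{G}_0]$. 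Those components are indexed by $\Y$ and consist exactly of the expressions $P(\mathbf{y}\mid\Do(\mathbf{x}'),\mathbf{w}')$ with $\X',\W'\subseteq \V_\G\setminus\Y$ disjoint.

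\textbf{Step 2: counting.} An expression with outcome set $\Y$ is determined by a pair of disjoint subsets $(\X',\W')$ of $\V_\G\setminus\Y$. Such a pair corresponds bijectively to a map $\V_\G\setminus\Y\to\{\text{intervened},\text{conditioned},\text{absent}\}$. There are therefore exactly $3^{|\V_\G\setminus\Y|}$ such expressions, which gives the upper bound.

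There is no real obstacle here. The only point requiring care is that nodes are expressions indexed by the triple of variable sets, with the values treated symbolically, so distinct triples give distinct nodes and the count is exact. It is also worth remarking that the bound is attained for $\mathcal{G}_0$, by Proposition~\ref{prop:emptyADMG}.
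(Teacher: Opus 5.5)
Your proof is correct and is essentially the paper's argument. The paper bounds the component by the corresponding component of $\mathcal{D}[\mathcal{G}_0]$ via Proposition~\ref{prop:emptyADMG}, that component being exactly the $3^{|\mathcal{V}_\mathcal{G}\setminus\mathbf{Y}|}$ expressions with outcome set $\mathbf{Y}$, and it takes the lower bound from the fact that every component is nonempty; your alternative in Step~1 is this same route. Your primary invariance argument is slightly more economical: the upper bound only needs the containment in the set of expressions with outcome $\mathbf{Y}$, not the subgraph inclusion or the connectivity of that set in $\mathcal{D}[\mathcal{G}_0]$.
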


Although this bound does not depend on $\X$ or $\W$, it is tight in the worst case, as equality is attained for the empty causal graph. Hence, the connected component $\mathcal{A}^{\mathcal{G}}_{{P}(\mathbf{y}\mid \Do(\mathbf{x}),\mathbf{w})}$ can grow exponentially with the number of variables outside $\Y$, and may therefore contain exponentially many derivation paths. This highlights the need for structured representations and efficient navigation strategies within the derivation graph. 
Similarly, in the particular case of identification, the number of distinct identification formulae may itself grow exponentially, even before considering additional probabilistic rewritings outside the derivation graph.

\section{Commutativity of Do‑Calculus Rules and Simplification of Causal Reasoning}
\label{sec:commutativity}

In this section, we study how the do-calculus rules interact using derivation graphs, leading to a simple procedure that requires at most four applications of do-calculus rules to decide whether two expressions are equivalent. We first show that Rule~$\mathcal{R}1$ is redundant, as it can be systematically derived from Rules~$\mathcal{R}2$ and~$\mathcal{R}3$. We then focus on these two rules and analyze how their applications may commute.
To distinguish the type of rule application, we use the symbols $\uparrow$ and $\downarrow$: $\Ru_1(\Z)$ (resp. $\Ru_2(\Z)$ and $\Ru_3(\Z)$) denotes the insertion of an observation (resp. action) $\Z$, while $\Rd_1(\Z)$ (resp. $\Rd_2(\Z)$ and $\Rd_3(\Z)$) denotes the deletion of an observation (resp. action) $\Z$.

Throughout this section, we use the Napkin graph introduced in Figure~\ref{fig:derivation_napkin} \citep{pearl2018bookofwhy, guo2025causalinferencenapkingraph} as a running example. Its associated derivation graph is also shown in Figure~\ref{fig:derivation_napkin}. An additional example supporting our analysis is provided in Appendix~\ref{app:examples}.

\begin{figure}[t]
    \centering
\begin{tikzpicture}[baseline=-0.5ex, node distance=10mm,
      node/.style = {inner sep=.4pt, minimum size=9mm,},
      >={Stealth[length=6pt]}, every edge/.style={->, semithick}]
      
      \node[node] (W) {$W$};
      \node[node] (Z) [right=of W] {$Z$};
      \node[node] (X) [right=of Z] {$X$};
      \node[node] (Y) [right=of X] {$Y$};
      
      \draw[->] (W) -- (Z);
      \draw[->] (Z) -- (X);
      \draw[->] (X) -- (Y);
      
      \draw[dashed,<->,bend left=17] (W) to (X);
      \draw[dashed,<->,bend left=25] (W) to (Y);

    \end{tikzpicture}
    
\centering
\begin{tikzpicture}[
    dofree/.style = {
      draw=coolorange!60!black,
      fill=coolorange!80,
      line width=0.6pt,
      inner sep=1pt,
      font=\scriptsize
    },
    withdo/.style = {
      draw=blue!60!black,
      fill=blue!25,
      line width=0.6pt,
      inner sep=1pt,
      font=\scriptsize
    },
    styleR1/.style = {thick, -, gray, line width=1.2pt},
    styleR2/.style = {thick, -, orange!60!brown, dashed, line width=1.2pt},
    styleR3/.style = {thick, -, blue!55, dotted, line width=1.5pt},
    scale=3
]
    \node[withdo] (n0) at (1.55, -.42) {$P(y \mid \Do(x))$};
    \node[withdo] (n1) at (1.85, -.12) {$P(y \mid \Do(x,z))$};
    \node[withdo] (n2) at (.6, -.3) {$P(y \mid \Do(w,x))$};
    \node[withdo] (n3) at (0, 0) {$P(y \mid \Do(w,x),z)$};
    \node[withdo] (n4) at (0.6, 0.7) {$P(y \mid \Do(w),x)$};
    \node[withdo] (n5) at (1, 0) {$P(y \mid \Do(w,x,z))$};
    \node[withdo] (n6) at (1, 1) {$P(y \mid \Do(w,z),x)$};
    \node[withdo] (n7) at (0, 1) {$P(y \mid \Do(w),x,z)$};
    \node[withdo] (n8) at (1.85, .88) {$P(y \mid \Do(z),x)$};

\begin{pgfonlayer}{background}
    \draw[styleR3] (n0) -- (n1);
    \draw[styleR3] (n0) -- (n2);
    \draw[styleR3] (n1) -- (n5);
    \draw[styleR2] (n1) -- (n8);
    \draw[styleR1] (n2) -- (n3);
    \draw[styleR2] (n2) -- (n4);
    \draw[styleR3] (n2) -- (n5);
    \draw[styleR2, line width = .9pt] (n3) -- (n5);
    \draw[styleR2] (n3) -- (n7);
    \draw[styleR3] (n4) -- (n6);
    \draw[styleR1] (n4) -- (n7);
    \draw[styleR2] (n5) -- (n6);
    \draw[styleR2] (n6) -- (n7);
    \draw[styleR3] (n6) -- (n8);
\end{pgfonlayer}
\end{tikzpicture}
\caption{\textbf{The Napkin Graph} (top) and its derivation graph restricted to the connected component of $P(y \mid \Do(x))$ (bottom). In the derivation graph, blue nodes denote expressions involving interventions. 
Edges represent atomic applications of the do-calculus rules (one variable at a time): gray solid lines for Rule $\mathcal{R}1$, orange dashed lines for Rule $\mathcal{R}2$, and blue dotted lines for Rule $\mathcal{R}3$.}
\label{fig:derivation_napkin}
\end{figure}

\subsection{Redundancy of $\mathcal{R}_1$}

Following Lemma~4 in \citep{huang_pearls_2006}, Rule~$\mathcal{R}_1$ is not primitive: its effect can be reproduced by a sequence of applications of Rules~$\mathcal{R}_2$ and~$\mathcal{R}_3$.  This gives one direction. Theorem~\ref{thm:r1_as_r23} establishes the converse.

\begin{restatable}[]{theorem}{thRun}
    \label{thm:r1_as_r23}
    Let $\G=(\V_\G,\E_\G)$ be an ADMG, and let $\Y,\X,\W,\Z$ be pairwise disjoint subsets of~$\V_\G$, with $\Y \neq \emptyset$. Consider an expression of the form $P(\mathbf{y} \mid \Do(\mathbf{x}), \mathbf{w})$. Then the following statements hold:
    \begin{itemize}
        \item $\mathcal{G} \models \Ru_1(\Z)
        \;\Leftrightarrow\;
        \mathcal{G} \models \Ru_3(\Z)\,\Rd_2(\Z)$,
        \item $\mathcal{G} \models \Rd_1(\Z)
        \;\Leftrightarrow\;
        \mathcal{G} \models \Ru_2(\Z)\,\Rd_3(\Z)$.
    \end{itemize}
\end{restatable}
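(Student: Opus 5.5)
The plan is to unfold every side of both equivalences into its graphical condition, notice that all of them compare the same three separation statements, and then prove a single purely graphical lemma. For $\Ru_1(\Z)$ applied to $P(\mathbf y\mid\Do(\mathbf x),\mathbf w)$, the condition is
\[
\text{(c)}\quad \Y \ind_{\G_{\overline{\X}}} \Z \mid \X,\W .
\]
For $\Ru_3(\Z)\,\Rd_2(\Z)$, the intermediate expression is $P(\mathbf y\mid\Do(\mathbf x,\mathbf z),\mathbf w)$. The step $\Ru_3(\Z)$ requires
\[
\text{(a)}\quad \Y \ind_{\G_{\overline{\X}\,\overline{\Z(\W)}}} \Z \mid \X,\W ,
\]
and the step $\Rd_2(\Z)$, which produces $P(\mathbf y\mid\Do(\mathbf x),\mathbf z,\mathbf w)$, requires
\[
\text{(b)}\quad \Y \ind_{\G_{\overline{\X}\,\underline{\Z}}} \Z \mid \X,\W .
\]
The second item gives the same three conditions. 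There $\Rd_1(\Z)$ needs (c), $\Ru_2(\Z)$ from $P(\mathbf y\mid\Do(\mathbf x),\mathbf z,\mathbf w)$ needs (b), and $\Rd_3(\Z)$ from $P(\mathbf y\mid\Do(\mathbf x,\mathbf z),\mathbf w)$ needs (a). Both items therefore reduce to showing that (c) holds if and only if (a) and (b) both hold.

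For the direction (c) $\Rightarrow$ (a),(b), I will use monotonicity of d-separation under edge deletion. Both mutilated graphs are subgraphs of $\G_{\overline{\X}}$ on the same vertex set. Every path in a subgraph is a path in $\G_{\overline{\X}}$, non-colliders keep their status, and descendant sets can only shrink, so colliders can only lose activity. Hence an active path in the subgraph is also active in $\G_{\overline{\X}}$, and separation transfers down to the subgraphs.

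For the converse, I will argue by contraposition. Suppose some path in $\G_{\overline{\X}}$ between $\Y$ and $\Z$ is active given $\X,\W$. First I truncate it at the first $\Z$-vertex met when walking from $\Y$. The truncated path is still active, because its interior vertices are unchanged and $\Z$ is not in the conditioning set. So I may assume the path $\pi$ meets $\Z$ only at its endpoint $Z'$. Among all such paths I take one with the fewest colliders, and then split on the edge of $\pi$ at $Z'$.

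If that edge is a tail out of $Z'$, I show $\pi$ is active in $\G_{\overline{\X}\,\overline{\Z(\W)}}$, contradicting (a).
\begin{itemize}
\item No edge of $\pi$ has an arrowhead into a $\Z$-vertex, so no edge of $\pi$ is removed.
\item For each collider, a directed path in $\G_{\overline{\X}}$ to $\X\cup\W$ cannot pass through a vertex of $\Z(\W)$, since such vertices are not ancestors of $\W$. In $\G_{\overline{\X}}$ a vertex of $\X$ is a descendant of a collider only if it is the collider itself. Hence the edges witnessing collider activity also survive.
\end{itemize}

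If instead the edge at $Z'$ has an arrowhead into $Z'$, then $\pi$ contains no tail out of any $\Z$-vertex and survives in $\G_{\overline{\X}\,\underline{\Z}}$. Its non-colliders behave exactly as before. The only danger is a collider $C$ all of whose directed paths to $\W$ in $\G_{\overline{\X}}$ pass through some $\Z$-vertex. In that case I take such a directed path, stop it at its first $\Z$-vertex $Z''$, and splice it after the $\Y$–$C$ segment of $\pi$. The new path runs from $\Y$ to $Z''$, and its interior avoids $\Z$. Its directed tail contains no conditioned vertex, since it has no $\W$-vertices before $Z''$ and $\X$ has no parents in $\G_{\overline{\X}}$. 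After shortcutting any repeated vertices, it is an active path in $\G_{\overline{\X}}$ with $C$ now a non-collider, so it has strictly fewer colliders. This contradicts minimality. Every collider of $\pi$ therefore keeps a witness avoiding $\Z$, $\pi$ is active in $\G_{\overline{\X}\,\underline{\Z}}$, and (b) fails.

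I expect the main obstacle to be the bookkeeping in this last case. The spliced walk may revisit vertices of the $\Y$–$C$ segment, and reducing it to a path must not create new colliders or leave colliders inactive. I would handle this with the standard trick of cutting at the first vertex where the directed segment meets the earlier segment, and then checking activity at that junction.
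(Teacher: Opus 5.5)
Your reduction of both items to ``(c) iff (a) and (b)'' and your proof of (c) $\Rightarrow$ (a),(b) by monotonicity of d-separation under edge deletion are what the paper does. For the converse, however, you take a genuinely different route, and your argument is correct.

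The paper gives no path argument for the converse. It notes that $\Ru_2(\Z)\,\Rd_3(\Z)$ is a valid derivation from $P(\y\mid\Do(\x),\mathbf{z},\w)$ to $P(\y\mid\Do(\x),\w)$, and that $\Rd_1(\Z)$ syntactically links the same two expressions. It then invokes its shortcut-validity result, Theorem~\ref{th:cycle_do_calculus}. That theorem is semantic: if the shortcut's graphical condition failed, one builds a linear-Gaussian SCM along a d-connecting path and its collider-to-$\W$ paths in which the two expressions differ. This contradicts soundness of the do-calculus.

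Your minimal-collider argument is purely combinatorial and self-contained. It needs no model construction, no nonvanishing-covariance computation, and no appeal to soundness. It is also more informative, because the mark at the $\Z$-endpoint tells you which condition fails. A tail out of $Z'$ violates the $\mathcal{R}_3$ condition, while an arrowhead into $Z'$ violates the $\mathcal{R}_2$ condition.

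What the paper's route buys is reuse. Theorem~\ref{th:cycle_do_calculus} is a single closure principle: a valid multi-step derivation forces the one-step shortcut. The same principle drives the paper's $\mathcal{R}_3$ and mixed $\mathcal{R}_2/\mathcal{R}_3$ commutation proofs, whereas your argument is tailored to this triangle.

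Two details in your splice should be tightened.
\begin{itemize}
\item Truncate the witness directed path from $C$ at its first $\W$-vertex before stopping at $Z''$. Only then does the segment up to $Z''$ really avoid $\W$.
\item Choose the junction as the vertex of $\pi_{[Y,C]}$ closest to $Y$ that lies on the directed segment, or as the last vertex of the segment lying on $\pi_{[Y,C]}$. Cutting at the first hit while walking the segment from $C$ can still leave a repeated vertex unless you iterate.
\end{itemize}
With either choice, the junction is left by a tail along the directed segment, so it is a non-collider, and it lies outside $\X\cup\W$. The resulting path is then active with strictly fewer colliders, as you claim.
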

The two statements are dual, corresponding to applying the rule in either direction. In the remainder of the paper, we state only one of them, the other following by symmetry.

This result admits a geometric interpretation in the derivation graph. Each application of Rule~$\mathcal{R}1$ (gray solid edge in Fig.~\ref{fig:derivation_napkin}) can be viewed as a shortcut for a two-step derivation through an intermediate node, consisting of Rule~$\mathcal{R}2$ (orange dashed edge) followed by Rule~$\mathcal{R}3$ (blue dotted edge). These three edges form a triangle in the derivation graph.

\begin{example}
    Consider the causal graph in Figure~\ref{fig:derivation_napkin}, for which the following derivations hold:
    \begin{align*}
        P(y \mid \Do(w),x,z) &= P(y \mid \Do(w),x) && \text{by }\Rd_1(Z).
    \end{align*}
    Similarly, the following derivations hold:
    \begin{align*}
        P(y \mid \Do(w),x,z)  &= P(y \mid \Do(w,z),x) && \text{by }\Ru_2(Z)\\
        &= P(y \mid \Do(w),x) && \text{by }\Rd_3(Z).
    \end{align*}
This can be read in its derivation graph: there is a triangle between $P(y \mid \Do(w),x)$, $P(y \mid \Do(w,z),x)$, $P(y \mid \Do(w),x,z)$. The same analysis holds for all gray edges. 

\end{example}

\subsection{Internal Commutativity of $\mathcal{R}_2$}

Rule~$\mathcal{R}_2$ governs the exchange between an observation and an intervention. In this section, we show that applications of this rule commute, meaning that the order in which observations are converted into interventions (or vice versa) does not affect the validity of the derivation.

\begin{restatable}{theorem}{thRtwocommute}
\label{th:r2_commute}
Let $\G=(\V_\G, \E_\G)$ be an ADMG and let $\Y,\X,\W,\Z,\widetilde\Z$ be pairwise disjoint subsets of~$\V_\G$, with $\Y \neq \emptyset$. Consider an expression of the form $P(\mathbf{y} \mid \Do(\mathbf{x}), \mathbf{w})$. The following properties hold:
\begin{itemize}
    \item 
        $\mathcal{G} \models \Ru_2(\Z)\,\Ru_2(\widetilde\Z)        \;\Leftrightarrow\;
\mathcal{G} \models \Ru_2(\Z,\widetilde\Z) \\       \;\Leftrightarrow\;
\mathcal{G} \models \Ru_2(\widetilde\Z)\,\Ru_2(\Z)$.
    \item 
     $\mathcal{G} \models \Rd_2(\Z)\,\Ru_2(\widetilde\Z)        \;\Leftrightarrow\;
\mathcal{G} \models \Ru_2(\widetilde\Z)\,\Rd_2(\Z)$.
\end{itemize}
The dual properties hold as well, obtained by applying the rules in the reverse direction.
\end{restatable}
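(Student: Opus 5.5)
The plan is to translate each side of each equivalence into a d-separation statement and then compare the statements directly, by path surgery in the mutilated graphs. The reading of the arrows is fixed by Theorem~\ref{thm:r1_as_r23}: $\Ru_2(\Z)$ turns the observation $\Z\subseteq\W$ into an action, and $\Rd_2(\Z)$ turns the action $\Z\subseteq\X$ into an observation. Throughout, the conditioning set is the same, namely $\X\cup\W$, both before and after an $\mathcal{R}_2$ step; only the mutilation changes.

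\textbf{First bullet.} Start from $P(\mathbf{y}\mid\Do(\mathbf{x}),\mathbf{w})$ with $\Z,\widetilde\Z\subseteq\W$. The three statements to compare are:
\begin{itemize}
\item joint: $\mathcal{G}\models\Ru_2(\Z,\widetilde\Z)$ means $\Y\ind\Z\cup\widetilde\Z\mid\X,\W$ in $\G_{\overline{\X}\underline{\Z\widetilde\Z}}$;
\item first step of $\Ru_2(\Z)\,\Ru_2(\widetilde\Z)$: $\Y\ind\Z\mid\X,\W$ in $\G_{\overline{\X}\underline{\Z}}$;
\item second step of $\Ru_2(\Z)\,\Ru_2(\widetilde\Z)$: $\Y\ind\widetilde\Z\mid\X,\W$ in $\G_{\overline{\X}\,\overline{\Z}\,\underline{\widetilde\Z}}$.
\end{itemize}
By symmetry in $\Z,\widetilde\Z$, it suffices to show that the joint condition is equivalent to the conjunction of the two sequential conditions.

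\textbf{Joint $\Rightarrow$ sequential.}
\begin{itemize}
\item For the first step, take an active path from $\Y$ to $\Z$ in $\G_{\overline{\X}\underline{\Z}}$. Any interior $\widetilde\Z$-vertex is conditioned on, so it must be a collider and no tail out of $\widetilde\Z$ is used. The path therefore survives in $\G_{\overline{\X}\underline{\Z\widetilde\Z}}$.
\item Collider activation also transfers: a directed chain witnessing it can be truncated at its first conditioned vertex, so it never leaves a $\widetilde\Z$ node. We get an active path in the joint graph, a contradiction.
\item For the second step, take an active path from $\Y$ to $\widetilde\Z$ in $\G_{\overline{\X}\,\overline{\Z}\,\underline{\widetilde\Z}}$. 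Interior $\Z$-vertices are conditioned but have no incoming edges, so they cannot be colliders and hence do not occur.
\item Truncated descendant chains likewise cannot enter $\Z$. So the path and its witnesses live in $\G_{\overline{\X}\underline{\Z\widetilde\Z}}$, again a contradiction.
\end{itemize}

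\textbf{Sequential $\Rightarrow$ joint.} Take a shortest active path in $\G_{\overline{\X}\underline{\Z\widetilde\Z}}$ from $\Y$ to $\Z\cup\widetilde\Z$, so that its interior avoids $\Z\cup\widetilde\Z$.
\begin{itemize}
\item If it ends in $\Z$, it is also active in the supergraph $\G_{\overline{\X}\underline{\Z}}$, since descendant sets only grow. This contradicts the first step.
\item If it ends in $\widetilde\Z$, its edges all exist in $\G_{\overline{\X}\,\overline{\Z}\,\underline{\widetilde\Z}}$, because no $\Z$ vertex lies on it.
\end{itemize}
This last case is the main obstacle. A collider $C$ on the path may be activated only through a directed chain ending in $\Z$, whose last edge is deleted in $\overline{\Z}$. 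There I would splice: follow the path from $\Y$ to the first such collider $C$, then go down the chain $C\to\cdots\to Z$. Choosing $C$ and the chain minimally should give an active path from $\Y$ to $\Z$ in $\G_{\overline{\X}\underline{\Z}}$, contradicting the first step. The care needed is to ensure that the appended chain contains no conditioned non-colliders before $Z$ (take the first conditioned descendant) and that the prefix stays active. Otherwise every collider is witnessed inside $\G_{\overline{\X}\,\overline{\Z}\,\underline{\widetilde\Z}}$ and the path contradicts the second step.

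\textbf{Second bullet.} Here $\Z\subseteq\X$ and $\widetilde\Z\subseteq\W$. I will write both orders as explicit pairs of d-separations, with $\X'=\X\setminus\Z$:
\begin{itemize}
\item $\Rd_2(\Z)\,\Ru_2(\widetilde\Z)$ requires $\Y\ind\Z\mid\X',\W$ in $\G_{\overline{\X'}\underline{\Z}}$, followed by $\Y\ind\widetilde\Z\mid\X,\W$ in $\G_{\overline{\X'}\underline{\widetilde\Z}}$.
\item $\Ru_2(\widetilde\Z)\,\Rd_2(\Z)$ requires $\Y\ind\widetilde\Z\mid\X,\W$ in $\G_{\overline{\X}\underline{\widetilde\Z}}$, followed by $\Y\ind\Z\mid\X',\W$ in $\G_{\overline{\X'}\underline{\Z\widetilde\Z}}$.
\end{itemize}
The same two path arguments show that each pair is equivalent to the single joint statement $\Y\ind\Z\cup\widetilde\Z\mid\X',\W$ in $\G_{\overline{\X'}\underline{\Z\widetilde\Z}}$. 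This uses that the conditioning set is always $\X\cup\W$, and that an incoming edge into a $\Z$-vertex is irrelevant when that vertex is a conditioned collider reached only via the splicing argument above.

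The dual statements follow by reading every equality in reverse, since each rule's graphical condition does not depend on the direction of rewriting.
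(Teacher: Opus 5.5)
Your first bullet is sound and uses the same hub as the paper: both sequential orders are shown equivalent to the joint condition. The gap is in the second bullet.

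After $\Ru_2(\widetilde\Z)$ the expression is $P(\mathbf y\mid\Do(\mathbf x,\widetilde{\mathbf z}),\mathbf w\setminus\widetilde{\mathbf z})$. So $\widetilde\Z$ is now intervened on, and the subsequent $\Rd_2(\Z)$ must be tested in $\G_{\overline{\X'\widetilde\Z}\,\underline{\Z}}$, not in $\G_{\overline{\X'}\,\underline{\Z\widetilde\Z}}$. You cut the arrows out of $\widetilde\Z$ instead of the arrows into it. This changes the condition: with $\widetilde\Z$ conditioned on, a path $Y\to\widetilde Z\leftarrow U\to Z$ is open in your graph but absent from the correct one. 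Your graph is exactly the joint graph, which is why ``the same two path arguments'' seem to apply. With the correct graph, neither $\G_{\overline{\X}\,\underline{\widetilde\Z}}$ nor $\G_{\overline{\X'\widetilde\Z}\,\underline{\Z}}$ contains $\G_{\overline{\X'}\,\underline{\Z\widetilde\Z}}$. The inclusion that drives your first-bullet argument (``active in the supergraph'', ``the prefix stays active'') is therefore unavailable, and the implication from the two conditions of $\Ru_2(\widetilde\Z)\,\Rd_2(\Z)$ to the joint condition is left unproved.

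The hub route is still viable and easy to complete.
\begin{itemize}
\item State the joint condition with the literal conditioning set $\X'\cup(\W\setminus\widetilde\Z)$. Your habit of also conditioning on the exchanged sets is harmless, by the standard fact that adding part of the target set to the conditioning set does not change d-separation, but you should state it. It is also why your splice sits in ``sequential $\Rightarrow$ joint'' while the paper's sits in ``joint $\Rightarrow$ sequential''.
\item Take a joint-active path meeting $\Z\cup\widetilde\Z$ only at its endpoint. Its collider witnesses, truncated at their first conditioned vertex, never touch $\Z\cup\widetilde\Z$: these vertices have no outgoing edges in the joint graph and are not conditioned on.
\item Hence the path and its witnesses lie in $\G_{\overline{\X'\widetilde\Z}\,\underline{\Z}}$ if the path ends in $\Z$, and in $\G_{\overline{\X}\,\underline{\widetilde\Z}}$ if it ends in $\widetilde\Z$. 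They stay active there, because the extra conditioned vertices are not on the path.
\item So the conditions of $\Ru_2(\widetilde\Z)\,\Rd_2(\Z)$ imply the joint condition without any splicing; for $\Rd_2(\Z)\,\Ru_2(\widetilde\Z)$ this is plain inclusion.
\item Conversely, the joint condition implies all four single-step conditions by your first bullet, applied in both orders at the all-observed expression $P(\mathbf y\mid\Do(\mathbf x'),\mathbf z,\mathbf w)$.
\end{itemize}
This gives the second bullet more directly than the paper. The paper proves it by a separate lemma: one direction by inclusions of mutilated graphs, the other by a first-collider splice showing that failure of a $\Rd_2(\Z)\,\Ru_2(\widetilde\Z)$ condition forces failure of an $\Ru_2(\widetilde\Z)\,\Rd_2(\Z)$ condition.
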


These equivalences show that applications of Rule~$\mathcal{R}_2$ commute: transforming multiple observations into interventions can be performed in any order, yielding equivalent derivations. However, in contrast with the first case, the second item does not admit a simultaneous application of the two transformations; the commutativity holds only at the level of sequential rule applications.

In terms of the derivation graph, successive applications of Rule~$\mathcal{R}_2$ correspond to consecutive orange dashed edges. The {commutativity of these applications} implies that these edges form quadrilateral patterns: two distinct paths connect the same initial and final expressions.

\begin{example}
Consider the causal graph in Figure~\ref{fig:derivation_napkin}, for which the following derivations hold:
\begin{align*}
    P(y\mid \Do(w), x,z) &= P(y\mid \Do(w,x), z) && \text{by }\Ru_2(X)\\  
    &= P(y\mid \Do(w,x,z)) &&\text{by }\Ru_2(Z).
\end{align*}
Similarly, the following derivations hold:
\begin{align*}
      P(y\mid \Do(w), x,z) &= P(y\mid \Do(w,x,z)) && \text{by }\Ru_2(X,Z).
\end{align*}
This can also be observed in its derivation graph: there is a orange dashed quadrilateral formed by $P(y\mid \Do(w), x,z)$, $P(y\mid \Do(w,x), z)$, $ P(y\mid \Do(w,x,z))$ and $ P(y\mid \Do(w,z),x)$.

\end{example}

\subsection{Internal Commutativity of $\mathcal{R}_3$}

Rule~$\mathcal{R}_3$ governs the insertion or deletion of an action. We  examine how this rule composes with itself. In contrast with $\mathcal{R}_2$,  applications of Rule~$\mathcal{R}_3$ are not always commutative.

\begin{restatable}[]{theorem}{thRtrois}
\label{th:r3_commute}
Let $\G = (\V_\G,\E_\G)$ be an ADMG, and let $\Y,\X,\W,\Z,\widetilde\Z$ be pairwise disjoint subsets of~$\V_\G$, with $\Y \neq \emptyset$. Consider an expression of the form $P(\mathbf{y} \mid \Do(\mathbf{x}), \mathbf{w})$. The following properties hold:
    \begin{itemize}
        \item  $\mathcal{G}\models \mathcal{R}_3^{\uparrow}(\Z)\, \mathcal{R}_3^{\uparrow}(\widetilde\Z) \;\Leftrightarrow\; \mathcal{G}\models \mathcal{R}_3^{\uparrow}(\Z,\widetilde\Z) \\
        \;\Leftrightarrow\; \mathcal{G}\models \mathcal{R}_3^{\uparrow}(\widetilde\Z)\, \mathcal{R}_3^{\uparrow}(\Z)$.
        \item $\mathcal{G}\models  \mathcal{R}_3^{\downarrow}(\widetilde\Z)\, \mathcal{R}_3^{\uparrow}(\Z) \Rightarrow  \mathcal{G}\models \mathcal{R}_3^{\uparrow}(\Z)\, \mathcal{R}_3^{\downarrow}(\widetilde\Z) $.
    \end{itemize}
    The dual properties hold as well, obtained by applying the rules in the reverse direction.
\end{restatable}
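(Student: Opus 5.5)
The plan is to work entirely with d-separation statements in mutilated graphs and argue by path surgery. First I will write out the graphical conditions explicitly. $\Ru_3(\Z)$ on $P(\y\mid\Do(\x),\w)$ requires $\Y\ind\Z\mid\X,\W$ in $\G_{\overline{\X}\,\overline{\Z(\W)}}$, where $\Z(\W)$ is the set of $\Z$-nodes that are not ancestors of $\W$ in $\G_{\overline\X}$. The follow-up $\Ru_3(\widetilde\Z)$ on $P(\y\mid\Do(\x,\zu),\w)$ requires $\Y\ind\widetilde\Z\mid\X,\Z,\W$ in $\G_{\overline{\X}\,\overline{\Z}\,\overline{\widetilde\Z(\W)'}}$, where $\widetilde\Z(\W)'$ is computed in $\G_{\overline{\X}\,\overline{\Z}}$. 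The joint rule $\Ru_3(\Z,\widetilde\Z)$ requires $\Y\ind\Z\cup\widetilde\Z\mid\X,\W$ in $\G_{\overline{\X}\,\overline{(\Z\cup\widetilde\Z)(\W)}}$.

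A preliminary lemma will record how ancestral sets change. Any directed path in $\G_{\overline{\X}\,\overline{\Z}}$ is also a directed path in $\G_{\overline\X}$, so $\widetilde\Z(\W)\subseteq\widetilde\Z(\W)'$. The inclusion is strict exactly for $\widetilde\Z$-nodes whose every directed path to $\W$ passes through $\Z$.

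For the first item I will prove $\Ru_3(\Z)\Ru_3(\widetilde\Z)\Leftrightarrow\Ru_3(\Z,\widetilde\Z)$; the other equivalence then follows by symmetry. Each direction is a contradiction argument. I assume an active path in one graph, take one of minimal length from $\Y$ to the first node of $\Z\cup\widetilde\Z$ it meets, and transform it into an active path witnessing failure of the other condition.

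For ($\Leftarrow$), start with an active path from $\Y$ to $\widetilde\Z$ given $\X,\Z,\W$ in the sequential graph. If it passes through a $\Z$-node, truncate it there. Since all edges into $\Z$ are cut, that $\Z$-node is a non-collider with an outgoing edge, so the truncated segment is open given $\X,\W$ in the joint graph. Colliders that were opened only through descendants in $\Z$ must be handled by rerouting to that $\Z$ descendant.

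For ($\Rightarrow$), start with an active path in the joint graph. If it ends in $\Z$, it contradicts the first step, after checking that $\G_{\overline{\X}\,\overline{(\Z\cup\widetilde\Z)(\W)}}$ and $\G_{\overline{\X}\,\overline{\Z(\W)}}$ agree on the relevant edges. If it ends in $\widetilde\Z$ and avoids $\Z$, then adding $\Z$ to the conditioning set can only open additional colliders. Any non-collider in $\Z$ would have been a first hit, which yields the contradiction.

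I expect the main obstacle to be the mismatch between $\widetilde\Z(\W)$ and $\widetilde\Z(\W)'$ identified in the lemma. A node $\widetilde Z$ lying in $\widetilde\Z(\W)'\setminus\widetilde\Z(\W)$ keeps its incoming edges in the joint graph but loses them in the sequential one. A path entering such a node through an arrowhead must therefore be replaced. My first attempt will use the directed path $\widetilde Z\to\cdots\to Z\to\cdots\to W$ that certifies this node: it lets me rewire the path either into a $\Z$-node, contradicting the first step, or into a collider opened by $\W$. This is the standard trick in proofs of completeness of the do-calculus.

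For the second item, I will reduce to the first. Validity of $\Rd_3(\widetilde\Z)$ on $P(\y\mid\Do(\x,\zd),\w)$ is the same graphical condition as $\Ru_3(\widetilde\Z)$ on $P(\y\mid\Do(\x),\w)$. The hypothesis therefore gives that both $\Ru_3(\widetilde\Z)$ and $\Ru_3(\Z)$ are valid from $P(\y\mid\Do(\x),\w)$. The key claim is a union property: these two conditions together imply $\Ru_3(\Z,\widetilde\Z)$. I will prove it by taking a shortest active path in $\G_{\overline{\X}\,\overline{(\Z\cup\widetilde\Z)(\W)}}$ from $\Y$ to $\Z\cup\widetilde\Z$. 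Its interior avoids $\Z\cup\widetilde\Z$, so it lives in whichever single-rule graph matches its endpoint. Here I use that $(\Z\cup\widetilde\Z)(\W)=\Z(\W)\cup\widetilde\Z(\W)$, both computed in $\G_{\overline\X}$, and that edges into the other set do not occur on the path. The only exception is a collider whose opening descendant is in the other set, and that case is absorbed by truncating at the first such node. Given the union property, the first item yields $\Ru_3(\Z)\Ru_3(\widetilde\Z)$. Reading the second step in reverse from $P(\y\mid\Do(\x,\zu,\zd),\w)$ then gives $\Ru_3(\Z)\Rd_3(\widetilde\Z)$ starting from $P(\y\mid\Do(\x,\zd),\w)$, as required. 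The dual statements follow by reversing all arrows in the derivation, since each rule condition is symmetric in direction of application.
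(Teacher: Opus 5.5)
Your treatment of the second item is sound. The union property is in fact immediate: $\G_{\overline{\X}\,\overline{(\Z\cup\widetilde{\Z})(\W)}}$ is a subgraph of both single-insertion graphs, the conditioning set is $\X,\W$ in all three, and colliders can only be opened through ancestors of $\W$. The second-step conditions then follow from the joint one by the easy inclusion. (The paper instead proves this item by direct graph inclusions, without passing through the joint rule.)

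The gaps are in the first item. You attack it purely by path surgery, whereas the paper obtains ``sequential $\Rightarrow$ joint'' from its semantic shortcut theorem (Theorem~A). In ($\Rightarrow$), take a path $\pi$ ending in $\widetilde{\Z}$ and avoiding $\Z$. Your claim that ``adding $\Z$ to the conditioning set can only open additional colliders'' is false there. The second-step graph $\G_{\overline{\X}\,\overline{\Z}\,\overline{\widetilde{\Z}(\W)'}}$ also deletes all edges into $\Z$, including those into $\Z\setminus\Z(\W)$ that the joint graph keeps. A collider of $\pi$ all of whose directed paths to $\W$ pass through $\Z$ is therefore closed, and conditioning on the now parentless $\Z$ cannot reopen it.

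For instance, take $\X=\emptyset$ with $Y\leftrightarrow C\leftarrow\widetilde Z$ and $C\to D\to Z\to W$. Every node except $Y$ is an ancestor of $W$, so the joint graph is $\G$. The shortest active path from $Y$ to $\{Z,\widetilde Z\}$ given $W$ is $Y\leftrightarrow C\leftarrow\widetilde Z$. But once $D\to Z$ is cut, $C$ is no longer an ancestor of $\{Z,W\}$, and the second-step condition actually holds. What fails is the first step, via $Y\leftrightarrow C\to D\to Z$.

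So you must apply to colliders the rerouting you reserved for the endpoint. From such a collider, follow a directed path to its first $\Z$-node $Z_0$. This node is reached before any $\W$-node and is itself an ancestor of $\W$ in $\G_{\overline{\X}}$, so its incoming edge survives in $\G_{\overline{\X}\,\overline{\Z(\W)}}$. Splice this path onto $\pi$ and shortcut at the first repeated vertex; the result exhibits a failure of the first step.

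In ($\Leftarrow$) you only handle a failure of the second step. That case needs none of the truncation or rerouting you describe; your remark about colliders with descendants in $\Z$ belongs to ($\Rightarrow$), as above. The second-step graph is a subgraph of the joint one, and a path active given $\X,\Z,\W$ there can neither pass through $\Z$ nor have a collider with a descendant in $\Z$.

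The case you omit is a failure of the first step. An active path $\pi$ in $\G_{\overline{\X}\,\overline{\Z(\W)}}$ from $\Y$ to $\Z$ may traverse a node $\widetilde Z\in\widetilde{\Z}(\W)$ through an edge with an arrowhead at $\widetilde Z$. Such an edge is absent from the joint graph (e.g.\ $Y\leftarrow\widetilde Z\leftarrow Z$ with $W$ isolated). You need three steps:
\begin{enumerate}
\item $\widetilde Z$ is not a collider, since it is not an ancestor of $\W$.
\item The edge of $\pi$ on the $\Y$-side of $\widetilde Z$ is out of $\widetilde Z$. Otherwise $\widetilde Z$ emits a tail toward the $\Z$-end, and the remainder of $\pi$ is collider-free, hence directed. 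It then ends at a node of $\Z$ that is not an ancestor of $\W$ yet is entered by an arrowhead, which is impossible in $\G_{\overline{\X}\,\overline{\Z(\W)}}$.
\item Hence truncating $\pi$ at its first node in $\widetilde{\Z}(\W)$ gives an active path in the joint graph.
\end{enumerate}
This is exactly the case analysis the paper performs for this direction.
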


The first item shows that insertions of multiple actions commute: simultaneous or sequential insertions lead to equivalent derivations. 
In the derivation graph, this corresponds to the presence of blue dotted quadrilateral-shaped patterns.

\begin{example}
Consider the causal graph in Figure~\ref{fig:derivation_napkin}, for which the following derivations hold:
\begin{align*}
    P(y \mid \Do(x,z)) &= P(y \mid \Do(x))&& \text{by }\Rd_3(Z)\\
    &= P(y \mid \Do(w,x)) &&\text{by }\Ru_3(W).
\end{align*}
Similarly, the following derivations hold:
\begin{align*}
      P(y \mid \Do(x,z)) &= P(y \mid \Do(w,x,z)) && \text{by }\Ru_3(W)\\
      &= P(y \mid \Do(w,x)) &&\text{by }\Rd_3(Z).
\end{align*}
This is also observed in its derivation graph: there is a blue-dotted quadrilateral formed by $P(y \mid \Do(x,z))$, $P(y \mid \Do(x))$, $P(y \mid \Do(w,x))$, $P(y \mid \Do(w,x,z))$.
\end{example}

The second item establishes only a one-way implication, as the converse does not hold. Example~\ref{ex:R3_commute_pas} shows that insertion and deletion of actions do not commute in general. In the derivation graph, unlike in the case of Rule~$\mathcal{R}_2$, not every pair of blue dotted edges forms a valid quadrilateral, and only one of the two possible diagonals corresponds to a valid derivation. This asymmetry reflects the non-commutativity of action insertion and deletion. It also contributes to the increased complexity of do-calculus reasoning.

\begin{example}
\label{ex:R3_commute_pas}
    Let us consider the graph in Figure \ref{fig:derivation_napkin}. One has the following identities:
    \begin{align*}
        P(y \mid \Do(w),x) &= P(y \mid \Do(w,z),x) && \text{by }\Ru_3(Z)\\
        &=P(y \mid \Do(z),x) && \text{by }\Rd_3(W).
    \end{align*}
    However, the following identities \emph{does not hold}:
    \begin{align*}
        P(y \mid \Do(w),x) &= P(y \mid x) && \text{by }\Rd_3(W)\\
        P(y \mid \Do(z),x) &= P(y \mid x) && \text{by }\Rd_3(Z).
    \end{align*}


\begin{minipage}{0.42\linewidth}
\resizebox{\linewidth}{!}{%
\begin{tikzpicture}[
    dofree/.style = {
      draw=coolorange!60!black,
      fill=coolorange!80,
      line width=0.6pt,
      inner sep=1pt,
      font=\footnotesize
    },
    withdo/.style = {
      draw=blue!60!black,
      fill=blue!25,
      line width=0.6pt,
      inner sep=1pt,
      font=\footnotesize
    },
    styleR1/.style = {thick, -, gray, line width=1.2pt},
    styleR2/.style = {thick, -, orange!60!brown, dashed, line width=1.2pt},
    styleR3/.style = {thick, -, blue!55, dotted, line width=1.5pt},
    warning/.style = {thick, -, red!70!white},
    scale=3.4
]
    \node[withdo] (n0) at (1.55, -.42) {$P(y \mid \Do(x))$};
    \node[withdo] (n1) at (1.85, -.12) {$P(y \mid \Do(x,z))$};
    \node[withdo] (n2) at (.6, -.3) {$P(y \mid \Do(w,x))$};
    \node[withdo] (n4) at (0.6, 0.7) {$P(y \mid \Do(w),x)$};
    \node[withdo] (n5) at (1, 0) {$P(y \mid \Do(w,x,z))$};
    \node[withdo] (n6) at (1, 1) {$P(y \mid \Do(w,z),x)$};
    \node[withdo] (n8) at (1.85, .88) {$P(y \mid \Do(z),x)$};
    \node[draw, fill = red!30, line width=0.6pt,
      inner sep=1pt,font=\footnotesize] (n9) at (1.55, .58) {$P(y \mid x)$};

\begin{pgfonlayer}{background}
    \draw[styleR3] (n0) -- (n1);
    \draw[styleR3] (n0) -- (n2);
    \draw[styleR3] (n1) -- (n5);
    \draw[styleR2] (n1) -- (n8);
    \draw[styleR2] (n2) -- (n4);
    \draw[styleR3] (n2) -- (n5);
    \draw[styleR3] (n4) -- (n6);
    \draw[styleR2] (n5) -- (n6);
    \draw[styleR3] (n6) -- (n8);
    \draw[warning] (n9) -- (n8) node[pos=0.5, red, font=\Large] {$\times$};
    \draw[warning] (n9) -- (n4) node[pos=0.5, red, font=\Large] {$\times$};
\end{pgfonlayer}
\end{tikzpicture}
}
\end{minipage}
\hfill
\begin{minipage}{0.57\linewidth}
This can also be seen in the derivation graph: there is no quadrilateral formed by 
$P(y \mid \Do(w),x)$, $P(y \mid \Do(w,z),x)$, $P(y \mid \Do(z),x)$, because
$P(y \mid x)$ does not belong to the same connected component.
\end{minipage}
\end{example}

\subsection{Weak Commutativity Between $\mathcal{R}_2$ and $\mathcal{R}_3$ }

We now study the interaction between Rules~$\mathcal{R}_2$ and~$\mathcal{R}_3$.
Compositions of Rules~$\mathcal{R}_2$ and~$\mathcal{R}_3$ commute only when the two rules are applied in the same direction, according to the $\uparrow/\downarrow$ convention.

\begin{restatable}[]{theorem}{thRdeuxRtrois}
\label{th:r2R3_commute}
Let $\G = (\V_\G,\E_\G)$ be an ADMG, and let $\Y,\X,\W,\Z,\widetilde\Z$ be pairwise disjoint subsets of~$\V_\G$, with $\Y \neq \emptyset$. Consider an expression of the form $P(\mathbf{y} \mid \Do(\mathbf{x}), \mathbf{w})$. The following properties hold:
    \begin{itemize}
            \item $\mathcal{G}\models \Rd_3(\widetilde\Z) \,  \Rd_2(\Z) \; \Leftrightarrow\; \mathcal{G}\models \Rd_2(\Z) \,  \Rd_3(\widetilde\Z)$.
            \item $\mathcal{G}\models \Rd_3(\widetilde\Z) \,  \Ru_2(\Z) \;\Rightarrow\; \mathcal{G}\models \Ru_2(\Z) \,  \Rd_3(\widetilde\Z)$.
        \end{itemize}
    The dual properties hold as well, obtained by applying the rules in the reverse direction.
\end{restatable}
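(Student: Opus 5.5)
I would prove both items by translating each sequential rule application into its d-separation condition in the appropriate mutilated graph, and then comparing the conditions path by path. Fix the starting expression $P(\mathbf{y}\mid\Do(\mathbf{x}),\mathbf{w})$. For the first item, the left-hand sequence $\Rd_3(\widetilde\Z)\,\Rd_2(\Z)$ starts from $P(\mathbf{y}\mid\Do(\mathbf{x},\widetilde{\mathbf z},\mathbf{z}),\mathbf{w})$, removes $\widetilde\Z$ and then turns $\Do(\mathbf z)$ into an observation. Concretely:
\begin{itemize}
\item $\Rd_3(\widetilde\Z)$ requires (a) $\Y\ind \widetilde\Z\mid \X,\Z,\W$ in $\G_{\overline{\X\Z}\,\overline{\widetilde\Z(\W)}}$;
\item $\Rd_2(\Z)$ then requires (b) $\Y\ind\Z\mid\X,\W$ in $\G_{\overline{\X}\underline{\Z}}$.
\end{itemize}
The right-hand sequence $\Rd_2(\Z)\,\Rd_3(\widetilde\Z)$ requires:
\begin{itemize}
\item (c) $\Y\ind\Z\mid\X,\widetilde\Z,\W$ in $\G_{\overline{\X\widetilde\Z}\underline{\Z}}$;
\item (d) $\Y\ind\widetilde\Z\mid\X,\Z,\W$ in $\G_{\overline{\X}\,\overline{\widetilde\Z(\Z\W)}}$.
\end{itemize}
Here ancestry for $\widetilde\Z(\cdot)$ is taken in the graph with $\X$ (resp.\ $\X\Z$) barred. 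I will show (a)$\wedge$(b)$\Rightarrow$(c)$\wedge$(d) and conversely. The second item, and the duals, are handled the same way, since each rule is an equality read in either direction and so its validity condition is symmetric in direction.

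The main tool is a contrapositive path surgery argument, in the style I expect the paper used for Theorems~\ref{th:r2_commute} and~\ref{th:r3_commute}. Suppose, for instance, that (d) fails, and take a shortest active path $\pi$ from $\widetilde\Z$ to $\Y$ given $\X,\Z,\W$ in $\G_{\overline{\X}\,\overline{\widetilde\Z(\Z\W)}}$. Either $\pi$ survives in the graph of (a), contradicting (a), or $\pi$ uses an edge into some $Z\in\Z$, or it meets a $\widetilde Z$ that is an ancestor of $\Z$ but not of $\W$. In the first case the subpath starting from the last $\Z$-node is a path out of $\Z$ or into $\Z$; a case split on the edge orientation at that node produces an active path in $\G_{\overline{\X}\underline{\Z}}$ contradicting (b). In the second case I follow the directed path from $\widetilde Z$ to $\Z$ and splice it onto $\pi$ to obtain an active path from $\Z$ to $\Y$, again contradicting (b). Condition (c) is obtained from (a) and (b) similarly. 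Conditioning additionally on $\widetilde\Z$ can only open colliders at ancestors of $\widetilde\Z$. Any such opened path is rerouted through $\widetilde\Z$, and because edges into $\widetilde\Z$ are cut in (c), the rerouted path has the shape $\widetilde Z\to\cdots$, which gives a violation of (a) or (b).

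For the one-directional second item, $\Rd_3(\widetilde\Z)\,\Ru_2(\Z)$ is applied to $P(\mathbf y\mid\Do(\mathbf x,\widetilde{\mathbf z}),\mathbf z,\mathbf w)$, and I will verify only the implication stated. The key asymmetry is that in the left sequence $\Z$ is observed when $\widetilde\Z$ is deleted, so $\widetilde\Z(\W\Z)$ already accounts for ancestors of $\Z$. The $\Rd_3$ condition in the right sequence, with $\Z$ intervened, uses $\widetilde\Z(\W)$ in $\G_{\overline{\X\Z}}$. I therefore expect this graph to be a subgraph of the previous one, with the necessary edges removed, after which the $\Ru_2$ step follows from the $\Ru_2$ condition by removing arrows into $\widetilde\Z$.

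I expect the main obstacle to be the bookkeeping of the sets $\widetilde\Z(\cdot)$. Whether ancestry is taken in $\G_{\overline\X}$ or in $\G_{\overline{\X\Z}}$, and whether $\Z$ is counted among the conditioning nodes, changes which incoming edges of $\widetilde\Z$ are cut. The collider-at-ancestor-of-$\W$ case is where the splicing argument can fail. I would first work this case out on the Napkin graph to make sure the direction of the implications is right, and then write the general splice, choosing $\pi$ minimal so that spliced paths remain active.
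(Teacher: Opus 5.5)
Your translation into (a)--(d) is correct, and your argument for the second item is exactly the paper's auxiliary lemma. The graph of (a) is a subgraph of the graph of (d) with the same conditioning set $\X,\Z,\W$, so (d)$\Rightarrow$(a). In $\G_{\overline{\X\widetilde\Z}\,\underline{\Z}}$ no collider can be an ancestor of $\X$ or $\widetilde\Z$, so (b)$\Rightarrow$(c). The same remark gives (c) from (b) alone in the first item, so your ``rerouting through $\widetilde\Z$'' there is vacuous.

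\textbf{The converse is missing.} You write ``and conversely'' for the first item but never argue it. From (c)$\wedge$(d) you get (a) by inclusion, but (b) does not follow from any inclusion. The graph $\G_{\overline{\X}\underline{\Z}}$ keeps the edges into $\widetilde\Z$ and does not condition on $\widetilde\Z$. So a path witnessing $\Y\notind_{\G_{\overline{\X}\underline{\Z}}}\Z\mid\X,\W$ may pass through some $\widetilde Z$, or may contain a collider whose only directed route to $\W$ enters $\widetilde\Z$. Both are blocked in the graph of (c), so they must be converted into violations of (d). Together with (a)$\wedge$(b)$\Rightarrow$(d), this is the entire non-trivial content of the theorem.

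The paper does no path surgery for either hard implication; it derives both from the shortcut theorem (Theorem~\ref{th:cycle_do_calculus}). For the converse, (d)$\Rightarrow$(a) makes
$P(\y\mid\Do(\x,\mathbf z),\w)\to P(\y\mid\Do(\x,\mathbf z,\widetilde{\mathbf z}),\w)\to P(\y\mid\Do(\x,\widetilde{\mathbf z}),\mathbf z,\w)\to P(\y\mid\Do(\x),\mathbf z,\w)$
a valid derivation. Hence the one-step $\mathcal R_2$ shortcut between its endpoints, which is exactly (b), is valid.

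\textbf{A direct proof exists, but you must write it.} It would make your route genuinely different: purely graphical, and independent of the linear-Gaussian construction behind Theorem~\ref{th:cycle_do_calculus}. For the converse, take the (b)-violating path $\pi$ meeting $\Z$ only at its endpoint.
\begin{itemize}
\item If $\pi$ meets $\widetilde\Z$, cut it at the $\widetilde Z$ closest to $\Y$. Either the edge toward $\Y$ leaves $\widetilde Z$, or $\widetilde Z\in\Anc(\Z\cup\W,\G_{\overline{\X}})$: it is then a collider (hence an ancestor of $\W$), or $\pi$ is directed out of it toward $\Z$ until a collider or $\Z$. In both cases the subpath from $\widetilde Z$ to $\Y$ survives in the graph of (d) and is active there given $\X,\Z,\W$.
\item If $\pi$ avoids $\widetilde\Z$ but is blocked in the graph of (c), take the collider closest to $\Y$ that is not an ancestor of $\W$ in $\G_{\overline{\X\widetilde\Z}\,\underline{\Z}}$. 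Its directed path to $\W$ in $\G_{\overline{\X}\underline{\Z}}$ must contain a $\widetilde\Z$-node; truncate it at the first one and splice. That node lies in $\Anc(\W,\G_{\overline{\X}})$, so it keeps its incoming edges in the graph of (d).
\end{itemize}

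\textbf{Your forward case analysis also misses a case.} Every edge of $\pi$ may survive in the graph of (a) while some collider is active in (d) only as an ancestor of $\Z$. That no longer counts once $\Z$ is intervened on. Splicing its directed path up to the first $\Z$-node gives a path ending in an arrow into $\Z$, which contradicts (b). Also, $\Z$-nodes on $\pi$ are conditioned on and hence colliders, so your ``out of $\Z$'' subcase cannot occur.

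\textbf{Walks versus paths.} All these splices produce walks rather than paths. You need the standard fact that a d-connecting walk yields a d-connecting path; choosing $\pi$ shortest does not by itself guarantee this.
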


The equivalence of the first statements shows that deleting an intervention and transforming an intervention into an observation commute: the order in which the two deletions are performed does not affect the validity of the derivation.
In the derivation graph, this corresponds to the presence of 
two-coloured quadrilateral-shaped patterns. 

\begin{example}
Consider the causal graph in Figure~\ref{fig:derivation_napkin}, for which the following derivations hold:
\begin{align*}
    P(y\mid \Do(w),x) &= P(y\mid \Do(w,z),x)&& \text{by }\Ru_3(Z)\\
    &= P(y\mid \Do(w,x,z)) &&\text{by }\Ru_2(X).
\end{align*}
Similarly, the following derivations hold:
\begin{align*}
      P(y\mid \Do(w),x) &= P(y\mid \Do(w,x))&& \text{by }\Ru_2(X)\\
      &= P(y\mid \Do(w,x,z)) &&\text{by }\Ru_3(Z).
\end{align*}
 This is also observed in its derivation graph: there is a two-coloured  quadrilateral formed by $P(y \mid \Do(w),x)$, $P(y \mid \Do(w,z),x)$, $P(y \mid \Do(w,x,z)$ and $P(y\mid \Do(w,x))$. \end{example}
 
In contrast, the second item establishes only a one-way implication. The converse does not hold in general (see Example~\ref{ex:r2R3_commute_pas}), revealing an additional source of asymmetry in do-calculus derivations.
In the derivation graph, this behavior translates into partial commutativity patterns between orange dashed edges (corresponding to Rule~$\mathcal{R}_2$) and blue dotted edges (corresponding to Rule~$\mathcal{R}_3$).

\begin{example}
\label{ex:r2R3_commute_pas}
    Let us consider the graph in Figure \ref{fig:derivation_napkin}. We have the following identities:
    \begin{align*}
        P(y \mid \Do(x)) &= P(y \mid \Do(x,z)) && \text{by }\Ru_3(Z)\\
        &=P(y \mid \Do(z),x) && \text{by }\Rd_2(X).
    \end{align*}
    However, the following identities \emph{does not hold}:
    \begin{align*}
        P(y \mid \Do(x)) &= P(y \mid x) && \text{by }\Rd_2(X)\\
        P(y \mid \Do(z),x) &= P(y \mid x) && \text{by }\Ru_3(Z).
    \end{align*}

\begin{minipage}{0.42\linewidth}
\resizebox{\linewidth}{!}{%
\begin{tikzpicture}[
    dofree/.style = {
      draw=coolorange!60!black,
      fill=coolorange!80,
      line width=0.6pt,
      inner sep=1pt,
      font=\footnotesize
    },
    withdo/.style = {
      draw=blue!60!black,
      fill=blue!25,
      line width=0.6pt,
      inner sep=1pt,
      font=\footnotesize
    },
    styleR1/.style = {thick, -, gray, line width=1.2pt},
    styleR2/.style = {thick, -, orange!60!brown, dashed, line width=1.2pt},
    styleR3/.style = {thick, -, blue!55, dotted, line width=1.5pt},
    warning/.style = {thick, -, red!70!white},
    scale=3.4
]
    \node[withdo] (n0) at (1.55, -.42) {$P(y \mid \Do(x))$};
    \node[withdo] (n1) at (1.85, -.12) {$P(y \mid \Do(x,z))$};
    \node[withdo] (n2) at (.6, -.3) {$P(y \mid \Do(w,x))$};
    \node[withdo] (n4) at (0.6, 0.7) {$P(y \mid \Do(w),x)$};
    \node[withdo] (n5) at (1, 0) {$P(y \mid \Do(w,x,z))$};
    \node[withdo] (n6) at (1, 1) {$P(y \mid \Do(w,z),x)$};
    \node[withdo] (n8) at (1.85, .88) {$P(y \mid \Do(z),x)$};
    \node[draw, fill = red!30, line width=0.6pt,
      inner sep=1pt,font=\footnotesize] (n9) at (1.55, .58) {$P(y \mid x)$};

\begin{pgfonlayer}{background}
    \draw[styleR3] (n0) -- (n1);
    \draw[styleR3] (n0) -- (n2);
    \draw[styleR3] (n1) -- (n5);
    \draw[styleR2] (n1) -- (n8);
    \draw[styleR2] (n2) -- (n4);
    \draw[styleR3] (n2) -- (n5);
    \draw[styleR3] (n4) -- (n6);
    \draw[styleR2] (n5) -- (n6);
    \draw[styleR3] (n6) -- (n8);
    \draw[warning] (n9) -- (n0) node[pos=0.5, red, font=\Large] {$\times$};
    \draw[warning] (n9) -- (n8) node[pos=0.5, red, font=\Large] {$\times$};
\end{pgfonlayer}
\end{tikzpicture}
}
\end{minipage}
\hfill
\begin{minipage}{0.57\linewidth}
This is also observed in the derivation graph: there is no quadrilateral formed by 
 $P(y \mid \Do(x,z))$, $P(y \mid \Do(x))$, $P(y \mid \Do(z),x)$, because
$P(y \mid x)$ does not belong to the same connected component.
\end{minipage}

\end{example}

\subsection{Normal Forms of Do-Calculus Derivations}

We conclude our study of the interactions between the do-calculus rules by showing that any valid derivation can be represented in a canonical sequence\footnote{Rules~$\Ru_2$ and~$\Ru_3$ commute, as do their downward counterparts.
As a result, permuting these blocks yields four equivalent canonical sequences.
We fix one ordering here.}. 

\begin{restatable}[]{theorem}{thformenormal}
\label{th:formenormal}
    Let $\G=(\V_\G,\E_\G)$ be an ADMG, and let $\Y,\X,\W$ and $\Y,\widetilde\X,\widetilde\W$ be pairwise disjoint subsets of~$\V_\G$, with $\Y \neq \emptyset$. Then $P(\mathbf{y}\mid \Do(\mathbf{x}),\mathbf{w}) $ and $ P(\mathbf{y} \mid \Do(\mathbf{\widetilde x}),\mathbf{\widetilde w})$ are equivalent if and only if  the sequential application
    $$\Ru_2(\Z_2^\uparrow)\, \Ru_3(\Z_3^\uparrow)\,
    \Rd_3(\Z_3^\downarrow)\,
    \Rd_2(\Z_2^\downarrow),$$
    with $\Z_2^\uparrow \coloneqq \W \setminus \widetilde \W$, $\Z_3^\uparrow \coloneqq (\widetilde \X \cup \widetilde \W) \setminus (\X \cup  \W)$, $\Z_2^\downarrow \coloneqq \widetilde \W \setminus \W$ and $\Z_3^\downarrow \coloneqq (\X \cup  \W) \setminus (\widetilde \X \cup \widetilde \W)$, 
    is valid in $\G$ {and transforms $ P(\mathbf{y}\mid \Do(\mathbf{x}),\mathbf{w})$ into $ P(\mathbf{y} \mid \Do(\mathbf{\widetilde x}),\mathbf{\widetilde w})$}. 
\end{restatable}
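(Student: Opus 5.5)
The "if" direction is immediate: each step of the canonical sequence is a valid do-calculus rule application, so composing them is a path in $\mathcal{D}[\G]$. First, though, we check that the four sets are well defined and produce the target expression. Starting from $(\X,\W)$, the step $\Ru_2(\Z_2^\uparrow)$ turns every observation not kept in $\widetilde\W$ into an action. The step $\Ru_3(\Z_3^\uparrow)$ then inserts the actions that appear in the target but not in the source. Next, $\Rd_3(\Z_3^\downarrow)$ deletes the actions that appear nowhere in the target. Finally, $\Rd_2(\Z_2^\downarrow)$ turns into observations the actions that must become observations in $\widetilde\W$. A routine set computation, using disjointness of $\Y$ with all the other sets, shows that the result is exactly $P(\mathbf{y}\mid\Do(\widetilde{\mathbf x}),\widetilde{\mathbf w})$.

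The substance is the "only if" direction. Equivalence means there is a path in $\mathcal{D}[\G]$, that is, a finite valid sequence of rule applications, from the source to the target. By Theorem~\ref{thm:r1_as_r23}, every $\Ru_1$ or $\Rd_1$ step can be replaced by a two-step $\mathcal{R}_2/\mathcal{R}_3$ sequence, so we may assume the path uses only $\Ru_2,\Rd_2,\Ru_3,\Rd_3$. We then reduce this word to the normal form by a rewriting argument, using the commutation results as local moves. By Theorem~\ref{th:r2_commute} and the dual of Theorem~\ref{th:r2R3_commute}, we push every $\Ru_2$ to the front: a $\Ru_2$ preceded by $\Rd_2$, $\Rd_3$ or $\Ru_3$ can be swapped forward, since these statements give implications in the needed direction. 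Indeed, $\Rd_3\Ru_2\Rightarrow\Ru_2\Rd_3$ is exactly the one-way implication of Theorem~\ref{th:r2R3_commute}, and $\Ru_3\Ru_2\Leftrightarrow\Ru_2\Ru_3$ is the dual of its first item. Symmetrically, we push every $\Rd_2$ to the end. Between them, we move every $\Ru_3$ before every $\Rd_3$ using the second item of Theorem~\ref{th:r3_commute}, $\Rd_3\Ru_3\Rightarrow\Ru_3\Rd_3$. Within each block, the first items of Theorems~\ref{th:r2_commute} and~\ref{th:r3_commute} merge consecutive applications into a single one on the union of the variable sets.

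The main obstacle is that the path may contain redundant back-and-forth moves. For example, a variable may be inserted as an action and later deleted, or exchanged and exchanged back, so the merged sets need not equal the prescribed $\Z$'s; they may even fail to be disjoint or well defined, such as an $\Ru_3(Z)$ followed later by $\Rd_3(Z)$. I would handle this with cancellation lemmas proved directly from d-separation. If $\Ru_3(A\cup S)\,\Rd_3(B\cup S)$ is valid, I expect $\Ru_3(A)\,\Rd_3(B)$ to be valid as well, since removing incoming edges into extra intervened nodes only removes paths. Analogously, an $\Ru_2(S)$ whose variables are later converted back by $\Rd_2$ can be dropped. Then, using monotonicity properties of the graphical conditions in the mutilated graphs, I would show that one can always shrink to the minimal sets, namely $\Z_2^\uparrow=\W\setminus\widetilde\W$ and the others as given. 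A cleaner alternative is to argue by induction on path length. The claim is that the set of targets reachable by a valid canonical sequence from a fixed source is closed under one further atomic rule application; combining that application with the canonical sequence via the swap lemmas above and the cancellation lemmas yields again a canonical sequence. Verifying this closure, especially the interaction of the $\overline{\Z(\W)}$ clause of $\mathcal{R}_3$ with the changing conditioning set, is where I expect the real work to lie.
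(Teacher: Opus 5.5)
\textbf{Overall.} Your architecture matches the paper's proof: eliminate $\mathcal{R}_1$ via Theorem~\ref{thm:r1_as_r23}, use the one-way implications $\Rd_k\Ru_\ell\Rightarrow\Ru_\ell\Rd_k$ to put all upward steps before all downward ones, then sort and merge within blocks. The gap is the step you single out yourself: removing back-and-forth moves so that the merged sets are exactly $\Z_2^\uparrow,\Z_3^\uparrow,\Z_3^\downarrow,\Z_2^\downarrow$. You leave this open, and the justification you sketch runs in the wrong direction. Intervening on extra nodes $S$ produces a mutilated graph that is a \emph{subgraph} of the one without $S$. So d-separation there is the weaker statement, and it does not by itself give d-separation in the larger graph. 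Your conjectured lemma is true only because $S$ is in the target set, so a path through an edge into $S$ can be cut at $S$. That fact is already packaged in the first item of Theorem~\ref{th:r3_commute}. A separate omission: you never split set-valued applications into singletons. The swap statements require disjoint sets, so for example $\Rd_2(\{A,B\})\,\Ru_2(\{B,C\})$ can be neither swapped nor cancelled as it stands.

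\textbf{The missing idea.} No new graphical lemma or monotonicity argument is needed.
\begin{enumerate}
\item Split every application into singletons using the first items of Theorems~\ref{th:r2_commute} and~\ref{th:r3_commute} and their duals.
\item Each variable is then in one of three states: absent, acted on, or observed, with ``acted on'' in the middle. Rules $\mathcal{R}_2$ and $\mathcal{R}_3$ each move a variable one step along this path. So an adjacent downward-then-upward pair on the same variable can only be $\Rd_2(Z)\Ru_2(Z)$ or $\Rd_3(Z)\Ru_3(Z)$.
\item Such a pair returns to the same expression, so deleting it leaves a valid sequence. Pairs on distinct variables are swapped as you describe.
\item Once all upward steps precede the downward ones, each variable occurs at most once per block.
\item Same-direction swaps are equivalences. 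So you may move a variable's upward step to the end of the upward block and its downward step to the start of the downward block. If they form $\Ru_i(Z)\Rd_i(Z)$, they are now adjacent inverses and can be deleted. This is exactly the paper's argument.
\item What remains moves every variable along a shortest route in the three-state path. This forces the four merged sets to equal the prescribed ones.
\end{enumerate}
In particular, your lemma ``$\Ru_3(A\cup S)\,\Rd_3(B\cup S)$ valid implies $\Ru_3(A)\,\Rd_3(B)$ valid'' follows at once. Split into $\Ru_3(A)\,\Ru_3(S)\,\Rd_3(S)\,\Rd_3(B)$ and delete the middle pair. Neither the d-separation cancellation lemmas nor the induction on path length is needed.
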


In other words, any valid derivation between two interventional distributions can be reordered, without loss of validity, into a canonical sequence: observations are first converted into interventions, then interventions are introduced or eliminated, and finally interventions are converted back into observations. This follows directly from the commutativity properties proved above.

Building on this result, the next corollary gives a graphical criterion to check whether two interventional distributions are equivalent.
\begin{restatable}[]{corollary}{corcriteregraphique}
\label{cor:criteregraphique}
    Let $\G=(\V_\G,\E_\G)$ be an ADMG, and let $\Y, \X, \W$ and $\Y, \widetilde\X, \widetilde\W$ be respectively pairwise disjoint subsets of~$\V_\G$, with $\Y\neq \emptyset$. Let $\Z_2^\uparrow$, $\Z_3^\uparrow$, $\Z_2^\downarrow$ and $\Z_3^\downarrow$ be defined as in Theorem~\ref{th:formenormal}.
    
    Then, the following graphical criterion:
    \begin{align*}
        &\Y \ind_{\G_{\overline{\X}\underline{\Z_2^\uparrow}}} \Z_2^\uparrow \mid \X, \W \cap \widetilde\W \text{ and } 
        \Y \ind_{\G_{\overline{\X,\Z_2^\uparrow}\,\overline{\Z_3^\uparrow(\W \cap \widetilde\W)}}} \Z_3^\uparrow \mid \X, \W \\
        \text{ and } &
        \Y \ind_{\G_{\overline{\widetilde\X}\underline{\Z_2^\downarrow}}} \Z_2^\downarrow \mid \widetilde\X ,\W \cap \widetilde\W
        \text{ and }
        \Y \ind_{\G_{\overline{\widetilde\X,\Z_2^\downarrow}\,\overline{\Z_3^\downarrow(\W \cap \widetilde\W)}}} \Z_3^\downarrow \mid \widetilde\X, \widetilde\W ,
    \end{align*}
    is both sound and complete to determine whether $P(\mathbf{y}\mid \Do(\mathbf{x}),\mathbf{w})$ and $ P(\mathbf{y} \mid \Do(\mathbf{\widetilde x}),\mathbf{\widetilde w})$ are equivalent.
\end{restatable}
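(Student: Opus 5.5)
The plan is to obtain the corollary by unfolding the canonical sequence of Theorem~\ref{th:formenormal} into its four graphical conditions. Theorem~\ref{th:formenormal} says the two expressions are equivalent if and only if $\Ru_2(\Z_2^\uparrow)\,\Ru_3(\Z_3^\uparrow)\,\Rd_3(\Z_3^\downarrow)\,\Rd_2(\Z_2^\downarrow)$ is valid and transforms one expression into the other. It therefore suffices to show two things: this sequence always lands on $P(\mathbf{y}\mid\Do(\widetilde{\mathbf x}),\widetilde{\mathbf w})$, and the validity of each of its four steps is exactly one of the four d-separation statements. I will use throughout that a rule and its reverse share the same graphical condition. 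That condition is always read on the side of the equality where the variable $\Z$ is an observation (for $\mathcal{R}_2$ and $\mathcal{R}_1$) or an action (for $\mathcal{R}_3$), with the remaining interventions and observations of that side.

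\textbf{Step 1: bookkeeping of the intermediate expressions.} Write $\mathbf{C}\coloneqq\W\cap\widetilde\W$, so that $\W=\mathbf{C}\cup\Z_2^\uparrow$ and $\widetilde\W=\mathbf{C}\cup\Z_2^\downarrow$. Starting from $P(\mathbf y\mid\Do(\mathbf x),\mathbf w)$:
\begin{itemize}
\item $\Ru_2(\Z_2^\uparrow)$ turns the observations $\Z_2^\uparrow$ into actions and yields $Q_1=P(\mathbf y\mid\Do(\mathbf x,\mathbf z_2^\uparrow),\mathbf c)$.
\item $\Ru_3(\Z_3^\uparrow)$ yields $Q_2=P(\mathbf y\mid\Do(\mathbf x,\mathbf z_2^\uparrow,\mathbf z_3^\uparrow),\mathbf c)$. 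Its action set is $(\X\cup\W\cup\widetilde\X\cup\widetilde\W)\setminus\mathbf C$.
\item Removing $\Z_3^\downarrow$ from this action set leaves $(\widetilde\X\cup\widetilde\W)\setminus\mathbf C=\widetilde\X\cup\Z_2^\downarrow$, so $\Rd_3(\Z_3^\downarrow)$ yields $Q_3=P(\mathbf y\mid\Do(\widetilde{\mathbf x},\mathbf z_2^\downarrow),\mathbf c)$.
\item $\Rd_2(\Z_2^\downarrow)$ turns the actions $\Z_2^\downarrow$ back into observations and reaches $P(\mathbf y\mid\Do(\widetilde{\mathbf x}),\widetilde{\mathbf w})$.
\end{itemize}
I will check disjointness at each stage, using $\Y$ disjoint from everything and the pairwise disjointness of $\X,\W$ and of $\widetilde\X,\widetilde\W$.

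\textbf{Step 2: read off each condition.}
\begin{itemize}
\item Step $\Ru_2(\Z_2^\uparrow)$ between the start and $Q_1$ is valid iff $\Y\ind\Z_2^\uparrow\mid\X,\mathbf C$ in $\G_{\overline{\X}\underline{\Z_2^\uparrow}}$. This is the first criterion.
\item Step $\Ru_3(\Z_3^\uparrow)$ between $Q_1$ and $Q_2$ has its condition read on $Q_2$, with remaining actions $\X\cup\Z_2^\uparrow$ and observations $\mathbf C$. It is $\Y\ind\Z_3^\uparrow\mid\X,\Z_2^\uparrow,\mathbf C$ in $\G_{\overline{\X,\Z_2^\uparrow}\,\overline{\Z_3^\uparrow(\mathbf C)}}$. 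Since $\X\cup\Z_2^\uparrow\cup\mathbf C=\X\cup\W$, this is the second criterion. Here $\Z_3^\uparrow(\mathbf C)$ is computed in $\G_{\overline{\X,\Z_2^\uparrow}}$, as in the definition of Rule~$\mathcal{R}_3$.
\item Symmetrically, step $\Rd_3(\Z_3^\downarrow)$ between $Q_2$ and $Q_3$ is read on $Q_2$ with remaining actions $\widetilde\X\cup\Z_2^\downarrow$. This gives the fourth criterion after noting $\widetilde\X\cup\Z_2^\downarrow\cup\mathbf C=\widetilde\X\cup\widetilde\W$.
\item Step $\Rd_2(\Z_2^\downarrow)$ between $Q_3$ and the target gives the third criterion.
\end{itemize}
Soundness and completeness then both follow from the ``if and only if'' in Theorem~\ref{th:formenormal}. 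Degenerate cases where some $\Z$ is empty are trivially valid steps, and the corresponding d-separation statement is vacuous.

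\textbf{Main obstacle.} The substantive work is already in Theorem~\ref{th:formenormal}, so the main risk here is bookkeeping. The delicate part is the two $\mathcal{R}_3$ steps: I must make sure the condition is evaluated on the expression containing the larger action set, so that the mutilation $\overline{\X,\Z_2^\uparrow}$ and the ancestral set $\Z_3(\mathbf C)$ refer to the correct graph. I must also verify that the conditioning sets simplify to $\X,\W$ and $\widetilde\X,\widetilde\W$ exactly as stated.
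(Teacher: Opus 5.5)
Your proposal is correct and follows the paper's own argument. The paper's proof says only that the criterion is the conjunction of the graphical conditions of the four steps in the canonical sequence of Theorem~\ref{th:formenormal}. Your bookkeeping makes that one-liner explicit: the intermediate expressions (with $\mathbf C=\W\cap\widetilde\W$), the fact that the sequence always lands on $P(\mathbf y\mid\Do(\widetilde{\mathbf x}),\widetilde{\mathbf w})$, and the reading of each $\mathcal{R}_3$ condition on the expression with the larger action set, which yields exactly the stated mutilations and conditioning sets.
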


The above graphical
criterion amounts to performing four d-separation tests on mutilated
graphs. As all the sets $\Z_2^\uparrow$, $\Z_3^\uparrow$, $\Z_2^\downarrow$ and
$\Z_3^\downarrow$ are explicitly defined, the corresponding
graph mutilations are straightforward to construct,  and since d-separation can be checked in linear time \citep{geiger_identifying}, this criterion is computationally efficient.

Causal reasoning typically relies on successive applications of do-calculus rules. Corollary~\ref{cor:criteregraphique} instead provides a necessary and sufficient graphical criterion that captures the validity of such sequences as a whole, allowing practitioners to identify the minimal assumptions underlying a causal argument, rather than relying on \textit{ad hoc} or overly strong hypotheses.

\begin{example}
Let us consider the graph in Figure \ref{fig:derivation_napkin}. We have:
$$P(y\mid \Do(w), x,z) = P(y\mid \Do(w,x,z)) = P(y \mid do(x))$$
by using $\Ru_2(X,Z) \, \Rd_3(W,Z)$. 
\end{example}

Finally, the canonical structure allows us to bound the distance between expressions in the derivation graph, yielding the following structural result.
\begin{restatable}[]{corollary}{cordiametre}
Let $\G$ be an ADMG over $\V$, and let $\mathcal{D}$ be its derivation graph.
Then, the diameter\footnote{The graph diameter is the length of the longest shortest path between any two vertices in a connected graph.} of each connected component in $\mathcal{D}$ is at most 4.
\end{restatable}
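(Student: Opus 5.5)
The plan is to derive the bound directly from Theorem~\ref{th:formenormal}. Take two expressions $Q=P(\mathbf{y}\mid \Do(\mathbf{x}),\mathbf{w})$ and $\widetilde Q=P(\mathbf{y}\mid \Do(\widetilde{\mathbf{x}}),\widetilde{\mathbf{w}})$ in the same connected component of $\mathcal{D}$. Two preliminary remarks are needed. First, every do-calculus rule leaves the outcome set $\Y$ unchanged, so both expressions share the same $\Y$ and the theorem applies. Second, by definition they are equivalent. Theorem~\ref{th:formenormal} then says that the sequence $\Ru_2(\Z_2^\uparrow)\,\Ru_3(\Z_3^\uparrow)\,\Rd_3(\Z_3^\downarrow)\,\Rd_2(\Z_2^\downarrow)$ is valid in $\G$ and carries $Q$ to $\widetilde Q$.

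The key observation is that each of the four steps is a single application of one do-calculus rule to a \emph{set} of variables. The definition of the derivation graph places an edge between $Q_1$ and $Q_2$ whenever $Q_2$ is obtained from $Q_1$ by one valid rule application, with no restriction to singletons. So each step is one edge of $\mathcal{D}$. A step whose set is empty, say $\Z_3^\uparrow=\emptyset$, leaves the expression unchanged and contributes no edge.

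The four steps therefore trace a walk $Q=Q_0,Q_1,Q_2,Q_3,Q_4=\widetilde Q$ in $\mathcal{D}$. Here $Q_1$ is $Q$ with $\W\setminus\widetilde\W$ moved into the do-set, $Q_2$ adds the actions $\Z_3^\uparrow$, $Q_3$ deletes the actions $\Z_3^\downarrow$, and $Q_4$ moves $\widetilde\W\setminus\W$ back to observations. Consecutive equal expressions are collapsed. This gives a path of length at most $4$ between any two vertices of a component, so the shortest-path distance is at most $4$ and the diameter is at most $4$.

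The main point to check is the one just used: that the edges of $\mathcal{D}$ allow set-valued rule applications. The figures label edges as ``atomic (one variable at a time)'', which could suggest otherwise. If the intended reading were atomic edges only, the bound would fail: a step converting several observations at once would need several edges, so the diameter could grow with $|\V|$. I would therefore rely on the formal definition of the derivation graph, which speaks of ``a single do-calculus rule'' applied to arbitrary subsets, exactly as in the rule statements with set $\Z$. I would also verify briefly that each intermediate $Q_i$ has pairwise disjoint outcome, action and observation sets, so that it is a genuine vertex of $\mathcal{D}$. This follows from the definitions of the $\Z$ sets, since $\Z_3^\uparrow$ avoids $\X\cup\W$ and $\Z_3^\downarrow\subseteq\X\cup\W$.
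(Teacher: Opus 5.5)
Your proof is correct and follows the paper's route: the paper's proof is the one-line observation that Theorem~\ref{th:formenormal} yields a valid sequence of at most four rule applications between any two equivalent expressions. Your extra checks make explicit what the paper leaves implicit. These are that a set-valued rule application is a single edge of $\mathcal{D}$, which the paper confirms when it notes that the figures show only atomic edges; that empty steps can be dropped; and that the intermediate expressions are genuine vertices.
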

Equivalently, any two derivations of the same interventional distribution are connected by a path of length at most four in the derivation graph.
This bound is not directly visible in Figures~\ref{fig:derivation_lattice_AfBfC} and~\ref{fig:derivation_lattice_do_bc}, which show only atomic applications of the do-calculus rules. Moreover, the bound is tight. Indeed, consider an ADMG with no edges among
$Y,W_1,W_2,X_1,$ and $X_2$. Transforming
$P(y \mid \mathrm{do}(x_1), w_1)$ into
$P(y \mid \mathrm{do}(x_2), w_2)$ requires exactly four applications of do-calculus:
two to remove $x_1$ and $w_1$, and two to introduce $x_2$ and $w_2$.
Therefore, the corresponding derivations are at distance four in the derivation graph.

\section{Application: Deriving Several Identification Formulae From an Interventional Query}
\label{sec:app}

We have studied the set of expressions equivalent under do-calculus. Here, we illustrate how this perspective can be used for identification and, more importantly, estimation, by comparing several equivalent formulae for the same causal query. 

\subsection{Estimation in a Simple Case}

We return to the example introduced in Figure~\ref{fig:1} and Example~\ref{ex:1}, and consider the query $\mathbb{E}(Y \mid \Do(x), w)$\footnote{The paper is written with discrete probability, but all results directly translate to continuous random variables, which we consider in this section.}.
From the derivation graph given in Appendix \ref{app:example_estimation:1}, this query admits the following set of equivalent interventional probabilities:
$$\mathcal{A}^{\mathcal{G}}_{ \mathbb{E}(Y\mid \Do(w,z))}
= \{\mathbb{E}(Y \mid \Do(w),z),\mathbb{E}(Y\mid \Do(w,z)), \mathbb{E}(Y\mid \Do(z))\}.$$

We generate data from a linear Gaussian model based on Figure~\ref{fig:1} and assume linearity and Gaussian noise for estimation. Details on the data generation, estimators, and estimation procedure are provided in Appendix~\ref{app:example_estimation:1}.


We first estimate $ \mathbb{E}(Y\mid \Do(z))$: 
\begin{align*}
    \mathbb{E}(Y\mid \Do(z)) 
    & = \int_y y \int_w f(y \mid z,w) f(w) dw dy\\
    &= \gamma + \alpha \mathbb{E}(W) + \beta z,
\end{align*}
where we fit a regression model $\mathbb{E}(Y \mid z,w) = \gamma + \alpha w +\beta z$. {The total effect of Z on Y, defined by $\frac{\partial}{\partial z} \mathbb{E}(Y \mid \Do(z))$, is $\beta$.}

We then estimate $\mathbb{E}(Y\mid \Do(w,z))$:
\begin{align*}
\mathbb{E}(Y\mid \Do(w,z)) 
     =& \int_y y \int_x f(y \mid w,z,x) f(x \mid w,z) f(w) dx dy\\
    =&  \int_x f(x \mid w,z) f(w) (\widetilde\gamma + \widetilde\alpha x + \widetilde\beta w + \widetilde\delta z) dx\\
    %
    %
    =& ~{\widetilde\gamma} + \widetilde\alpha \mathbb{E}(X) + \widetilde\beta \mathbb{E}(W) + \widetilde\delta(\gamma' + \alpha' X + \beta' w),
\end{align*}
where we fit two regression models, $\mathbb{E}(Y \mid x,w,z) = \widetilde\gamma + \widetilde\alpha x +\widetilde\beta w + \widetilde\delta z$ and 
 $\mathbb{E}(Z \mid x,w) = \gamma' + \alpha' x +\beta' w$. The total effect of $X,W$ on $Y$ is $(\widetilde\delta \alpha', \widetilde\delta \beta')$.

Although these expressions are causally equivalent, they produce estimators with markedly different statistical properties. Figure~\ref{fig:boxplot} compares the two estimators over $1000$ runs. While both are unbiased for the causal effect of $Z$ on $Y$, the frontdoor-based estimator (middle) exhibits substantially larger variance than the backdoor estimator (left). Similarly, although the causal effect of $X$ on $Y$ is zero, the estimator derived from the frontdoor-type formula (right) shows a large variance.
This illustrates that, even when identification is guaranteed, the choice among equivalent formulae can have a major impact on estimation performance.

\begin{figure}
    \centering
    \includegraphics[width=0.49\textwidth]{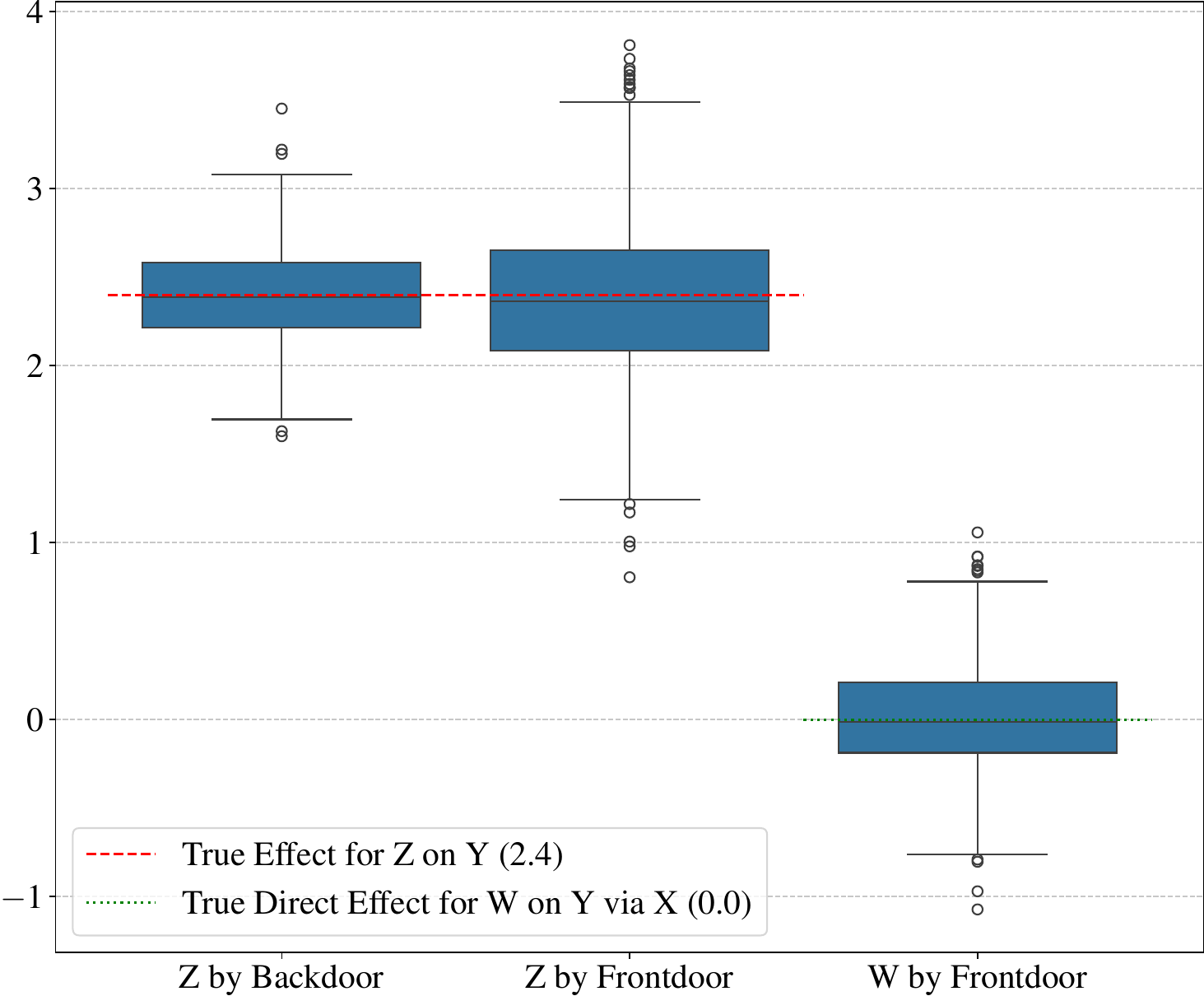}
    \caption{Comparison of two estimators associated with two equivalent identification formulae over $1000$ runs.}
\label{fig:boxplot}
\end{figure}

\subsection{Application to a Real Biological Dataset}
\label{sec:real_data}

\begin{table}[t]
    \centering
    \caption{Estimators obtained from the equivalent identification formulae for the causal effect $P(\mathrm{P38} \mid do(\mathrm{Mek}))$ on the Sachs protein-signaling dataset.}
    \label{tab:sachs_results}
    \begin{tabular}{l r r}
    \hline
    Density & Prediction & Variance \\
    \hline
    $P(\mathrm{P38} \mid do(\mathrm{Mek}))$ & 0.02272 & 0.000572 \\
    $P(\mathrm{P38})$ & -0.018090 & 0.001029 \\
    $P(\mathrm{P38} \mid do(\mathrm{Mek},\mathrm{Raf}))$ & -0.00616 & 0.000265 \\
    $P(\mathrm{P38} \mid do(\mathrm{Akt},\mathrm{Mek}))$ & 0.02149 & 0.000574 \\
    $P(\mathrm{P38} \mid do(\mathrm{Erk}))$ & -0.00445 & 0.000065 \\
    $P(\mathrm{P38} \mid do(\mathrm{Akt},\mathrm{Erk}))$ & -0.00495 & 0.000064 \\
    $P(\mathrm{P38} \mid do(\mathrm{Akt},\mathrm{Erk},\mathrm{Mek},\mathrm{Raf}))$ & -0.00723 & 0.000262 \\
    $P(\mathrm{P38} \mid do(\mathrm{Akt},\mathrm{Erk},\mathrm{Jnk}))$ & -0.00181 & \textbf{0.000016} \\
    $P(\mathrm{P38} \mid do(\mathrm{Akt},\mathrm{Jnk}))$ & -0.00730 & 0.000269 \\
    $P(\mathrm{P38} \mid do(\mathrm{Akt}))$ & -0.01993 & 0.001032 \\
    $P(\mathrm{P38} \mid do(\mathrm{Erk},\mathrm{Jnk}))$ & -0.00154 & \textbf{0.000017} \\
    \hline
    \end{tabular}
\end{table}

We further apply our approach on the protein signaling dataset of \citep{Sachs2005}, a standard benchmark in causal inference. This dataset consists of single-cell measurements of protein and phosphoprotein expression levels collected under several experimental conditions, together with a biologically validated causal graph that we use as ground truth. In this experiment, we focus on the causal effect $P(\mathrm{P38} \mid do(\mathrm{Mek}))$, which is known to be null according to the reference DAG. 
Starting from the causal graph, the derivation graph produced by our method contains 32 equivalent interventional densities. Since the underlying graph is a DAG, each density leads to an identifiable adjustment formula via parental adjustment. Several of these densities induce the same adjustment set and therefore correspond to the same estimator. For each distinct adjustment set, we estimate the causal effect and bootstrap variances using 500 resamples. Table~\ref{tab:sachs_results} reports the distinct estimators obtained from our approach, together with their  point estimates and bootstrap variances. All estimators return values close to zero, in agreement with the ground truth, and the different equivalent formulae exhibit noticeably different variances. More details are provided in  Appendix~\ref{app:sachs}.

\subsection{Discussion Over More Complex Cases}
In more complex graphs, multiple identification formulae may coexist, and it is often unclear which one is preferable from an estimation
perspective. Understanding the structure and diversity of equivalent formulae, as captured by the derivation graph, can therefore guide the choice of estimators. We provide in Appendix~\ref{app:details} an example of a more complex graph and a class of interventional queries, together with their identification formulae obtained using the ID algorithm via the \texttt{causaleffect} R package~\cite{JSSv076i12}.

\section{Conclusion and Perspectives}
\label{sec:conc}

We introduced in this work derivation graphs, which provide a simple view on how observational and interventional probabilities are related through do-calculus rules. By organizing derivations into canonical sequences, this representation clarifies how the rules of do-calculus interact and explains the combinatorial structure underlying equivalent expressions, derived from one another by the recursive application of any number of do-calculus rules. We proved in particular that equivalent expressions can be related by only two applications of Rule $\mathcal{R}2$ and two applications of Rule $\mathcal{R}3$, and we introduced an efficient graphical criterion to decide whether two expressions are equivalent or not.

An important implication of this perspective concerns estimation. While equivalent expressions are causally identical, they may yield identification formulae with very different statistical behavior. The derivation graph makes this multiplicity explicit and offers a principled way to enumerate and compare alternative representations of an interventional query. This underscores that identification alone is not sufficient: choosing one formula among several equivalent ones can significantly impact practical estimation.

An interesting direction for future work is to investigate whether an analogue of derivation graphs can be developed for the reductions underlying the ID algorithm. While the ID algorithm outputs a single derivation leading to an identification formula, one may instead consider the larger space of all valid rewritings obtained through probabilistic manipulations and do-calculus rules. In such a framework, nodes would correspond to algebraic expressions obtained from conditional and interventional distributions through marginalizations, products, and divisions, while edges would represent valid transformations between expressions. The derivation produced by ID would then correspond to a particular path from the initial query to a do-free formula.

A major challenge, however, is that the resulting graph would generally be infinite, even within a single connected component, since probabilistic identities can always be inserted into a formula. For instance, one may multiply an expression by identities such as
$\sum_{\mathbf{z}} P(\mathbf{z}\mid do(\mathbf{x}),\mathbf{w}) = 1$
without changing its value. Consequently, unlike the derivation graphs studied in this work, such extended derivation spaces would not admit finite enumeration in general. Understanding whether meaningful finite representations, canonical reductions, or tractable subclasses can nevertheless be characterized remains an open problem.

\section*{Acknowledgements}

This work was partly supported by fundings from the French government, managed by the National Research Agency (ANR), under the France 2030 program, reference ANR-23-IACL-0006 and ANR-23-PEIA-0007.

\section*{Impact Statement}
This paper presents work whose goal is to advance the field of Causality. There are many potential societal consequences of our work, none
which we feel must be specifically highlighted here.

\newpage
\bibliography{References}
\bibliographystyle{icml2026}

\newpage
\appendix

\setcounter{theorem}{0}
\setcounter{lemma}{0}
\setcounter{definition}{0}
\setcounter{proposition}{0}
\setcounter{example}{0}

\renewcommand{\thetheorem}{\Alph{theorem}}
\renewcommand{\thelemma}{\Alph{lemma}}
\renewcommand{\thedefinition}{\Alph{definition}}
\renewcommand{\theproposition}{\Alph{proposition}}
\renewcommand{\theexample}{\Alph{example}}

\onecolumn
\vspace{1cm}

\begin{center}
{ \Large  Supplementary Materials }
\end{center}
\vspace{1cm}

\section{Proofs of the Results of Section~\ref{sec:derivationGraph}: Derivation Graphs and Equivalent Expressions}

\propemptyADMG*

\begin{proof}
    We first show the claim on the number of connected components.
    Since do-calculus rules operate only on the intervention and conditioning variables, no sequence of do-calculus rules can transform an expression with outcome variables $\Y$ into an expression with different outcome variables $\Y' \neq \Y$.
    As a consequence, expressions involving different non-empty outcome sets belong to distinct connected components of $\mathcal{D}[\mathcal{G}_0]$.
    
    There are exactly $2^{|\V_{\G_0}|}-1$ non-empty subsets of $\V_{\G_0}$, hence $\mathcal{D}[\mathcal{G}_0]$ contains at least $2^{|\V_{\G_0}|}-1$ connected components.
    
    Conversely, since $\mathcal{G}_0$ contains no edges, all variables are mutually independent.
    Therefore, for any fixed non-empty outcome set $\Y$, all expressions of the form
    \[
    P(\mathbf{y} \mid \Do(\mathbf{x}), \mathbf{w}),
    \qquad
    \mathbf{X}, \mathbf{W} \subseteq \V_{\G_0} \setminus \mathbf{Y}, \X\cap\W = \emptyset,
    \]
    Hence, all such expressions belong to the same connected component.
    This shows that $\mathcal{D}[\mathcal{G}_0]$ has exactly $2^{|\V_{\G_0}|}-1$ connected components, each corresponding to a non-empty outcome set $\Y$.
    
    Finally, let $\mathcal{G}$ be any causal graph with vertex set $\V_{\G_0}$.
    Any conditional independence or graphical criterion that holds in $\mathcal{G}$ also holds in the empty graph $\mathcal{G}_0$.
    As a consequence, every valid do-calculus rule in $\mathcal{D}[\mathcal{G}]$ is also valid in $\mathcal{D}[\mathcal{G}_0]$. Therefore,
    \[
    \mathcal{D}[\mathcal{G}] \subseteq \mathcal{D}[\mathcal{G}_0].
    \]
\end{proof}

\proptailleconnectedcomponent*
\begin{proof}
    Consider the ADMG $\G_0 \coloneqq (\V_\G,\emptyset)$. By Proposition~\ref{prop:emptyADMG}, \(\mathcal{D}[\mathcal{G}] \subseteq \mathcal{D}[\mathcal{G}_0] \) and we have:
    
    \[
        \left|\mathcal{A}^{\mathcal{G}}_{{P}(\Y\mid \Do(\X),\W)} \right| 
        \leq \left|\mathcal{A}^{\mathcal{\G}_0}_{{P}(\Y\mid \Do(\X),\W)} \right| 
        = \left| \left\{ P(\mathbf{y} \mid \Do(\mathbf{x}), \mathbf{w}) : \mathbf{X}, \mathbf{W} \subseteq \V_{\G_0} \setminus \mathbf{Y}, \X\cap\W =\emptyset \right\}\right| 
        = 3^{\left| \mathcal{V}_{\mathcal{G}} \setminus \Y \right| } 
    \]
        
    Finally, every connected component contains at least one distribution, which concludes the proof.
\end{proof}

\newpage
\section{Proofs of the Results of Section~\ref{sec:commutativity}: Commutativity of Do‑Calculus Rules and Simplification of Causal Reasoning}

\paragraph{Notations.} To distinguish the type of rule application, we use the symbols $\uparrow$ and $\downarrow$: $\Ru_1(\Z)$ (resp. $\Ru_2(\Z)$ and $\Ru_3(\Z)$) denotes the insertion of an observation (resp. action) $\Z$, while $\Rd_1(\Z)$ (resp. $\Rd_2(\Z)$ and $\Rd_3(\Z)$) denotes the deletion of an observation (resp. action) $\Z$.

\subsection{Validity of a Shortcut Do-Calculus Rule}

The proofs of Section~\ref{sec:commutativity} rely on Theorem~\ref{th:cycle_do_calculus} which simplifies and accelerates the subsequent proofs. We define a \emph{shortcut rule} as a \emph{syntactically valid} single do-calculus rule that, when it exists, directly maps an interventional distribution \(Q_1\) to \(Q_2\), without assuming its graphical validity \emph{a priori}. Roughly speaking, Theorem~\ref{th:cycle_do_calculus} states that if there exists a valid sequence of do-calculus rules transforming \(Q_1\) into \(Q_2\), then the corresponding shortcut rule, whenever it exists, is itself graphically valid.

\begin{theorem}[Validity of a shortcut do-calculus rule from a graphically valid derivation]
\label{th:cycle_do_calculus}
Let $\G$ be an ADMG over $\V_\G$, and let 
\(
Q_1 \coloneqq P(\mathbf{y} \mid \Do(\mathbf{x_1}), \mathbf{w_1})),~
Q_2 \coloneqq P(\mathbf{y} \mid \Do(\mathbf{x_2}), \mathbf{w_2}))
\) 
be two interventional distributions. Suppose there exists a single do-calculus rule $\mathcal{R}$ that \emph{syntactically}\footnote{meaning that the corresponding edge exists in the derivation graph $\mathcal{D}[(\V_\G, \emptyset)]$} maps $Q_1$ to $Q_2$. 

If there exists a sequence of do-calculus rules $\mathcal{R}_{i_1} \cdots \mathcal{R}_{i_n}$ that is valid in $\G$ mapping $Q_1$ to $Q_2$, then $\mathcal{R}$ is valid in $\G$.
\end{theorem}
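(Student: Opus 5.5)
The plan is to argue \emph{semantically}: a valid derivation forces $Q_1=Q_2$ for every distribution compatible with $\G$, and a failed graphical condition for the shortcut rule $\mathcal{R}$ lets us build a compatible model in which $Q_1\neq Q_2$. The steps are as follows.

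\textbf{Step 1 (soundness).} By the soundness of each rule (the Do-Calculus Rules theorem), every step of the valid sequence $\mathcal{R}_{i_1}\cdots\mathcal{R}_{i_n}$ is an equality. Hence $Q_1 = Q_2$ holds, for all value assignments, in every distribution compatible with $\G$.

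\textbf{Step 2 (contrapositive).} We show that if $\G\not\models\mathcal{R}$, then some SCM inducing $\G$ gives $Q_1\neq Q_2$ at some values. This contradicts Step 1, so $\mathcal{R}$ must be valid.

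\textbf{Step 3 (encoding each rule as a d-separation).} Because $\mathcal{R}$ is a single syntactic rule, we treat its three possible types in turn, writing $\X$ for the intervention set shared by $Q_1$ and $Q_2$.
\begin{itemize}
\item For $\mathcal{R}_1(\Z)$, the condition is $\Y\ind \Z\mid \X,\W$ in $\G_{\overline{\X}}$. Since $P(\cdot\mid\Do(\mathbf{x}))$ is Markov to $\G_{\overline{\X}}$, it suffices to take a distribution faithful to $\G_{\overline{\X}}$ that is realizable by an SCM inducing $\G$.
\item For $\mathcal{R}_2$ and $\mathcal{R}_3$, we use Pearl's augmented graph. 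Add a regime node $F_Z$ with an edge $F_Z\to Z$ for each $Z\in\Z$. Taking $F_Z=\text{idle}$ leaves $f_Z$ unchanged, and taking $F_Z=z$ replaces $f_Z$ by the constant $z$. Then $\mathcal{R}_2(\Z)$ is equivalent to $\Y\ind F_{\Z}\mid \X,\W,\Z$, and $\mathcal{R}_3(\Z)$ is equivalent to $\Y\ind F_{\Z}\mid \X,\W$, both in the augmented version of $\G_{\overline{\X}}$.
\end{itemize}
This equivalence is the standard one; the $\Z(\W)$ subtlety in Rule 3 is exactly what d-separation from $F_{\Z}$ produces. I would re-check this equivalence carefully, as the proof depends on it.

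\textbf{Step 4 (faithful model).} Parametrize the SCM, including the regime-dependent mechanisms, so that the induced joint over the augmented graph is faithful. Generic parameters achieve this, by Meek-type arguments that faithful parameters form a full-measure set (for instance, discrete variables with generic conditional tables). A d-connection between $\Y$ and $F_{\Z}$ then gives a dependence. That dependence says the conditional of $\Y$ differs between the idle and do regimes, which is exactly $Q_1\neq Q_2$ at some $(\mathbf{y},\mathbf{x},\mathbf{z},\mathbf{w})$.

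\textbf{Main obstacle.} Step 4 is where I expect the difficulty, in two places.
\begin{itemize}
\item \emph{Faithfulness with deterministic switching.} The mechanism of $Z$ is deterministic under the do regime, so faithfulness must be arranged to survive this. I would first check that the usual genericity argument extends once the regime node has only the two relevant states ($\text{idle}$ and $z$).
\item \emph{Positivity.} The conditioning events, in particular $\Z=\mathbf{z}$ and $\W=\mathbf{w}$ under the idle regime, must have positive probability so that both sides of $Q_1=Q_2$ are defined. I would handle this by choosing strictly positive conditional tables in the idle regime.
\end{itemize}
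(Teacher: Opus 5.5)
Your overall architecture matches the paper's. Soundness makes $Q_1=Q_2$ in every model compatible with $\G$, so it suffices to exhibit, whenever the condition of the single rule $\mathcal{R}$ fails, one SCM inducing $\G$ with $Q_1\neq Q_2$. Your Step~3 is also fine: the regime-node conditions are indeed equivalent to the conditions in $\G_{\overline{\X}\,\underline{\Z}}$ and $\G_{\overline{\X}\,\overline{\Z(\W)}}$, and the Rule~1 case reduces to the usual completeness of d-separation on $\G_{\overline{\X}}$.

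The gap is Step~4, which is where the whole content of the theorem lies. Meek-type genericity results concern freely parametrized networks. In your augmented model the conditional of $Z$ given $F_Z=z$ is forced to be a point mass, so your models form a constrained, non-positive subfamily. Deterministic dependencies are exactly the setting in which d-connection need not imply dependence. The measure-zero argument only works once you know the polynomial encoding the relevant independence is not identically zero on this subfamily. That means exhibiting, for the given d-connection between $\Y$ and $F_{\mathbf{Z}}$, one parameter value inside the constrained family where the dependence actually appears. That witness is what the theorem needs, and your proposal defers it rather than supplying it.

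A smaller point: with one switch $F_Z$ per $Z\in\Z$, dependence between $\Y$ and the vector $F_{\mathbf{Z}}$ only shows that \emph{some} two regime configurations differ. It does not show that the all-idle and all-intervened configurations differ, which is what $Q_1$ versus $Q_2$ compares. You need a single joint switch.

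The paper builds the witness directly, which makes the genericity detour unnecessary.
\begin{itemize}
\item It takes a d-connecting path $\pi$ from some $Y\in\Y$ to some $Z\in\Z$, plus pairwise disjoint directed paths from its colliders into $\W$, forming the subgraph $\G^\star$.
\item It uses a linear Gaussian SCM inducing $\G$ with coefficient $\rho$ on the edges of $\G^\star$ and $0$ elsewhere.
\item Path tracing, plus a cofactor argument when $\pi$ has colliders, gives $\mathrm{Cov}(Y,Z\mid\W)\neq 0$.
\item Each rule is then settled by comparing conditional variances, using do-calculus inside $\G^\star$ to rewrite the interventional side.
\end{itemize}
The case your sketch would have to confront explicitly is Rule~3 when $\pi$ enters $Z$ through an arrowhead, so that $Z\in\Anc(\W,\G_{\overline{\X}})$. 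There the paper adds a directed path $\pi_Z$ from $Z$ to some $W_Z\in\W$ and shows
\[
\mathrm{Var}(Y\mid\Do(\mathbf{z}),\W)=\mathrm{Var}(Y\mid\W\setminus\{W_Z\})\neq\mathrm{Var}(Y\mid\W).
\]
Constructions of this kind for each rule are what your Step~4 is missing; once you have them, they are themselves the counter-models.
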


\begin{proof}
The proof is inspired by the construction given in the proof of Theorem 1.2.5(ii) from \citet{pearl_causality_2009} (see the proof of Theorem 3 in \citet{10.5555/647231.719429}) adapted for the SCM case and accounting for mutilations.

Assume, for contradiction, that the shortcut rule $\mathcal R$, allowing to go from query $Q_1$ to $Q_2$ is not valid in $\mathcal G$. Without loss of generality, we assume that $\mathcal R$ is applied in the downward direction ($\mathcal R_1^{\downarrow}$, $\mathcal R_2^{\downarrow}$, or $\mathcal R_3^{\downarrow}$), as the opposite case can be reduced to this one by swapping $Q_1$ and $Q_2$, 
We will construct an SCM $\mathcal M$ such that ${Q_1}_{\mathcal M} \neq {Q_2}_{\mathcal M}$. 

Let us rewrite $Q_1$ and $Q_2$ by introducing common sets $\mathbf X$, $\mathbf W$, and $\mathbf Z$ as follows:
\begin{itemize}
    \item If $\mathcal R$ is $\mathcal R_1^{\downarrow}$: set $Q_1 = P(\mathbf y \mid \mathrm{do}(\mathbf x), \mathbf z, \mathbf w)$, $Q_2 = P(\mathbf y \mid \mathrm{do}(\mathbf x), \mathbf w)$, where $\mathbf X := \mathbf X_1$, $\mathbf W := \mathbf W_2$, and $\mathbf Z := \mathbf W_1 \setminus \mathbf W_2$.
    
    \item If $\mathcal R$ is $\mathcal R_2^{\downarrow}$: set $Q_1 = P(\mathbf y \mid \mathrm{do}(\mathbf x), \mathrm{do}(\mathbf z), \mathbf w)$, $Q_2 = P(\mathbf y \mid \mathrm{do}(\mathbf x),\mathbf z, \mathbf w)$, where $\mathbf X := \mathbf X_1 \cap \mathbf X_2$, $\mathbf W := \mathbf W_1$, and $\mathbf Z := \mathbf X_1 \setminus \mathbf X_2$.
    
    \item If $\mathcal R$ is $\mathcal R_3^{\downarrow}$: set $Q_1 = P(\mathbf y \mid \mathrm{do}(\mathbf x), \mathbf w)$, $Q_2 = P(\mathbf y \mid \mathrm{do}(\mathbf x \setminus \mathbf z), \mathbf w)$, where $\mathbf X := \mathbf X_1 \cap \mathbf X_2$, $\mathbf W := \mathbf W_1$, and $\mathbf Z := \mathbf X_1 \setminus \mathbf X_2$.
\end{itemize}

We can assume that  $\mathbf X = \emptyset$, as it is intervened upon identically in both quantities. We distinguish two cases:
\begin{itemize}
    \item If $\mathcal R$ is $\mathcal R_1^{\downarrow}$ or $\mathcal R_2^{\downarrow}$. We construct the SCM $\mathcal M$ as follows:
    
    \textbf{First step.} $\mathcal R$ is not valid in $\mathcal G$, hence $\mathbf Y \;\not\!\perp\!\!\!\perp_{\tilde{\mathcal G}}\; \mathbf Z \mid \mathbf W$, where $\tilde{\mathcal G} = \mathcal G$ if $\mathcal R$ is $\mathcal R_1^{\downarrow}$ and $\tilde{\mathcal G} = \mathcal G_{\underline{\mathbf Z}}$ if $\mathcal R$ is $\mathcal R_2^{\downarrow}$. Thus, in all cases, there exists a path $\pi$ from $Y \in \mathbf Y$ to $Z \in \mathbf Z$, such that $\pi$ encounters $\mathbf Y$ only at its first vertex and $\mathbf Z$ only at its last vertex. For each collider $C$ on $\pi$, there exist a (possibly empty) directed path $\pi_C$ from $C$ to $W_C \in \mathbf W$. Moreover, without loss of generality (see claim 1 and 2 from \citet{10.5555/647231.719429}), we may assume that all $\pi_C$ are pairwise distinct and that for all $C$, $\pi$ and $\pi_C$ only meet at $C$. Then consider the graph $\G^\star$ on $\mathcal V$ whose edges are the edges of $\displaystyle \pi \;\cup\; \bigcup_{C} \pi_C$. Hence, by construction, $\G^\star \subseteq\tilde{\mathcal{G}} \subseteq \mathcal G$ and every element of $\mathbf Z \setminus \{Z\}$ is isolated in $\G^\star$. See Figure~\ref{fig:forthmA} for a representation of $\G^\star$ when $\pi$ contains colliders.

    \textbf{Second step.} We construct a linear Gaussian SCM $\mathcal M$ corresponding to $\G$ as follows (where all exogenous variables are mutually independent of each other):
     \begin{enumerate}
         \item For each bidirected edge connecting nodes $A$ and $B$ in $\mathcal G^\star$, introduce an exogenous variable $U_{\{A,B\}} \sim \mathcal N(0,1)$.
         \item For every node $V \in \mathcal V$, define an exogenous variable $\epsilon_V \sim \mathcal N\left(0,\sigma^2_V \right)$.
         \item For every node $V \in \mathcal V$, set $\displaystyle V \gets \rho \cdot\left(\sum_{P \in \mathrm{Pa}(V,\G^\star)} P + \sum_{W \in \mathrm{Sp}(V,\G^\star)} U_{\{V,W\}}\right)  + 0 \cdot \left(\sum_{P \in \mathrm{Pa}(V,\G) \setminus \mathrm{Pa}(V,\G^\star)} P \right) + \epsilon_V$, where $\mathrm{Sp}(V,\G^\star) \coloneqq \left\{ W \in \mathcal V \mid V \dashleftrightarrow W \subseteq \G^\star \right\}$ and $\rho > 0$.
     \end{enumerate}
     
    We choose $\rho$ small enough so that, for every $V \in \mathcal V$, there exists $\sigma_V^2$ such that $\mathrm{Var}(V)=1$, and we fix such a choice for each $V \in \mathcal V$. Note that $\mathcal M$ induces $\G$. Moreover, every edge in $\G \setminus \G^\star$ has coefficient $0$ in $\mathcal M$. Therefore, these additional edges do not affect any observational or interventional distribution induced by $\mathcal M$, and the distributions generated by $\mathcal M$ coincide with those obtained from the subgraph $\G^\star$. Hence, in the remainder of the proof, we may reason on $\mathcal M$ using do-calculus on $\G^\star$.
    
    \begin{figure}[H]
\centering
\begin{tikzpicture}[node distance=2cm, thick,->, >=stealth]

    \node (Y) {$Y$};
    \node (C1) [right of=Y] {$C_1$};
    \node (Cdots) [right of=C1] {\Large $\cdots$};
    \node (Ck) [right of=Cdots] {$C_k$};
    \node (Z) [right of=Ck] {$Z$};
    \node (WC1) [below of=C1] {$W_{C_1}$};
    \node (WCdots) [below of=Cdots] {\Large $\cdots$};
    \node (WCk) [below of=Ck] {$W_{C_k}$};
    \node (WZ) [below of=Z, red] {$W_Z$};
    
    \draw[-, decorate, decoration={snake, amplitude=0.6mm, segment length=3mm}] (Y) -- (C1);
    \draw[-,decorate, decoration={snake, amplitude=0.6mm, segment length=3mm}]  (C1) -- (Cdots);
    \draw[-, decorate, decoration={snake, amplitude=0.6mm, segment length=3mm}] (Cdots) -- (Ck);
    \draw[-, decorate, decoration={snake, amplitude=0.6mm, segment length=3mm}] (Ck) -- (Z);

    \draw[decorate, decoration={snake, amplitude=1pt, segment length=3mm}, red]
    (Z) -- node[red, midway, xshift= -4mm] {$\pi_Z$} (WZ);
    
    \draw[decorate, decoration={snake, amplitude=1pt, segment length=3.6mm}]
    (C1) -- node[midway, xshift= -4mm] {$\pi_{C_1}$} (WC1);
    
    \draw[decorate, decoration={snake, amplitude=1pt, segment length=3.6mm}]
    (Ck) --  node[midway, xshift= -4mm] {$\pi_{C_k}$} (WCk);
    
    \draw[-, decorate, decoration={brace, raise=8pt}] (Y) -- node[midway, yshift=6mm] {$\pi$} (Z);

    \end{tikzpicture}
    \caption{\textbf{Graphs in the Proof of Theorem~\ref{th:cycle_do_calculus}.}
    $\G^\star$ is shown in black. $\G^\triangle$ is obtained by adding the path $\pi_Z$ in red. Wavy edges denote arbitrary paths and wavy directed edges denote directed paths. Each $C_i$ is a collider on $\pi$. The paths $\pi_{C_i}$ are pairwise distinct, intersect $\pi$ only at $C_i$, and are disjoint from $\pi_Z$. Moreover, $\pi_Z$ intersects $\pi$ only at $Z$.
    }
    \label{fig:forthmA}
\end{figure}
    
    \textbf{Third step.} Check for each rule that ${Q_1}_{\mathcal M} \neq {Q_2}_{\mathcal M}$. When working in the matrix setting, we denote $\Sigma_{\mathbf A \mathbf B}$ the covariance matrix between $\mathbf A$ and $\mathbf B$. To lighten notation, we omit the dependance in $\mathcal M$, all subsequent calculations ar calculated in $\mathcal M$.
    
    First, let us show that $\mathrm{Cov}(Y,Z \mid \mathbf W)\neq 0$. We distinguish two cases:
    \begin{itemize}
        \item  If $\pi$ contains no colliders, the path-tracing rule (see \citet{Wright1921CorrelationAndCausation}) gives $\mathrm{Cov}(Y,Z \mid \mathbf W) = \mathrm{Cov}(Y,Z) = \rho^{|\pi|} \neq 0$, where $|\pi|$ denotes the effective path length, counting bidirected edges twice. 
        
        \item  If $\pi$ contains $k > 0$ colliders (see Figure~\ref{fig:forthmA}), then $\Sigma_{YZ} = 0$, hence $\mathrm{Cov}(Y,Z \mid \mathbf W) = - \Sigma_{Y\mathbf W} \cdot \Sigma_{\mathbf W\mathbf W}^{-1} \cdot\Sigma_{\mathbf W Z}$. To perform the computations, we order the elements of $\mathbf W$ as follows: first, all variables $W_C$ corresponding to colliders $C$ on $\pi$, in the order induced by the path (i.e., as indexed in Figure~\ref{fig:forthmA}); the remaining elements are then ordered arbitrarily. With this ordering, only the first component of $\Sigma_{Y\mathbf W}$ (i.e., corresponding to $\mathrm{Cov}(Y, W_{C_1})$ in Figure~\ref{fig:forthmA}) and the $k$-th component of $\Sigma_{\mathbf W Z}$ (i.e., corresponding to $\mathrm{Cov}(W_{C_k}, Z)$ in Figure~\ref{fig:forthmA}) are nonzero. Hence $\mathrm{Cov}(Y,Z \mid \mathbf W) = - [\Sigma_{Y\mathbf W}]_1 [\Sigma_{\mathbf W\mathbf W}^{-1}]_{1k} [\Sigma_{\mathbf W Z}]_k$. By the cofactor formula, $[\Sigma_{\mathbf W\mathbf W}^{-1}]_{1k} = (-1)^{1+k} \det(\Sigma_{\mathbf W\mathbf W}^{(k,1)}) / \det(\Sigma_{\mathbf W\mathbf W})$, where $\Sigma_{\mathbf W\mathbf W}^{(k,1)}$ is obtained by removing the last row and the first column of $\Sigma_{\mathbf W\mathbf W}$. With the ordering along the colliders, this matrix is lower triangular, with diagonal entries equal to the covariances between consecutive $W_C$ along the path and $1$ for all remaining variable in $\mathbf W$. Therefore, the determinant of $\Sigma_{\mathbf W\mathbf W}^{(k,1)}$ is the product of these nonzero diagonal entries, and we conclude that $[\Sigma_{\mathbf W\mathbf W}^{-1}]_{1k} \neq 0$. Therefore, $\mathrm{Cov}(Y,Z \mid \mathbf W) \neq 0$.
    \end{itemize}
     
    In both cases, we have $\mathrm{Cov}(Y,Z \mid \mathbf W) \neq 0$. We distinguish two cases:
        
    \begin{itemize}
        \item If $\mathcal R$ is $\mathcal R_1^{\downarrow}$, we know that $\mathrm{Cov}(Y,Z \mid \mathbf W)\neq 0$. Therefore $P(y \mid z, \mathbf w) \neq P(y \mid \mathbf w)$. Since every element of $\mathbf Z \setminus \{Z\}$ is isolated in $\G^\star$, by $\mathcal R_1^\downarrow (\mathbf z \setminus z)$, we have that $P(y \mid \mathbf z, \mathbf w) = P(y \mid z, \mathbf w) \neq P(y \mid \mathbf w)$. Therefore $P(\mathbf y \mid \mathbf z, \mathbf w) \neq P(\mathbf y \mid \mathbf w)$.
    
        \item If $\mathcal R$ is $\mathcal R_2^{\downarrow}$, since $\G^\star \subseteq \tilde{\G} = \G_{\underline{\mathbf Z}}$, we know that  $Z$ is isolated in $\G^\star_{\overline{Z(\{Y\} \cup \mathbf W)}}$. Hence, $\{Y\} \cup \mathbf W \ind_{\G^\star_{\overline{Z(\{Y\} \cup \mathbf W)}}} Z$. By $\mathcal R_3^\downarrow (z)$, we have $P(y , \mathbf w \mid \mathrm{do}(z)) = P(y, \mathbf w)$. Therefore, $P(y \mid \mathrm{do}(z), \mathbf w) = P(y \mid \mathbf w)$ and $\mathrm{Var}(Y\mid   \mathrm{do}(Z = z), \mathbf{W}) = \mathrm{Var}(Y\mid \mathbf{W})$. Moreover, we have $\mathrm{Var}(Y\mid Z, \mathbf W) = \mathrm{Var}(Y\mid \mathbf W)- \frac{\mathrm{Cov}(Y,Z \mid \mathbf W)^2}{\mathrm{Var}(Z\mid \mathbf W)}$. Moreover, we know that  $\mathrm{Cov}(Y,Z \mid \mathbf W) \neq 0$. Hence, $\mathrm{Var}(Y \mid \mathrm{do}(Z = z), \mathbf{W}) \neq \mathrm{Var}(Y \mid Z=z, \mathbf{W})$. Thus,  $P(y \mid \mathrm{do}(z), \mathbf{w}) \neq P(y \mid z, \mathbf{w})$. Since every element of $\mathbf Z \setminus \{Z\}$ is isolated in $\G^\star$, by $\mathcal R_1^\uparrow (\mathbf z \setminus z)$ and $\mathcal R_2^\uparrow (\mathbf z \setminus z)$, we have that $P(y \mid \mathrm{do}(\mathbf z), \mathbf{w}) \neq P(y \mid \mathbf z, \mathbf{w})$. Therefore $P( \mathbf y \mid \mathrm{do}(\mathbf z), \mathbf{w}) \neq P( \mathbf y \mid \mathbf z, \mathbf{w})$.
    \end{itemize}
        
    \textbf{Conlusion for $\mathcal R_1^{\downarrow}$ and $\mathcal R_2^{\downarrow}$.} In both cases, we have shown that there exists an SCM $\mathcal M$ inducing $\mathcal G$ such that 
    $
    P_{\mathcal M}(\mathbf y \mid \mathrm{do}(\mathbf x_1), \mathbf w_1)
    \neq
    P_{\mathcal M}(\mathbf y \mid \mathrm{do}(\mathbf x_2), \mathbf w_2).
    $

    \item  If $\mathcal R$ is $\mathcal R_3^{\downarrow}$, then $\mathbf Y \;\not\!\perp\!\!\!\perp_{\tilde{\mathcal G}}\; \mathbf Z \mid \mathbf W$, where $\tilde{\mathcal G} = \mathcal G_{\overline{\mathbf Z(\mathbf W)}}$. We consider $\G^\star$ as in the first step of the previous case. This gives us $Y\in \mathbf Y$ and $Z \in \mathbf Z$ that are connected under $\mathbf W$ in $\G^\star \subseteq \tilde{\G}$ via a path $\pi$ along with a directed path $\pi_{C}$ from $C$ to $W_C \in \mathbf W$ for each collider $C$ on $\pi$. We distinguish two cases:
        \begin{itemize}
            \item If $\pi$ contains an arrow pointing away from $Z$ (i.e $\pi = \dots \leftarrow Z $). We construct $\mathcal M$ as in the second step of the previous case. Since all $\pi_{C}$ does not encounter $Z$, we know that $Z$ does not have an incoming arrow in $\G^\star$, we know that $Z$ is isolated in $\G^\star_{\underline{\mathbf Z}}$. Therefore $\mathcal R_2^\downarrow(z)$ is valid in $\G^\star$ and $\mathrm{Var}(Y\mid \mathrm{do}(Z = z), \mathbf W) = \mathrm{Var}(Y \mid Z, \mathbf W) $. Moreover, we have $\mathrm{Var}(Y\mid Z, \mathbf W) = \mathrm{Var}(Y\mid \mathbf W)- \frac{\mathrm{Cov}(Y,Z \mid \mathbf W)^2}{\mathrm{Var}(Z\mid \mathbf W)}$. By the same computation as before, we know that  $\mathrm{Cov}(Y,Z \mid \mathbf W) \neq 0$. Hence $\mathrm{Var}(Y\mid \mathrm{do}(Z = z), \mathbf W) \neq \mathrm{Var}(Y\mid \mathbf W)$ and $P(y\mid \mathrm{do}(z), \mathbf w) \neq P(y\mid \mathbf w)$. As before, we conclude that $P(\mathbf y\mid \mathrm{do}(\mathbf z), \mathbf w) \neq P(\mathbf y\mid \mathbf w)$.
            
            \item Otherwise, $\pi$ contains an incoming arrow on $Z$ (i.e., $\pi =\dots \rightarrow Z$). Necessarily, $\tilde{\G}$ contains a directed path $\pi_Z$ from Z to $W_Z \in \mathbf W$. We consider $\G^\star$ as before. We may assume that every vertex on $\pi_Z$ except $Z$ is isolated in $\G^\star$. Indeed otherwise, one may find a $\mathbf W$-active path from $Z$ to $Y$ in $\G^\star \cup \pi_Z$ that starts from an arrow pointing away from $Z$. Instead of constructing $\mathcal M$ corresponding to $\G^\star$, we construct $\mathcal M$ corresponding to $\G^\triangle \coloneqq \G^\star \cup \pi_Z$ (see Figure~\ref{fig:forthmA}). Since the unique path from $Y$ to $W_Z$ in $\G^\triangle$ encounters $Z$, we have
            \(
            P(y \mid \mathrm{do}(\mathbf z), \mathbf w) = 
            P(y \mid \mathrm{do}(\mathbf z), \mathbf w \setminus w_Z) 
            \) by $\mathcal R_1^\downarrow(w_Z)$. Since the unique element of $\mathbf Z$ non isolated in $\G^\star$ is $Z$ and since $\pi =\dots \rightarrow Z$, we know that 
            \(
            P(y \mid \mathrm{do}(\mathbf z), \mathbf w \setminus w_Z) =
            P(y \mid \mathbf w \setminus w_Z)
            \), by $\mathcal R_3^\downarrow(\mathbf z)$.
            Hence, we have
            \(
            \operatorname{Var}(Y \mid \mathrm{do}(\mathbf Z = \mathbf z), \mathbf w) = 
            \operatorname{Var}(Y \mid \mathbf W \setminus \{W_Z\}).
            \)
            However, we have 
            \(
            \operatorname{Var}(Y \mid \mathbf W ) = 
            \operatorname{Var}(Y \mid \mathbf W \setminus \{W_Z\}) - 
            \frac{\operatorname{Cov}(Y, W_Z \mid \mathbf W \setminus \{W_Z\})^2}{\operatorname{Var}(W_Z\mid \mathbf W \setminus \{W_Z\})}
            \). By the same reasoning as for $R_1^\downarrow$ and $R_2^\downarrow$, we know that $\operatorname{Cov}(Y, W_Z \mid \mathbf W \setminus \{W_Z\}) \neq 0$. Therefore we conclude that $ \operatorname{Var}(Y \mid \mathrm{do}(\mathbf Z = \mathbf z), \mathbf W) \neq  \operatorname{Var}(Y \mid \mathbf W)$ and $ P(\mathbf y\mid \mathrm{do}(\mathbf z), \mathbf w) \neq P(\mathbf y\mid \mathbf w)$.
        \end{itemize}
        
        \textbf{Conlusion for $\mathcal R_3^{\downarrow}$.} We have shown that there exists an SCM $\mathcal M$ inducing $\mathcal G$ such that 
    $
    P_{\mathcal M}(\mathbf y \mid \mathrm{do}(\mathbf x_1), \mathbf w_1)
    \neq
    P_{\mathcal M}(\mathbf y \mid \mathrm{do}(\mathbf x_2), \mathbf w_2).
    $
    \end{itemize}

Therefore, in all cases, we have shown that there exists an SCM $\mathcal M$ inducing $\mathcal G$ such that 

$$
P_{\mathcal M}(\mathbf y \mid \mathrm{do}(\mathbf x_1), \mathbf w_1)
\neq
P_{\mathcal M}(\mathbf y \mid \mathrm{do}(\mathbf x_2), \mathbf w_2).
$$

However, by assumption, there exists a valid sequence of do-calculus rules $\mathcal R_{i_1}\cdots \mathcal R_{i_n}$ mapping $Q_1$ to $Q_2$. By soundness of do-calculus, this implies that for every SCM inducing $\mathcal G$,
$$
P_{\mathcal M}(\mathbf y \mid \mathrm{do}(\mathbf x_1), \mathbf w_1)
= P_{\mathcal M}(\mathbf y \mid \mathrm{do}(\mathbf x_2), \mathbf w_2),
$$
contradicting the previous inequality. Therefore, $\mathcal R$ must be valid in $\mathcal G$.  
\end{proof}

\begin{example}
    Consider four rules $\mathcal{R}_{i_1}, \mathcal{R}_{i_2}, \mathcal{R}_{i_3}$, and $\mathcal{R}_{i_4}$, 
    which form a quadrilateral (i.e., a 4-cycle) in the diagram $\mathcal{D}[(\V, \emptyset)]$. 
    Assume that in a graph $\G$, the rules $\mathcal{R}_{i_1}, \mathcal{R}_{i_2}$, and $\mathcal{R}_{i_3}$ are valid. 
    Then, by Theorem~\ref{th:cycle_do_calculus}, the remaining rule $\mathcal{R}_{i_4}$ 
    (i.e., the corresponding shortcut rule of  $\mathcal{R}_{i_1}\, \mathcal{R}_{i_2}\, \mathcal{R}_{i_3}$) is also valid in $\G$.
    
    We illustrate this in the derivation graph in Figure~\ref{fig:exemple_annex}. Necessarily, both undirected edges
\[
P(y \mid \Do(w), x) \;\text{---}\; P(y \mid x)
\quad \text{and} \quad
P(y \mid x) \;\text{---}\; P(y \mid \Do(w), x)
\]
are absent, since the presence of either one would necessarily imply the presence of the other.
    
\begin{figure}[t]
    \centering
\begin{tikzpicture}[
    dofree/.style = {
      draw=coolorange!60!black,
      fill=coolorange!80,
      line width=0.6pt,
      inner sep=1pt,
      font=\small
    },
    withdo/.style = {
      draw=blue!60!black,
      fill=blue!25,
      line width=0.6pt,
      inner sep=1pt,
      font=\small
    },
    styleR1/.style = {thick, -, gray, line width=1.2pt},
    styleR2/.style = {thick, -, orange!60!brown, dashed, line width=1.2pt},
    styleR3/.style = {thick, -, blue!55, dotted, line width=1.5pt},
    warning/.style = {thick, -, red!70!white},
    scale=4
]
    \node[withdo] (n0) at (1.55, -.42) {$P(y \mid \Do(x))$};
    \node[withdo] (n1) at (1.85, -.12) {$P(y \mid \Do(x,z))$};
    \node[withdo] (n2) at (.6, -.3) {$P(y \mid \Do(w,x))$};
    \node[withdo] (n4) at (0.6, 0.7) {$P(y \mid \Do(w),x)$};
    \node[withdo] (n5) at (1, 0) {$P(y \mid \Do(w,x,z))$};
    \node[withdo] (n6) at (1, 1) {$P(y \mid \Do(w,z),x)$};
    \node[withdo] (n8) at (1.85, .88) {$P(y \mid \Do(z),x)$};
    \node[draw, fill = red!30, line width=0.6pt,
      inner sep=1pt,font=\footnotesize] (n9) at (1.55, .58) {$P(y \mid x)$};

\begin{pgfonlayer}{background}
    \draw[styleR3] (n0) -- (n1);
    \draw[styleR3] (n0) -- (n2);
    \draw[styleR3] (n1) -- (n5);
    \draw[styleR2] (n1) -- (n8);
    \draw[styleR2] (n2) -- (n4);
    \draw[styleR3] (n2) -- (n5);
    \draw[styleR3] (n4) -- (n6);
    \draw[styleR2] (n5) -- (n6);
    \draw[styleR3] (n6) -- (n8);
    \draw[warning] (n9) -- (n8) node[pos=0.5, red, font=\Large] {$\times$};
    \draw[warning] (n9) -- (n4) node[pos=0.5, red, font=\Large] {$\times$};
\end{pgfonlayer}
\end{tikzpicture}
\caption{Portion of the derivation graph of the Napkin graph.}
\label{fig:exemple_annex}
\end{figure}
\end{example}

\newpage
\subsection{Redundancy of $\mathcal{R}_1$}
The three rules  $\mathcal{R}_1$,  $\mathcal{R}_2$,  $\mathcal{R}_3$ have a simple visualization on the derivation graph, meaning that redundancy of  $\mathcal{R}_1$ can be visualised on the triangle in Figure \ref{fig:R1_conditions}. 
\begin{figure}[ht!]
    \centering
    \begin{tikzpicture}[
        every node/.style={fill=white, inner sep=5pt, font=\small},
        every edge/.append style={draw, thick},
        >=latex,
        scale=5.5
    ]
        \node (A) at (0,0) {\(P(\mathbf{y} \mid \Do(\mathbf{x}), \mathbf{z}, \mathbf{w})\)};
        \node (B) at (.5,.7071) {\(P(\mathbf{y} \mid \Do(\mathbf{x}), \Do(\mathbf{z}), \mathbf{w})\)};
        \node (C) at (1,0) {\(P(\y \mid \Do(\x), \w)\)};

        \draw[-] (A) -- node[midway, above, sloped, align=center]
            {\(\Y \ind_{\G_{\overline{\X}}} \Z \mid \X,\W\)} (C);
        \draw[-] (A) -- node[midway, sloped, above, align=center]
            {\(\Y \ind_{\G_{\overline{\X} \underline{\Z}}} \Z \mid \X,\W\)} (B);
        \draw[-] (B) -- node[midway, sloped, above, align=center]
            {\(\Y \ind_{\G_{\overline{\X} \, \overline{\Z \setminus \Anc(\W, \G_{\overline{\X}}})}} \Z \mid \X,\W\)} (C);
    \end{tikzpicture}
    \caption{\textbf{Graphical conditions of Rule~1 applications.}  
    Each node represents an expression of the form 
    \(P(\Y \mid \Do(\X), \Do(\cdot), \W, \cdot)\), 
    and each edge corresponds to a valid application of a do-calculus rule 
    under the indicated independence condition.}
    \label{fig:R1_conditions}
\end{figure}

\thRun*

\begin{proof}
Graphically, the theorem corresponds to the equivalence of the following conditions:
    \begin{enumerate}
        \item \(\Y \ind_{\G_{\overline{\X}}} \Z \mid \X,\W\);
        \label{th:r1_sert_a_r1:1}
        
        \item  \(\Y \ind_{\G_{\overline{\X} \underline{\Z}}} \Z \mid \X,\W\) and \(\Y \ind_{\G_{\overline{\X} \, \overline{\Z \setminus \Anc(\W, \G_{\overline{\X}}})}} \Z \mid \X,\W\).
        \label{th:r1_sert_a_r1:2}
    \end{enumerate}
\textbf{We prove the two implications:}
\begin{itemize}
    \item[$\ref{th:r1_sert_a_r1:1} \Rightarrow \ref{th:r1_sert_a_r1:2}$:] Although this direction is already established by Lemma~4 in \citet{huang_pearls_2006}, we include a proof for completeness. 
Let us assume that  \(\Y \ind_{\G_{\overline{\X}}} \Z \mid \X,\W\). Since $\G_{\overline{\X}} \supseteq \G_{\overline{\X}\, \underline{\Z}}$ and $\G_{\overline{\X}} \supseteq \G_{\overline{\X} \, \overline{\Z \setminus \Anc(\W, \G_{\overline{\X}}})}$, we know that \(\Y \ind_{\G_{\overline{\X} \underline{\Z}}} \Z \mid \X,\W\) and \(\Y \ind_{\G_{\overline{\X} \, \overline{\Z \setminus \Anc(\W, \G_{\overline{\X}}})}} \Z \mid \X,\W\).
    
    \item[$\ref{th:r1_sert_a_r1:2} \Rightarrow \ref{th:r1_sert_a_r1:1}$:] By assumption, the derivation:
$
P(\mathbf{y} \mid \mathrm{do}(\mathbf{x}), \mathbf{z}, \mathbf{w}) \quad \mathcal R_2^{\uparrow}(\mathbf z) R_3^{\downarrow}(\mathbf z) \quad P(\mathbf{y} \mid \mathrm{do}(\mathbf{x}), \mathbf{w})
$
is valid in $\mathcal G$. Moreover, $\mathcal R_1^{\downarrow}(\mathbf z)$ syntacticaly maps $P(\mathbf{y} \mid \mathrm{do}(\mathbf{x}), \mathbf{z}, \mathbf{w})$ to $P(\mathbf{y} \mid \mathrm{do}(\mathbf{x}), \mathbf{w})$. By Theorem A, we know that $R_1^{\downarrow}(\mathbf z)$ is valid in $\mathcal G$, i.e. $\textbf{Y} \, \perp\!\!\!\perp_{\mathcal{G}_{\overline{\textbf{X}}}} \, \textbf{Z} \mid \textbf{X}, \textbf{W}$.
\end{itemize}
\end{proof}

\subsection{Internal Commutativity of $\mathcal{R}_2$}

This section demonstrates Theorem~\ref{th:r2_commute}. Definition~\ref{def:R2:mutilated_graphs} defines the mutilation graphs used in the proof of Theorem~\ref{th:r2_commute} and Figure~\ref{fig:R2_conditions} represents all the possible rules.

\begin{definition}[Mutilated Graphs for Rule 2 Transitions]
\label{def:R2:mutilated_graphs}
Let $\G$ be an ADMG over $\V$ and let $\Y,\X,\W,\Zu,\Zd$ be pairwise disjoint subsets of~$\V$.
We define the following mutilated graphs used to apply Rule~2 of the do-calculus:
\begin{align*}
    \Ga &\coloneqq \G_{\overline{\X\Zu}\,\underline{\Zd}}, &
    \Gb &\coloneqq \G_{\overline{\X}\,\underline{\Zu}}, &
    \Gc &\coloneqq \G_{\overline{\X}\,\underline{\Zd}}, \\
    \Gd &\coloneqq \G_{\overline{\X\Zd}\,\underline{\Zu}}, &
    \Ge &\coloneqq \G_{\overline{\X}\,\underline{\Zu\Zd}}.
\end{align*}
\end{definition}

\begin{figure}[ht!]
    \centering
    \begin{tikzpicture}[
        every node/.style={fill=white, inner sep=5pt, font=\small},
        every edge/.append style={draw, thick},
        >=latex,
        scale=3.2
    ]
        \node (A) at (0,2) {\(P(\y \mid \Do(\x), \Do(\zu, \zd), \w)\)};
        \node (B) at (2,2) {\(P(\y \mid \Do(\x), \Do(\zu), \w, \zd)\)};
        \node (C) at (2,0) {\(P(\y \mid \Do(\x), \w, \zu, \zd)\)};
        \node (D) at (0,0) {\(P(\y \mid \Do(\x), \Do(\zd), \w, \zu)\)};

        \draw[-] (A) -- node[midway, above, align=center]
            {\(\Y \ind_{\Ga} \Zd \mid \X,\Zu,\W\)} (B);
        \draw[-] (B) -- node[midway, right, align=center]
            {\(\Y \ind_{\Gb} \Zu \mid \X,\W,\Zd\)} (C);
        \draw[-] (C) -- node[midway, below, align=center]
            {\(\Y \ind_{\Gc} \Zd \mid \X,\W,\Zu\)} (D);
        \draw[-] (D) -- node[midway, left, align=center]
            {\(\Y \ind_{\Gd} \Zu \mid \X,\Zd,\W\)} (A);
        \draw[-] (A) -- node[midway, sloped, above, align=center]
            {\(\Y \ind_{\Ge} \Zu,\Zd \mid \X,\W\)} (C);
    \end{tikzpicture}
    \caption{\textbf{Graphical conditions of Rule~2 applications.}  
    Each node represents an expression of the form 
    \(P(\Y \mid \Do(\X), \Do(\cdot), \W, \cdot)\), 
    and each edge corresponds to a valid application of Rule~2 
    of the do-calculus under the indicated independence condition.  
    The mutilated graphs used in each condition are denoted by 
    \(\Ga,\Gb,\Gc,\Gd,\Ge\), defined in Definition~\ref{def:R2:mutilated_graphs}.}
    \label{fig:R2_conditions}
\end{figure}

\thRtwocommute*

\begin{proof}
Graphically, using Definition~\ref{def:R2:mutilated_graphs} and Figure~\ref{fig:R2_conditions}, the theorem corresponds to the following properties:
    \begin{enumerate}
        \item \(\Y \ind_{\Ga} \Zd \mid \X,\Zu,\W\) and \(\Y \ind_{\Gb} \Zu \mid \X,\W,\Zd\) $\Leftrightarrow$ \(\Y \ind_{\Ge} \Zu,\Zd \mid \X,\W\) $\Leftrightarrow$ \(\Y \ind_{\Gc} \Zd \mid \X,\W,\Zu\) and \(\Y \ind_{\Gd} \Zu \mid \X,\Zd,\W\)
        \item \(\Y \ind_{\Ga} \Zd \mid \X,\Zu,\W\) and  \(\Y \ind_{\Gd} \Zu \mid \X,\Zd,\W\) $\Leftrightarrow$  \(\Y \ind_{\Gb} \Zu \mid \X,\W,\Zd\) and \(\Y \ind_{\Gc} \Zd \mid \X,\W,\Zu\)
    \end{enumerate}
    
    The first property is a direct consequence of Lemma~\ref{th:r2_commute:1} and the second is a direct consequence of Lemma~\ref{th:r2_commute:2} that we state and prove just below.
\end{proof}

\begin{restatable}[]{lemma}{thRtwocommute:1}
\label{th:r2_commute:1}
    Let $\G=(\V_\G,\E_\G)$ be an ADMG, and let $\Y,\X,\W,\Zu,\Zd$ be pairwise disjoint subsets of~$\V_\G$, with $\Y \neq \emptyset$. The following proposition are equivalent:
    \begin{enumerate}
        \item \(\Y \ind_{\Ga} \Zd \mid \X,\Zu,\W\) and \(\Y \ind_{\Gb} \Zu \mid \X,\W,\Zd\) \label{th:r2_commute:1:1}
        \item  \(\Y \ind_{\Ge} \Zu,\Zd \mid \X,\W\) \label{th:r2_commute:1:2}
        \item \(\Y \ind_{\Gc} \Zd \mid \X,\W,\Zu\) and \(\Y \ind_{\Gd} \Zu \mid \X,\Zd,\W\) \label{th:r2_commute:1:3}
    \end{enumerate}
\end{restatable}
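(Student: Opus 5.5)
The plan is to argue graphically for the implications out of \ref{th:r2_commute:1:2} and to use Theorem~\ref{th:cycle_do_calculus} for the implications into \ref{th:r2_commute:1:2}. The lemma is symmetric under swapping $\Zu$ and $\Zd$, which exchanges $\Ga\leftrightarrow\Gd$ and $\Gb\leftrightarrow\Gc$ and leaves $\Ge$ fixed. It therefore suffices to prove \ref{th:r2_commute:1:1}$\Leftrightarrow$\ref{th:r2_commute:1:2}; the equivalence with \ref{th:r2_commute:1:3} follows by renaming.

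\textbf{\ref{th:r2_commute:1:1}$\Rightarrow$\ref{th:r2_commute:1:2}.} Condition \ref{th:r2_commute:1:1} says that the two-step derivation $P(\y\mid\Do(\x),\Do(\zu,\zd),\w)\to P(\y\mid\Do(\x),\Do(\zu),\w,\zd)\to P(\y\mid\Do(\x),\w,\zu,\zd)$ is valid in $\G$; these are the edges $A$--$B$ and $B$--$C$ of Figure~\ref{fig:R2_conditions}. The single rule $\Rd_2(\Zu,\Zd)$ syntactically maps the first expression to the last. Theorem~\ref{th:cycle_do_calculus} then says this shortcut rule is valid, and its validity is exactly \ref{th:r2_commute:1:2}.

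\textbf{\ref{th:r2_commute:1:2}$\Rightarrow$\ref{th:r2_commute:1:1}, first condition.} I argue by contraposition. Suppose $\pi$ is a path from $\Y$ to $\Zd$ in $\Ga=\G_{\overline{\X\Zu}\,\underline{\Zd}}$ that is active given $\X,\Zu,\W$.
\begin{itemize}
\item In $\Ga$ no vertex of $\Zu$ has an incoming edge. Hence a $\Zu$-vertex on $\pi$ can only be a non-collider, and it is then blocked because $\Zu$ is conditioned on. So $\pi$ avoids $\Zu$.
\item No collider can be activated through a descendant in $\Zu$, since a directed path into $\Zu$ does not exist in $\Ga$. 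So every collider of $\pi$ has a directed path to $\X\cup\W$, and that path avoids $\Zu$ as well.
\item All edges used by $\pi$ and by these directed paths avoid $\Zu$ and avoid tails out of $\Zd$. They therefore also belong to $\Ge=\G_{\overline{\X}\,\underline{\Zu\Zd}}$.
\end{itemize}
Consequently $\pi$ is active given $\X,\W$ in $\Ge$, contradicting \ref{th:r2_commute:1:2}.

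\textbf{\ref{th:r2_commute:1:2}$\Rightarrow$\ref{th:r2_commute:1:1}, second condition.} Again by contraposition, take a path in $\Gb=\G_{\overline{\X}\,\underline{\Zu}}$ from $\Y$ to $\Zu$ that is active given $\X,\W,\Zd$. The difference from $\Ge$ is that $\Gb$ keeps the tails out of $\Zd$, and $\Zd$ is now in the conditioning set. I would choose such a path $\pi$ so that its prefix from $\Y$ to the first vertex $T$ of $\Zu\cup\Zd$ is minimal, and truncate $\pi$ at $T$. The resulting path enters $T$ with an arrowhead:
\begin{itemize}
\item if $T\in\Zu$, this holds because tails out of $\Zu$ are removed;
\item if $T\in\Zd$, this holds because a conditioned non-collider is blocked, so $T$ was a collider on $\pi$.
\end{itemize}
Hence no removed tail out of $\Zd$ is used at the endpoint, and no other vertex of $\Zu\cup\Zd$ lies on the prefix. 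Colliders on the prefix are activated either by $\X\cup\W$ or by a descendant in $\Zd$. In the latter case I replace the tail of the path at the first such collider by its directed path to the nearest $\Zd$-vertex. Such a directed path can be chosen to avoid $\Zu$ (since $\Zu$ has no outgoing edges in $\Gb$) and to contain no other $\Zd$-vertex (by nearest choice), so it uses no tail out of $\Zd$. This yields a path from $\Y$ to $\Zu\cup\Zd$ that is active given $\X,\W$ in $\Ge$, contradicting \ref{th:r2_commute:1:2}.

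The main obstacle is this last rerouting step. One must check that the spliced walk can be shortened to a genuine path, as in Claims~1--2 of \citet{10.5555/647231.719429}. One must also check that the collider activations needed on the new path use only $\X\cup\W$, and not $\Zd$, which is no longer in the conditioning set in $\Ge$. I would handle this by induction on the number of colliders activated only through $\Zd$, at each stage taking the first such collider along the path. A secondary check is that the non-colliders of the prefix lie outside $\X\cup\W$, which holds because $\pi$ was active.
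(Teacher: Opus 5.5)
Your proposal is correct. The symmetry reduction and the direction \ref{th:r2_commute:1:2}$\Rightarrow$\ref{th:r2_commute:1:1} match the paper's proof:
\begin{itemize}
\item your $\Ga$ case is the paper's observation that an active path cannot meet $\Zu$, and therefore lives in $\Ge$;
\item your $\Gb$ case is the paper's truncation at the first encounter with $\Zd$, followed by a splice at the first collider that is not an ancestor of $\W$ in $\Ge$.
\end{itemize}

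The genuine difference is \ref{th:r2_commute:1:1}$\Rightarrow$\ref{th:r2_commute:1:2}. You invoke Theorem~\ref{th:cycle_do_calculus} on the valid derivation $\Rd_2(\Zd)\,\Rd_2(\Zu)$, with shortcut $\Rd_2(\Zu,\Zd)$. This is exactly how the paper handles the $\mathcal{R}_3$ analogue (Lemma~\ref{th:r3_commute:1}). For $\mathcal{R}_2$, however, the paper argues directly. It takes an $\X,\W$-active path in $\Ge$ that meets $\Zu\cup\Zd$ only at its endpoint.
\begin{itemize}
\item If the endpoint is in $\Zu$, the path avoids $\Zd$ and stays active in $\Gb\supseteq\Ge$ given $\X,\W,\Zd$.
\item If the endpoint is in $\Zd$, the path avoids $\Zu$ and hence lies in $\Ga$. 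The directed paths from its colliders to $\W$ in $\Ge$ cannot pass through $\Zu$, which has no outgoing edges there, so they lie in $\Ga$ as well.
\end{itemize}
Your route is shorter and uniform with the $\mathcal{R}_3$ case. The cost is that a purely graphical equivalence now depends on the semantic machinery behind Theorem~\ref{th:cycle_do_calculus}: a linear-Gaussian counter-model together with soundness of the do-calculus. The paper's route keeps the $\mathcal{R}_2$ commutativity self-contained and combinatorial, at the price of a few extra lines.

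In your $\Gb$ case, three small points.
\begin{itemize}
\item The induction is unnecessary. Choose $C$ as the first collider of the truncated path that is not an ancestor of $\X\cup\W$ \emph{in $\Ge$}. The graph matters here, since a collider may reach $\W$ in $\Gb$ only through $\Zd$. With this choice, every collider of the spliced path precedes $C$ and is already activated.
\item The same choice settles a check you did not list. The vertices of the directed segment from $C$ to its nearest $\Zd$-vertex become non-colliders, so they must avoid $\W$. They do: that segment lies in $\Ge$, and a $\W$-vertex on it would put $C$ in $\Anc(\W,\Ge)$.
\item The walk-to-path repair is routine. Cut at the first vertex of the prefix that lies on the segment; that vertex becomes a non-collider outside $\X\cup\W$. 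The paper leaves this point implicit.
\end{itemize}
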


\begin{proof}
By symmetry between $\Zu$ and $\Zd$, we only need to prove $\ref{th:r2_commute:1:1} \Leftrightarrow \ref{th:r2_commute:1:2}$. Let us prove the two implications:
\begin{itemize}
    \item[$\ref{th:r2_commute:1:1} \Rightarrow \ref{th:r2_commute:1:2}$:] We procede by contraposition, let us assume that \(\Y \notind_{\Ge} \Zu,\Zd \mid \X,\W\). There exists a path $\pi$ from $\Y$ to $\Zu \cup \Zd$ in $\Ge$ that is $\X,\W$-active. Without loss of generality we assume that $\pi$ encounters $\Zu \cup \Zd$ only once at its last vertex. We distinguish two cases:
    \begin{itemize}
        \item If $\pi$ encounters $\Zu$. Since $\Ge \subseteq \Gb$, we know that $\pi$ exists in $\Gb$. Moreover, all its non-collider are not in $\X,\W,\Zd$ since $\pi$ is $\X,\W$ active in $\Ge$ and since $\pi$ does not encounter $\Zd$. Moreover, since $\pi$ is $\X,\W$-active in $\Ge \subseteq \Gb$ every collider of $\pi$ belongs to $\Anc(\W, \Ge) \subseteq \Anc(\W, \Gb) \subseteq \Anc(\X \cup \W \cup \Zd, \Gb)$. Therefore, $\pi$ is $\X,\W,\Zd$-active in $\Gb$. Therefore  \(\Y \notind_{\Gb} \Zu \mid \X,\W,\Zd\)
        
        \item If $\pi$ encounters $\Zd$. Necessarily, $\pi$ exists in $\Ga$ because $\pi$ does not encounter $\Zu$. Moreover, all its non-collider are not in $\X,\W,\Zu$ since $\pi$ is $\X,\W$-active in $\Ge$ and since $\pi$ does not encounter $\Zu$. Let $C$ be a collider of $\pi$. Since $\pi$ is $\X,\W$-active in $\Ge$, then $C \in \Anc(\W, \Ge)$. Thus there exists a directed path  $\pi'$ from $C$ to $\W$ in $\Ge$. Since $\Ge$ does not contain any outgoing arrow from any vertex in $\Zu$, we know that $\pi'$ does not encounter $\Zu$, therefore, $\pi'$ exists in $\Ga$ and $C \in \Anc(\W, \Ga).$ Therefore, \(\Y \notind_{\Ga} \Zd \mid \X,\Zu,\W\).
    \end{itemize}
    Therefore, \(\Y \notind_{\Gb} \Zu \mid \X,\W,\Zd\) or \(\Y \notind_{\Ga} \Zd \mid \X,\Zu,\W\).
    \item[$\ref{th:r2_commute:1:1} \Leftarrow \ref{th:r2_commute:1:2}$:] We proceed by contraposition, let us assume that \(\Y \notind_{\Gb} \Zu \mid \X,\W,\Zd\) or \(\Y \notind_{\Ga} \Zd \mid \X,\Zu,\W\). We distinguish two cases:
    \begin{itemize}
        \item If \(\Y \notind_{\Gb} \Zu \mid \X,\W,\Zd\), then there exists a path $\pi$ in $\Gb$ that connects $\Y$ and $\Zu$ under $\X,\W,\Zd$. Since $\pi$ is $\X,\W,\Zd$-active in $\Gb$, we know that all non collider of $\pi$ are not in $\X,\W,\Zd$. Thus, any encounter of $\pi$ and $\Zd$ is a collider. If such an encounter exists, we restrict $\pi$ to its subpath from $\Y$ to $\Zd$, i.e., we set $\pi \gets \pi_{[\Y, \Zd]}$. Thus $\pi$ exists in $\Ge = {\Gb}_{\underline{\Zd}}$ and all its non collider are not in $\X,\W$. For any collider $C$ on $\pi$, we know that $C \in \Anc(\W \cup \Zd, \Gb)$. We distinguish two cases:
        \begin{itemize}
            \item If all colliders $C$ belong to $\Anc(\W, \Ge)$, then $\pi$ is $\X,\W$-active in $\Ge$. Therefore \(\Y \notind_{\Ge} \Zu,\Zd \mid \X,\W\).
            
            \item Otherwise, let $C$ be the first collider of $\pi$ that is not an ancestor of $\W$ in $\Ge$. Necessarily, since $\Ge = {\Gb}_{\underline{\Zd}}$, we know that $C \in \Anc(\Zd, \Ge \setminus \W)$. Thus, $\Ge \setminus \W$ contains $\pi'$ a directed path from $C$ to $\Zd$. We consider the path $\pi_{[\Y, C]} \cup \pi'$. This path exists in $\Ge$ and is $\X,\W$-active by construction. Therefore \(\Y \notind_{\Ge} \Zu,\Zd \mid \X,\W\).
        \end{itemize}
        In all cases, we conclude that \(\Y \notind_{\Ge} \Zu,\Zd \mid \X,\W\).
        
        \item If \(\Y \notind_{\Ga} \Zd \mid \X,\Zu,\W\), then there exists a path $\pi$ in $\Ga$ that connects $\Y$ and $\Zd$ under $\X,\Zu,\W$. Since $\Ga$ have the mutilation $\overline{\Zu}$, we know that any potential encounter of $\pi$ and $\Zu$ is a fork. However, since $\pi$ is $\X,\Zu,\W$-active in $\Ga$, we can conclude that $\pi \cap \Zu = \emptyset$. Therefore, $\pi$ exists in $\Ge$ and all its non-collider are not in $\X,\W$. Let $C$ be a collider of $\pi$. We know that $\Ga$ contains a directed path $\pi'$ from $C$ to $W$. Moreover, due to the mutilation  $\overline{\Zu}$, $\pi'$ does not encounter $\Zu$. Therefore $\pi'$ exists in $\Ge$. Thus $C \in \Anc(W, \Ge)$. Therefore, \(\Y \notind_{\Ge} \Zu,\Zd \mid \X,\W\).
    \end{itemize}
    Therefore, \(\Y \notind_{\Ge} \Zu,\Zd \mid \X,\W\).
\end{itemize}
\end{proof}

\begin{restatable}[]{lemma}{thRtwocommute:2}
\label{th:r2_commute:2}
    Let $\G=(\V_\G,\E_\G)$ be an ADMG, and let $\Y,\X,\W,\Zu,\Zd$ be pairwise disjoint subsets of~$\V_\G$, with $\Y \neq \emptyset$. The following proposition are equivalent:
    \begin{enumerate}
        \item  \(\Y \ind_{\Gb} \Zu \mid \X,\W,\Zd\) and \(\Y \ind_{\Gc} \Zd \mid \X,\W,\Zu\) \label{th:r2_commute:2:1}
        \item  \(\Y \ind_{\Ga} \Zd \mid \X,\Zu,\W\) and  \(\Y \ind_{\Gd} \Zu \mid \X,\Zd,\W\) \label{th:r2_commute:2:2}
    \end{enumerate}
\end{restatable}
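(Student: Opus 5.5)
We would argue directly on the d-separation statements by contraposition, in the style of Lemma~\ref{th:r2_commute:1}. Theorem~\ref{th:cycle_do_calculus} does not seem to apply. Neither the pair $P(\y\mid\Do(\x),\Do(\zu),\w,\zd)$, $P(\y\mid\Do(\x),\Do(\zd),\w,\zu)$ nor any pair of opposite corners of the quadrilateral in Figure~\ref{fig:R2_conditions} other than the diagonal $A$--$C$ is joined by a single syntactic rule. The proof therefore compares active paths in $\Ga,\Gb,\Gc,\Gd$. Throughout we use the inclusions $\Ga\subseteq\Gc$ and $\Gd\subseteq\Gb$. We also use the fact that in $\Ga$ (resp.\ $\Gd$) the vertices of $\Zu$ (resp.\ $\Zd$) have no incoming edges.

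\textbf{Direction $\ref{th:r2_commute:2:1}\Rightarrow\ref{th:r2_commute:2:2}$.} Assume $\Y\notind_{\Ga}\Zd\mid\X,\Zu,\W$, with an active path $\pi$ stopped at its first vertex in $\Zd$. A vertex of $\Zu$ on $\pi$ has no incoming edge in $\Ga$, so it would be a conditioned non-collider and block $\pi$; hence $\pi\cap\Zu=\emptyset$. Then $\pi$ lives in $\Gc$ and its non-colliders avoid $\X,\W,\Zu$. Its colliders are ancestors of $\X\cup\Zu\cup\W$ in $\Ga\subseteq\Gc$. So $\pi$ is active in $\Gc$ given $\X,\W,\Zu$, and condition \ref{th:r2_commute:2:1} fails. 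The symmetric argument, exchanging $\Zu$ and $\Zd$, turns a violation for $\Gd$ into a violation for $\Gb$.

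\textbf{Direction $\ref{th:r2_commute:2:2}\Rightarrow\ref{th:r2_commute:2:1}$.} By symmetry, assume $\Y\notind_{\Gb}\Zu\mid\X,\W,\Zd$ with an active path $\pi$ in $\Gb=\G_{\overline{\X}\,\underline{\Zu}}$. The aim is to produce a violation in $\Gd$ or in $\Ga$.
\begin{enumerate}
\item Every vertex of $\Zd$ on $\pi$ is a collider. If $\pi$ meets $\Zd$, truncate it at its first $\Zd$-vertex; that vertex is entered by an arrowhead. This yields a path from $\Y$ to $\Zd$ whose last edge points into $\Zd$, which is a natural candidate for a violation in $\Ga$.
\item Otherwise $\pi$ avoids $\Zd$ and ends in $\Zu$, and we try to certify it in $\Gd$. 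Its edges survive in $\Gd$, since the only edges removed are those into $\Zd$. Its non-colliders are fine. A collider $C$, however, may reach $\W$ only through directed paths that pass through $\Zd$, and these are cut in $\Gd$.
\item Take the first collider $C$ along $\pi$ that fails to be an ancestor of $\W$ in $\Gd$. Follow a shortest directed path $\pi'$ in $\Gb$ from $C$ to $\Zd\cup\Zu$, and form $\pi_{[\Y,C]}\cup\pi'$; this is the redirection step used in Lemma~\ref{th:r2_commute:1}.
\item If $\pi'$ first hits $\Zd$, the new path ends into $\Zd$. Together with item 1, we must check that it is active in $\Ga$ given $\X,\Zu,\W$: it contains no $\Zu$ vertex, and its earlier colliders are ancestors of $\W$ avoiding $\Zu$.
\item If $\pi'$ first hits $\Zu$, the new path ends with an arrowhead into $\Zu$. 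It is then active in $\Gd$, where only outgoing edges of $\Zu$ are removed.
\end{enumerate}

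The main obstacle is the bookkeeping in the last step. The redirected path must remain simple, which we would handle by taking $C$ minimal and $\pi'$ shortest and then re-truncating at the first repeated vertex. We must also check that earlier colliders keep ancestor paths to $\W$ in the target graph, in particular that those ancestor paths avoid $\Zu$ when the target is $\Ga$. If a path to $\W$ must pass through $\Zu$, we would instead stop at that $\Zu$ vertex and obtain a violation in $\Gd$. So the case split is on which of $\Zu$, $\Zd$, $\W$ is reached first along directed paths from colliders, and we would organise the proof as an induction on the number of colliders of $\pi$.
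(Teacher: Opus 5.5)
Your proposal is correct and is essentially the paper's proof. One direction is subgraph monotonicity ($\Ga\subseteq\Gc$, $\Gd\subseteq\Gb$). The other takes an active path in $\Gb$, splits on whether it meets $\Zd$, truncates at its first $\Zd$-vertex, and redirects the first bad collider along a directed path. The only structural difference is that the paper disposes of the $\Gc$-violation through Lemma~\ref{th:r2_commute:1}, whereas you use the $\Zu\leftrightarrow\Zd$ symmetry of the statement, which is valid and simpler.

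When writing it out, apply the same redirection in item~1 as well. There a collider may be an ancestor of $\W\cup\Zd$ in $\Gb$ only through $\Zd$, which is not conditioned on and has its out-edges cut in $\Ga$. Directed paths in $\Gb$ can never pass through $\Zu$, so your $\Zu$ contingency is vacuous. Also keep $C$ as the first collider outside $\Anc(\W,\Gd)$, since that choice is what forces $\pi'$ to avoid $\W$.
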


\begin{proof}
Let us prove the two implications:
\begin{itemize}
    \item[$\ref{th:r2_commute:2:1} \Rightarrow \ref{th:r2_commute:2:2}$:] Let us assume that  \(\Y \ind_{\Gb} \Zu \mid \X,\W,\Zd\) and \(\Y \ind_{\Gc} \Zd \mid \X,\W,\Zu\). Since $\Ga \subseteq \Gc$ and $\Gd \subseteq \Gb$, we can conclude that  \(\Y \ind_{\Ga} \Zd \mid \X,\Zu,\W\) and  \(\Y \ind_{\Gd} \Zu \mid \X,\Zd,\W\).
    \item[$\ref{th:r2_commute:2:1} \Leftarrow \ref{th:r2_commute:2:2}$:] We proceed by contraposition. Let us assume that \(\Y \notind_{\Gb} \Zu \mid \X,\W,\Zd\) and \(\Y \notind_{\Gc} \Zd \mid \X,\W,\Zu\). We distinguish two cases:
    \begin{itemize}
        \item If \(\Y \notind_{\Gb} \Zu \mid \X,\W,\Zd\), then there exists a path $\pi$ from $\Y$ to $\Zu$ in $\Gb$ that is $\X,\W,\Zd$-active. Without loss of generality, we assume that $\pi$ encounters $\Zu$ only at its last vertex. We distinguish two cases:
        \begin{itemize}
            \item If $\pi \cap \Zd = \emptyset$, then $\pi$ exists in $\Gd = {\Gb}_{\overline{\Zd}}$. Since, $\pi$ is $\X,\W,\Zd$-active in $\Gb$, we know that all its non collider are not in $\X,\W,\Zd$. For any collider $C$, we know that $C \in \Anc(\W \cup \Zd, \Gb)$. We distinguish two cases:
            \begin{itemize}
                \item If all collider $C \in \Anc(\W, \Gd)$, then $\pi$ is $\X,\Zd,\W$-active in $\Gd$. Therefore \(\Y \notind_{\Gd} \Zu \mid \X,\Zd,\W\).
                
                \item Otherwise, let $C$ be the first collider of $\pi$ that is an ancestor of $\Zd$ in $\Gb$. Let $\pi'$ be a directed path from $C$ to $\Zd$ in $\Gb$. We consider the path $\pi_{[\Y,C]} \cup \pi'$.
                $\pi \cap \Zd = \emptyset$, thus $\pi_{[\Y,C]}$ exists in $\Ga$.
                Moreover, $\pi'$ exists in $\Gb$ and $\C \notin \Zu$, thus $\pi' \cap \Zu = \emptyset$. Moreover, by construction, the mutilation $\underline{\Zd}$ does not affect $\pi'$. Therefore $\pi'$ exists in $\Ga$. Thus, $\pi_{[\Y,C]} \cup \pi'$ exists in $\Ga$. 
                All its non collider are not in $\X,\Zu,\W$. Let $V$ be a collider of $\pi_{[\Y,C]} \cup \pi'$, by construction it is a collider before $C$ in $\pi$. Therefore $V$ is an ancestor of $\W$ in $\Gb$. Thus there exists a directed path $\pi_V$ from $V$ to $\W$ in $\Gb$. Since $C$ is the first collider that is an ancestor of $\Zd$ in $\Gb$, we know that $\pi_V \cap \Zd = \emptyset$. Similarly, since $V \notin \Zu$ and $\pi_V$ exists in $\Gb$, we know that $\pi_V \cap \Zu = \emptyset$. Therefore $\pi_V$ exists in $\Ga$. Therefore $V \in \Anc(\W, \Ga)$. Therefore, \(\Y \notind_{\Ga} \Zd \mid \X,\Zu,\W\)
            \end{itemize}
            Therefore, \(\Y \notind_{\Ga} \Zd \mid \X,\Zu,\W\) or \(\Y \notind_{\Gd} \Zu \mid \X,\Zd,\W\).
            
            \item Otherwise, $\pi \cap \Zd \neq \emptyset$. Since, $\pi$ is $\X,\W,\Zd$-active in $\Gb$, we know that every encounter of $\pi$ and $\Zd$ is a fork. Moreover, since $\pi$ encounter $\Zu$ only at its last vertex, we know that $\pi_{[\Y, \Zd]} \cap \Zu = \emptyset$. Therefore  $\pi_{[\Y, \Zd]}$ exists in $\Ga$ and all its non-collider are not in $\X,\Zu,\W$. We distinguish two cases:
            \begin{itemize}
                \item If all colliders of $\pi_{[\Y, \Zd]}$ are ancestors of $\W$ in $\Ga$, then \(\Y \notind_{\Ga} \Zd \mid \X,\Zu,\W\).
                
                \item Otherwise, let $C$ be the first collider of $\pi_{[\Y, \Zd]}$ that is not an ancestor of $\W$ in $\Ga$. Necessarily, $\Gb$ contains a directed path $\pi'$ from $C$ to $\Zd$. Without loss of generality, we assume that $\pi'$ encounters $\Zd$ only at its last vertex.  Since $\Gb$ have the mutilation $\underline{\Zu}$ and $C \notin \Zu$, we know that $\pi'$ does not encounter $\Zu$. Therefore, $\pi'$ exists in $\Ga$. We consider the path ${\pi_{[\Y, \Zd]}}_{[\Y, C]} \cup \pi'$. It exists in $\Ga$, and by construction it is $\X, \Zu, \W$-active. Therefore, \(\Y \notind_{\Ga} \Zd \mid \X,\Zu,\W\).
            \end{itemize}
            Therefore, in all cases, \(\Y \notind_{\Ga} \Zd \mid \X,\Zu,\W\).
        \end{itemize}
        Therefore, in all cases, \(\Y \notind_{\Ga} \Zd \mid \X,\Zu,\W\) or \(\Y \notind_{\Gd} \Zu \mid \X,\Zd,\W\).
        
        \item If \(\Y \notind_{\Gc} \Zd \mid \X,\W,\Zu\), then we do not have \(\Y \ind_{\Gc} \Zd \mid \X,\W,\Zu\) and \(\Y \ind_{\Gd} \Zu \mid \X,\Zd,\W\). Therefore, by Lemma~\ref{th:r2_commute:1}, we don't have \(\Y \ind_{\Ga} \Zd \mid \X,\Zu,\W\) and \(\Y \ind_{\Gb} \Zu \mid \X,\W,\Zd\). Thus we distinguish two cases:
        \begin{itemize}
            \item  \(\Y \notind_{\Ga} \Zd \mid \X,\Zu,\W\) and therefore, \(\Y \notind_{\Ga} \Zd \mid \X,\Zu,\W\) or \(\Y \notind_{\Gd} \Zu \mid \X,\Zd,\W\)
            
            \item  \(\Y \notind_{\Gb} \Zu \mid \X,\W,\Zd\) and therefore \(\Y \notind_{\Ga} \Zd \mid \X,\Zu,\W\) or \(\Y \notind_{\Gd} \Zu \mid \X,\Zd,\W\) by the previous reasoning.
        \end{itemize}
        Therefore  \(\Y \notind_{\Ga} \Zd \mid \X,\Zu,\W\) or \(\Y \notind_{\Gd} \Zu \mid \X,\Zd,\W\).
    \end{itemize}
    
    Therefore, in all cases, \(\Y \notind_{\Ga} \Zd \mid \X,\Zu,\W\) or \(\Y \notind_{\Gd} \Zu \mid \X,\Zd,\W\).
    
\end{itemize}
\end{proof}

\subsection{Internal Commutativity of $\mathcal{R}_3$}

This section demonstrates Theorem~\ref{th:r3_commute}. Definition~\ref{def:R3:mutilated_graphs} defines the mutilation graphs used in the proof of Theorem~\ref{th:r3_commute} and Figure~\ref{fig:R3_conditions} represents all the possible rules.

\begin{definition}[Mutilated Graphs for Rule 3 Transitions]
\label{def:R3:mutilated_graphs}
Let $\G=(\V_\G,\E_\G)$ be an ADMG, and let $\Y,\X,\W,\Zu,\Zd$ be pairwise disjoint subsets of~$\V_\G$, with $\Y \neq \emptyset$.
We define the following mutilated graphs used to apply Rule~3 of the do-calculus:
\begin{align*}
    \Ga &\coloneqq \G_{\overline{\X\Zu}\,\overline{\Zd \setminus \Anc(\W, \G_{\overline{\X \Zu}})}}, &
    \Gb &\coloneqq \G_{\overline{\X}\,\overline{\Zu \setminus \Anc(\W, \G_{\overline{\X }})}}, &
    \Gc &\coloneqq \G_{\overline{\X}\,\overline{\Zd \setminus \Anc(\W, \G_{\overline{\X }})}}, \\
    \Gd &\coloneqq \G_{\overline{\X\Zd}\,\overline{\Zu \setminus \Anc(\W, \G_{\overline{\X \Zd}})}}, &
    \Ge &\coloneqq \G_{\overline{\X}\,\overline{(\Zu\Zd) \setminus \Anc(\W,\G_{\overline{\X}})}}.
\end{align*}
\end{definition}

\begin{figure}[ht!]
    \centering
    \begin{tikzpicture}[
        every node/.style={fill=white, inner sep=5pt, font=\small},
        every edge/.append style={draw, thick},
        >=latex,
        scale=3.2
    ]
        \node (A) at (0,2) {\(P(\y \mid \Do(\x), \Do(\zu, \zd), \w)\)};
        \node (B) at (2,2) {\(P(\y \mid \Do(\x), \Do(\zu), \w)\)};
        \node (C) at (2,0) {\(P(\y \mid \Do(\x), \w)\)};
        \node (D) at (0,0) {\(P(\y \mid \Do(\x), \Do(\zd), \w)\)};

        \draw[-] (A) -- node[midway, above, align=center]
            {\(\Y \ind_{\Ga} \Zd \mid \X,\Zu,\W\)} (B);
        \draw[-] (B) -- node[midway, right, align=center]
            {\(\Y \ind_{\Gb} \Zu \mid \X,\W\)} (C);
        \draw[-] (C) -- node[midway, below, align=center]
            {\(\Y \ind_{\Gc} \Zd \mid \X,\W\)} (D);
        \draw[-] (D) -- node[midway, left, align=center]
            {\(\Y \ind_{\Gd} \Zu \mid \X, \Zd,\W\)} (A);
        \draw[-] (A) -- node[midway, sloped, above, align=center]
            {\(\Y \ind_{\Ge} \Zu,\Zd \mid \X,\W\)} (C);
    \end{tikzpicture}
    \caption{\textbf{Graphical conditions of Rule~3 applications.}  
    Each node represents an expression of the form 
    \(P(\Y \mid \Do(\X), \Do(\cdot), \W, \cdot)\), 
    and each edge corresponds to a valid application of Rule~3 
    of the do-calculus under the indicated independence condition.  
    The mutilated graphs used in each condition are denoted by 
    \(\Ga,\Gb,\Gc,\Gd,\Ge\), defined in Definition~\ref{def:R3:mutilated_graphs}.}
    \label{fig:R3_conditions}
\end{figure}

\thRtrois*

\begin{proof}
    Graphically, using Definition~\ref{def:R3:mutilated_graphs} and Figure~\ref{fig:R3_conditions}, the theorem corresponds to the following properties:
    \begin{enumerate}
        \item \(\Y \ind_{\Ga} \Zd \mid \X,\Zu,\W\) and \(\Y \ind_{\Gb} \Zu \mid \X,\W\) $\Leftrightarrow$ \(\Y \ind_{\Ge} \Zu,\Zd \mid \X,\W\) $\Leftrightarrow$ \(\Y \ind_{\Gc} \Zd \mid \X,\W\) and \(\Y \ind_{\Gd} \Zu \mid \X,\Zd,\W\)
        \item \(\Y \ind_{\Gb} \Zu \mid \X,\W\) and \(\Y \ind_{\Gc} \Zd \mid \X,\W\) $\Rightarrow$ \(\Y \ind_{\Ga} \Zd \mid \X,\Zu,\W\) and  \(\Y \ind_{\Gd} \Zu \mid \X,\Zd,\W\) 
    \end{enumerate}

The first point is a direct consequence of Lemma~\ref{th:r3_commute:1} and the second is a direct consequence of Lemma~\ref{lemma:R3:implications}.

\end{proof}

\begin{restatable}{lemma}{thRthreecommute:1}
\label{th:r3_commute:1}
    Let $\G$ be an ADMG over $\V$ and let $\Y,\X,\W,\Zu,\Zd$ be pairwise disjoint subsets of~$\V$. The following proposition are equivalent:
    \begin{enumerate}
        \item \(\Y \ind_{\Ga} \Zd \mid \X,\Zu,\W\) and \(\Y \ind_{\Gb} \Zu \mid \X,\W\) \label{th:r3_commute:1:1}
        \item  \(\Y \ind_{\Ge} \Zu,\Zd \mid \X,\W\) \label{th:r3_commute:1:2}
        \item \(\Y \ind_{\Gc} \Zd \mid \X,\W\) and \(\Y \ind_{\Gd} \Zu \mid \X,\Zd,\W\) \label{th:r3_commute:1:3}
    \end{enumerate}
\end{restatable}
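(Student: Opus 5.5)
The plan is to use the shortcut theorem (Theorem~\ref{th:cycle_do_calculus}) for the implications that lead into~\ref{th:r3_commute:1:2}, and a direct d-separation argument for the implications that lead out of it. The pattern is the one used for Theorem~\ref{thm:r1_as_r23}, and it is suggested by the triangles in Figure~\ref{fig:R3_conditions}.

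\emph{Step 1: \ref{th:r3_commute:1:1}$\Rightarrow$\ref{th:r3_commute:1:2} and \ref{th:r3_commute:1:3}$\Rightarrow$\ref{th:r3_commute:1:2}.}
Condition~\ref{th:r3_commute:1:1} says that the derivation $\Rd_3(\Zd)\,\Rd_3(\Zu)$ is valid in $\G$. It maps $P(\y\mid\Do(\x),\Do(\zu,\zd),\w)$ to $P(\y\mid\Do(\x),\w)$. The single rule $\Rd_3(\Zu,\Zd)$ maps the first expression to the second syntactically, since the corresponding edge exists in $\mathcal D[(\V,\emptyset)]$. Theorem~\ref{th:cycle_do_calculus} then gives that $\Rd_3(\Zu,\Zd)$ is valid in $\G$, and this is exactly condition~\ref{th:r3_commute:1:2}. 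The same argument applied to the other side of the square, $\Rd_3(\Zu)\,\Rd_3(\Zd)$, gives \ref{th:r3_commute:1:3}$\Rightarrow$\ref{th:r3_commute:1:2}.

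\emph{Step 2: \ref{th:r3_commute:1:2}$\Rightarrow$\ref{th:r3_commute:1:1}.} The converse \ref{th:r3_commute:1:2}$\Rightarrow$\ref{th:r3_commute:1:3} then follows by exchanging $\Zu$ and $\Zd$. I argue by contraposition, and I first record one structural fact.

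Removing the edges into nodes that are not ancestors of $\W$ does not change the set of ancestors of $\W$, because every directed path into $\W$ only visits ancestors of $\W$. Hence
\[
\Anc(\W,\Ge)=\Anc(\W,\Gb)=\Anc(\W,\G_{\overline{\X}}).
\]
Similarly, $\Anc(\W,\Ga)=\Anc(\W,\G_{\overline{\X\Zu}})$.

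\emph{Part (a): $\Y\notind_{\Gb}\Zu\mid\X,\W$.} Take an active path $\pi$ in $\Gb$ and orient it from $\Y$ to $\Zu$. Truncate it at its first vertex in $\Zu\cup(\Zd\setminus\Anc(\W,\G_{\overline{\X}}))$.
\begin{itemize}
\item If that vertex is in $\Zu$, the truncated path uses no edge deleted in $\Ge$. Its colliders keep their ancestry by the fact above, so it is active in $\Ge$.
\item If it is a vertex $D\in\Zd$ with $D\notin\Anc(\W,\G_{\overline{\X}})$, then $D$ is a non-collider on $\pi$. A collider at $D$ would have to be an ancestor of $\W$, which it is not. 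If the edge entering $D$ from the $\Y$ side carries a tail at $D$, the prefix already connects $\Y$ to $\Zd$ in $\Ge$. If it carries an arrowhead at $D$, then $\pi$ leaves $D$ by a tail. In that case I follow the directed continuation, either to $\Zu$ or to the first collider. Such a collider is a descendant of $D$ and an ancestor of $\W$, which makes $D$ an ancestor of $\W$, a contradiction. So this directed continuation must reach $\Zu$ without colliders.
\end{itemize}
The remaining task in this branch is to splice a prefix that ends with a tail into an active $\Ge$-path. This is the same kind of rerouting as in Lemmas~\ref{th:r2_commute:1} and~\ref{th:r2_commute:2}.

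\emph{Part (b): $\Y\notind_{\Ga}\Zd\mid\X,\Zu,\W$.} Any $\Zu$-vertex on the path is a non-collider, because the edges into $\Zu$ are removed in $\Ga$. Conditioning on $\Zu$ then blocks the path at that vertex, unless the vertex is an endpoint. So I truncate at the first $\Zu\cup\Zd$ vertex. The colliders are ancestors of $\W\cup\Zu$ in $\Ga$, and a directed path from a collider to $\Zu$ is itself an active continuation to $\Zu$. What remains is to show that the mutilations on $\Zd$ agree. The relevant comparison is $\Zd\setminus\Anc(\W,\G_{\overline{\X\Zu}})\supseteq\Zd\setminus\Anc(\W,\G_{\overline{\X}})$, so $\Ga$ cuts at least as many edges into $\Zd$ as $\Ge$ does, apart from the extra cuts into $\Zu$.

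\emph{Main obstacle.} I expect the main obstacle to be this case analysis in Step~2. The sets of nodes whose incoming edges are cut differ across $\Ga$, $\Gb$ and $\Ge$, and the conditioning sets also differ: $\Zu$ is conditioned on in $\Ga$ but not in $\Ge$. The key tools are the ancestor-invariance fact and the first-offending-vertex truncation with rerouting. I would do the Part~(a) branch where $D\in\Zd$ is entered by an arrowhead first, since that is where the rerouting is least obvious.
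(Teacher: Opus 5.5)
Your architecture is the paper's: Theorem~\ref{th:cycle_do_calculus} gives the implications into condition~2. The converse 2$\Rightarrow$1 is proved by contraposition, split according to whether the $\Ga$- or the $\Gb$-separation fails, and the $\Zu\leftrightarrow\Zd$ symmetry handles the rest. Step~1, the ancestor-invariance fact, the Part~(a) branch ending in $\Zu$, and the branch where the $\Y$-side edge has a tail at $D$ (the paper's $\cdots\leftarrow\widetilde Z\leftarrow\cdots$ case) are all correct.

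\textbf{The gap.} It lies in the branch you flag as the main obstacle: $D\in\Zd\setminus\Anc(\W,\G_{\overline{\X}})$ entered by an arrowhead, which you leave as an unspecified ``splicing/rerouting'' task. No rerouting is needed, because this branch is empty. You have already shown that $\pi$ continues from $D$ as a directed path $D\to\cdots\to Z$ with $Z\in\Zu$, and this path lies in $\Gb\subseteq\G_{\overline{\X}}$. Since $D\notin\Anc(\W,\G_{\overline{\X}})$, its descendant $Z$ cannot be an ancestor of $\W$ in $\G_{\overline{\X}}$ either. Hence $Z\in\Zu\setminus\Anc(\W,\G_{\overline{\X}})$. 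But $\Gb$ deletes every edge into such vertices, whereas $\pi$ reaches $Z$ through an edge $\to Z$. This contradicts $\pi$ being a path of $\Gb$, and it is exactly how the paper disposes of its $\cdots\to\widetilde Z\to\cdots$ case. The analogy with Lemmas~\ref{th:r2_commute:1} and~\ref{th:r2_commute:2} is misleading. There, rerouting is forced because the target set sits inside the conditioning set, so colliders can be active merely by being ancestors of it. Here the conditioning set is only $\X,\W$, and the relevant tool is the $\overline{\Zu\setminus\Anc(\W,\G_{\overline{\X}})}$ mutilation built into $\Gb$.

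\textbf{Part~(b).} Your remark that a directed path from a collider to $\Zu$ is ``an active continuation'' is not justified as stated, since its last edge may be deleted in $\Ge$. However, the situation never arises. $\Ga$ has no edges into $\X\cup\Zu$, so no vertex outside $\X\cup\Zu$ is an ancestor of $\X\cup\Zu$ in $\Ga$. A collider carries arrowheads, so it cannot lie in $\X\cup\Zu$. Hence every collider is an ancestor of $\W$ in $\Ga$. The path also avoids $\Zu$ entirely, because its endpoints lie in $\Y$ and $\Zd$. Your set comparison, together with $\Zu\supseteq\Zu\setminus\Anc(\W,\G_{\overline{\X}})$, gives $\Ga\subseteq\Ge$. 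So the path exists in $\Ge$, its colliders remain ancestors of $\W$ there, and its non-colliders avoid $\X\cup\W$. State this inclusion explicitly and Part~(b) coincides with the paper's argument.
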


\begin{proof}
By symmetry between $\Zu$ and $\Zd$, we only need to prove $\ref{th:r3_commute:1:1} \Leftrightarrow \ref{th:r3_commute:1:2}$. Let us prove the two implications:
\begin{itemize}
    \item[$\ref{th:r3_commute:1:1} \Rightarrow \ref{th:r3_commute:1:2}$:] It is a direct consequence of Theorem~\ref{th:cycle_do_calculus}.
    \item[$\ref{th:r3_commute:1:1} \Leftarrow \ref{th:r3_commute:1:2}$:] We will prove the contrapositive. Let us assume that  \(\Y \notind_{\Ga} \Zd \mid \X,\Zu,\W\) or \(\Y \notind_{\Gb} \Zu \mid \X,\W\). Let us consider the two cases:
    \begin{itemize}
        \item If \(\Y \notind_{\Ga} \Zd \mid \X,\Zu,\W\), then $\Ga$ contains a path $\pi$ from $\Y$ to $\Zd$ that is $\X,\Zu,\W$-active. Since $\Zu \cup (\Zd \setminus \Anc(\W, \G_{\overline{\X \Zu}})) \supseteq (\Zu \cup \Zd) \setminus \Anc(\W,\G_{\overline{\X}})$, we know that $\Ga \subseteq \Ge$. As a result, $\pi$ exists in $\Ge$. Let $V$ be a non collider of $\pi$. Since, $\pi$ is $\X,\Zu,\W$-active in $\Ga$, we know that $W \notin \X, \W$. Let $C$ be a collider of $\pi$. Since $\pi$ is $\X,\Zu,\W$-active in $\Ga$, we know that $C$ is an ancestor of $\X,\Zu,\W$ in $\Ga$. Since $\Ga$ contains the mutilation $\overline{\Zu}$, we know that $C$ is an ancestor of $\X, \W$ in $\Ga \subseteq \Ge$. Therefore, $\pi$ is active in $\Ge$, therefore \(\Y \notind_{\Ge} \Zu,\Zd \mid \X,\W\).
        
        \item If \(\Y \notind_{\Gb} \Zu \mid \X,\W\), then $\Gb$ contains a path $\pi$ from $\Y$ to $\Zu$ that is $\X, \W$-active. We distinguish two cases:
        \begin{itemize}
            \item If $\pi$ exists in $\Ge$. Then, every non collider of $\pi$ is not in $\X,\W$ as $\pi$ is $\X,\W$-active in $\Gb$. Let $C$ be a collider of $\pi$. Since $\Gb$ contains the mutilation $\overline{\X}$, we know that $C$ is an ancestor of $\W$ in $\Gb \subseteq \G_{\overline{\X}}$. Thus $C$ is an ancestor of $\W$ in $\G_{\overline{\X}}$. Thus $\C$ is an ancestor of $\W$ in $ \G_{\overline{\X}\,\overline{(\Zu\Zd) \setminus \Anc(\W,\G_{\overline{\X}})}} = \Ge $. Therefore, \(\Y \notind_{\Ge} \Zu,\Zd \mid \X,\W\).
            
            \item Otherwise, necessarily $\pi \cap \Zd \neq \emptyset$. Let ${\widetilde Z}$ be the first encounter of $\pi$ and $\Zd$ that prevents $\pi$ from existing in $\Ge$. By definition, ${\widetilde Z}$ belongs to $\Zd \setminus \Anc(\W, \G_{\overline{\X}})$ and ${\widetilde Z}$ have an incoming arrow in $\pi$. Thus, ${\widetilde Z}$ is not a fork and not an ancestor of $\W$ in $\G_{\overline{\X}}$. Thus, ${\widetilde Z}$ is not a collider because, otherwise, $\pi$ would be blocked in $\Gb$. We distinguish two potential cases:
            \begin{itemize}
                \item If $\pi = \cdots \rightarrow {\widetilde Z} \rightarrow \cdots$. Since all colliders on $\pi$ are ancestors of $\W$ in $\G_{\overline{\X}}$, the subpath $\pi_{[{\widetilde Z}, \Zu]}$ cannot contain a collider; otherwise, ${\widetilde Z}$ would be an ancestor of $\W$ in $\G_{\overline{\X}}$. Therefore, $\pi_{[{\widetilde Z}, \Zu]}$ is a directed path whose last vertex is not an ancestor of $\W$ in $\G_{\overline{\X}}$, contradicting the existence of $\pi$ in $\Gb$. Hence, this case is impossible.
                \item If $\pi = \cdots \leftarrow {\widetilde Z} \leftarrow \cdots$. By definition, the subpath $\pi_{[\Y, {\widetilde Z}]}$ exists in $\Ge$. Since all colliders on $\pi$ are ancestors of $\W$ in $\G_{\overline{\X}}$, the subpath $\pi_{[\Y, {\widetilde Z}]}$ cannot contain a collider; otherwise, ${\widetilde Z}$ would be an ancestor of $\W$ in $\G_{\overline{\X}}$. Moreover, every vertex on $\pi_{[\Y, {\widetilde Z}]}$ does not belong to $\X,\W$. Therefore, $\pi_{[\Y, {\widetilde Z}]}$ is $\X,\W$-active in $\Ge$. Therefore, \(\Y \notind_{\Ge} \Zu,\Zd \mid \X,\W\).
            \end{itemize}
        \end{itemize}
    \end{itemize}
    In all cases, we have shown that \(\Y \notind_{\Ge} \Zu,\Zd \mid \X,\W\).
\end{itemize}
\end{proof}

\begin{restatable}{lemma}{thRtimplications}
\label{lemma:R3:implications}
    Let $\G$ be an ADMG over $\V$ and let $\Y,\X,\W,\Zu,\Zd$ be pairwise disjoint subsets of~$\V$. We have the following propositions:
    \begin{enumerate}
        \item \(\Y \ind_{\Gc} \Zd \mid \X,\W \Rightarrow \Y \ind_{\Ga} \Zd \mid \X,\Zu,\W\)  \label{lemma:R3:implications:1}
        \item \(\Y \ind_{\Gb} \Zu \mid \X,\W \Rightarrow \Y \ind_{\Gd} \Zu \mid \X,\Zd,\W\) \label{lemma:R3:implications:2}
    \end{enumerate}
\end{restatable}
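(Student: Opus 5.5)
Both implications follow from the same elementary path argument. In each case I argue by contraposition: I take a path witnessing d-connection in the more heavily mutilated graph ($\Ga$ for item~\ref{lemma:R3:implications:1}, $\Gd$ for item~\ref{lemma:R3:implications:2}). I then show that this same path exists, and is active, in the less mutilated graph ($\Gc$, resp.\ $\Gb$) given the smaller conditioning set. Neither Theorem~\ref{th:cycle_do_calculus} nor Lemma~\ref{th:r3_commute:1} is needed. The two items are symmetric, with the roles of $\Zu$ and $\Zd$ exchanged, so I describe item~\ref{lemma:R3:implications:1} in detail.

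\textbf{Step 1: $\Ga \subseteq \Gc$.} Since $\G_{\overline{\X\Zu}} \subseteq \G_{\overline{\X}}$, we have $\Anc(\W, \G_{\overline{\X\Zu}}) \subseteq \Anc(\W, \G_{\overline{\X}})$. Hence $\Zd \setminus \Anc(\W, \G_{\overline{\X}}) \subseteq \Zd \setminus \Anc(\W, \G_{\overline{\X\Zu}})$. So $\Ga$ removes every edge that $\Gc$ removes, together with the edges into $\Zu$. The analogous inclusion $\Gd \subseteq \Gb$ holds for item~\ref{lemma:R3:implications:2}.

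\textbf{Step 2: the path.} Assume $\Y \notind_{\Ga} \Zd \mid \X,\Zu,\W$. Take a $\X,\Zu,\W$-active path $\pi$ in $\Ga$ from $\Y$ to $\Zd$, meeting $\Zd$ only at its last vertex. In $\Ga$ every vertex of $\Zu$ has no incoming edge. So any occurrence of a $\Zu$-vertex on $\pi$ is a non-collider lying in the conditioning set, which is impossible on an active path. Hence $\pi \cap \Zu = \emptyset$. The same reasoning shows that no vertex of $\X$ lies on $\pi$ as an interior vertex.

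Now let $C$ be a collider on $\pi$. It has an incoming edge on $\pi$, so $C \notin \X \cup \Zu$. Moreover $C$ is an ancestor of $\X \cup \Zu \cup \W$ in $\Ga$. A directed path from $C$ cannot terminate in $\X \cup \Zu$, because those vertices have no incoming edges in $\Ga$. Therefore $C \in \Anc(\W, \Ga) \subseteq \Anc(\W, \Gc)$ by Step~1.

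Again by Step~1, $\pi$ exists in $\Gc$. Its non-colliders are not in $\X \cup \W$, and its colliders are ancestors of $\W$ in $\Gc$. Hence $\pi$ is $\X,\W$-active in $\Gc$, i.e.\ $\Y \notind_{\Gc} \Zd \mid \X,\W$.

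Item~\ref{lemma:R3:implications:2} is handled verbatim. Take an $\X,\Zd,\W$-active path in $\Gd$ from $\Y$ to $\Zu$. It avoids $\Zd$, because $\Zd$ vertices are sources in $\Gd$. Its colliders are ancestors of $\W$ in $\Gd$, hence in $\Gb$. Since $\Gd \subseteq \Gb$, the path is $\X,\W$-active in $\Gb$.

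\textbf{Expected main obstacle.} The only delicate point is Step~1. One must check carefully the direction of the inclusion between the two ancestor sets appearing in the $\overline{\,\cdot\,(\W)}$ mutilations, which determines whether the extra removed edges go the right way. One must also check that ancestry of a collider to $\X \cup \Zu$ collapses to ancestry to $\W$. Both follow from the fact that mutilating incoming edges of $\Zu$ (resp.\ $\Zd$) can only shrink ancestor sets and turns those vertices into sources.
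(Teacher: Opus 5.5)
Your proof is correct and follows essentially the same route as the paper's. Both argue by contraposition, use the inclusions $\Ga \subseteq \Gc$ (resp.\ $\Gd \subseteq \Gb$), and transfer the active path after noting that colliders must be ancestors of $\W$ rather than of the mutilated source vertices. Your collider step, which uses $\Anc(\W,\Ga)\subseteq\Anc(\W,\Gc)$ directly, is slightly cleaner than the paper's detour through $\G_{\overline{\X}}$.
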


\begin{proof}
    By symmetry, we only prove the first implication. We prove the contrapositive. Let us assume that $\Y \notind_{\Ga} \Zd \mid \X,\Zu,\W$, then $\Ga$ contains a path $\pi$ that connects $\Y$ and $\Zd$ and $\pi$ $\X,\Zu,\W$-active. Since $ \Ga \subseteq \Gc$, we know that $\pi$ exists in $\Gc$. Moreover, since $\pi$ is $\X,\Zu,\W$-active in $\Ga$, we know that all its non collider are not in $\X, \Zu, \W$. Thus all its non collider are not in $\X, \W$. Let $C$ be a a collider of $\pi$. Since $\Ga$ contains the mutilation $\overline{\X\Zu}$, we know that $C$ is an ancestor of $\W$ in $\G_{\overline{\X\Zu}} \subseteq \G_{\overline{\X}}$. Therefore, $C$ is an ancestor of $\W$ in $\Gc$. Therefore $\pi$ is $\X,\W$-active in $\Gc$. Hence, $\Y \notind_{\Gc} \Zd \mid \X,\W$
\end{proof}

\subsection{Weak Commutativity Between $\mathcal{R}_2$ and $\mathcal{R}_3$}

This section demonstrates Theorem~\ref{th:r2R3_commute}. Definition~\ref{def:R2R3:mutilated_graphs} defines the mutilation graphs used in the proof of Theorem~\ref{th:r2R3_commute} and Figure~\ref{fig:R2R3_conditions} represents all the possible rules.

\begin{definition}[Mutilated Graphs for Rule 2 and Rule 3 Transitions]
\label{def:R2R3:mutilated_graphs}
Let $\G$ be an ADMG over $\V$ and let $\Y,\X,\W,\Zu,\Zd$ be pairwise disjoint subsets of~$\V$.
We define the following mutilated graphs used to apply Rule~2 and Rule~3 of the do-calculus:
\begin{align*}
    \Ga &\coloneqq \G_{\overline{\X\Zu}\,\overline{\Zd \setminus \Anc(\W, \G_{\overline{\X \Zu}})}}, &
    \Gb &\coloneqq \G_{\overline{\X}\,\underline{\Zu}}, &
    \Gc &\coloneqq \G_{\overline{\X}\,\overline{\Zd \setminus \Anc(\W\cup \Zu, \G_{\overline{\X }})}}, 
    \Gd &\coloneqq \G_{\overline{\X\Zd}\,\underline{\Zu}}.
\end{align*}
\end{definition}

\begin{figure}[ht!]
    \centering
    \begin{tikzpicture}[
        every node/.style={fill=white, inner sep=5pt, font=\small},
        every edge/.append style={draw, thick},
        >=latex,
        scale=3.4
    ]
        \node (A) at (0,2) {\(P(\y \mid \Do(\x), \Do(\zu, \zd), \w)\)};
        \node (B) at (2,2) {\(P(\y \mid \Do(\x), \Do(\zu), \w)\)};
        \node (C) at (2,0) {\(P(\y \mid \Do(\x), \zu, \w)\)};
        \node (D) at (0,0) {\(P(\y \mid \Do(\x), \Do(\zd),\zu, \w)\)};

        \draw[-] (A) -- node[midway, above, align=center]
            {\(\Y \ind_{\Ga} \Zd \mid \X,\Zu,\W\)} (B);
        \draw[-] (B) -- node[midway, right, align=center]
            {\(\Y \ind_{\Gb} \Zu \mid \X,\W\)} (C);
        \draw[-] (C) -- node[midway, below, align=center]
            {\(\Y \ind_{\Gc} \Zd \mid \X,\W, \Zu\)} (D);
        \draw[-] (D) -- node[midway, left, align=center]
            {\(\Y \ind_{\Gd} \Zu \mid \X, \Zd,\W\)} (A);
    \end{tikzpicture}
    \caption{\textbf{Graphical conditions of Rule~2 and Rule~3 applications.}  
    Each node represents an expression of the form 
    \(P(\Y \mid \Do(\X), \Do(\cdot), \W, \cdot)\), 
    each horizontal edge corresponds to a valid application of Rule~3, and each vertical hedge corresponds to a valid application of Rule~2 
    of the do-calculus under the indicated independence condition.  
    The mutilated graphs used in each condition are denoted by 
    \(\Ga,\Gb,\Gc,\Gd,\Ge\), defined in Definition~\ref{def:R2R3:mutilated_graphs}.}
    \label{fig:R2R3_conditions}
\end{figure}

We have the following lemma:

\begin{restatable}{lemma}{thRdRtimplications}
\label{lemma:r2R3:implications}
    Let $\G$ be an ADMG over $\V$ and let $\Y,\X,\W,\Zu,\Zd$ be pairwise disjoint subsets of~$\V$. We have the following propositions:
    \begin{enumerate}
        \item \(\Y \ind_{\Gc} \Zd \mid \X,\W,\Zu \Rightarrow \Y \ind_{\Ga} \Zd \mid \X,\Zu,\W\)  \label{lemma:r2R3:implications:1}
        \item \(\Y \ind_{\Gb} \Zu \mid \X,\W \Rightarrow \Y \ind_{\Gd} \Zu \mid \X,\Zd,\W\) \label{lemma:r2R3:implications:2}
    \end{enumerate}
\end{restatable}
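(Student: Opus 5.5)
Both implications are proved by contraposition with the same scheme as Lemma~\ref{lemma:R3:implications}. We take an active path in the more mutilated graph and show it survives, and stays active, in the less mutilated one. Two facts are used throughout. First, if a vertex set $\mathbf S$ has no incoming edge in a mutilated graph, then an active path cannot pass through $\mathbf S$ as a collider. If $\mathbf S$ is in the conditioning set, the path cannot pass through it as a non-collider either. Second, $\mathbf S$ cannot lie strictly downstream of a collider on a directed path.

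\emph{Item 2.} Assume $\Y \notind_{\Gd} \Zu \mid \X,\Zd,\W$, and let $\pi$ be an active path in $\Gd=\G_{\overline{\X\Zd}\,\underline{\Zu}}$.
\begin{enumerate}
\item Vertices of $\Zd$ have no parents in $\Gd$, so they cannot be colliders on $\pi$. As non-colliders in the conditioning set they would block $\pi$. Hence $\pi$ avoids $\Zd$.
\item Every collider $C$ on $\pi$ is an ancestor of $\X\cup\Zd\cup\W$ in $\Gd$. Since $\X$ and $\Zd$ have no parents in $\Gd$ and $C\notin\X\cup\Zd$, $C$ must be an ancestor of $\W$ in $\Gd$.
\item Since $\Gd\subseteq\Gb=\G_{\overline{\X}\,\underline{\Zu}}$, both $\pi$ and the directed paths witnessing these ancestral relations exist in $\Gb$.
\item The non-colliders of $\pi$ avoid $\X\cup\W$. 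Hence $\pi$ is $\X,\W$-active in $\Gb$, so $\Y\notind_{\Gb}\Zu\mid\X,\W$.
\end{enumerate}

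\emph{Item 1.} Assume $\Y\notind_{\Ga}\Zd\mid\X,\Zu,\W$ with an active path $\pi$ in $\Ga$. The crucial step is the inclusion $\Ga\subseteq\Gc$.
\begin{enumerate}
\item Both graphs remove edges into $\X$, and $\Ga$ additionally removes edges into $\Zu$.
\item It remains to compare the $\Zd$-parts, for which we check
\[
\Zd\setminus\Anc(\W\cup\Zu,\G_{\overline{\X}})\subseteq \Zd\setminus\Anc(\W,\G_{\overline{\X\Zu}}).
\]
This follows from $\Anc(\W,\G_{\overline{\X\Zu}})\subseteq\Anc(\W,\G_{\overline{\X}})\subseteq\Anc(\W\cup\Zu,\G_{\overline{\X}})$, because any directed path in $\G_{\overline{\X\Zu}}$ is also a directed path in $\G_{\overline{\X}}$.
\item So every edge removed in $\Gc$ is also removed in $\Ga$, giving $\Ga\subseteq\Gc$, and $\pi$ exists in $\Gc$.
\end{enumerate}
Activeness also transfers.
\begin{enumerate}
\item The non-colliders of $\pi$ avoid $\X\cup\Zu\cup\W$, which is the same conditioning set as in $\Gc$.
\item Every collider is an ancestor of $\X\cup\Zu\cup\W$ in $\Ga$. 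The directed path witnessing this lies in $\Ga\subseteq\Gc$, so the collider is an ancestor of the same set in $\Gc$.
\end{enumerate}
Therefore $\pi$ is $\X,\W,\Zu$-active in $\Gc$, i.e.\ $\Y\notind_{\Gc}\Zd\mid\X,\W,\Zu$.

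The only step needing care is the set inclusion above, specifically that $\Anc$ is monotone under edge removal and under enlarging the target set. Here the role of $\Zu$ inside $\Anc(\W\cup\Zu,\cdot)$ in the definition of $\Gc$ is exactly what makes the inclusion go through. Once this is written out, Theorem~\ref{th:r2R3_commute} follows from this lemma combined with Theorem~\ref{th:cycle_do_calculus} for the remaining quadrilateral edges.
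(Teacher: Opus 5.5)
Your proof is correct and follows the paper's argument. Item~1 goes through the inclusion $\Ga\subseteq\Gc$ derived from $\Anc(\W,\G_{\overline{\X\Zu}})\subseteq\Anc(\W,\G_{\overline{\X}})\subseteq\Anc(\W\cup\Zu,\G_{\overline{\X}})$. Item~2 transfers an active path from $\Gd$ to $\Gb$, using that $\Zd$ and $\X$ receive no arrowheads in $\Gd$, so every collider must be an ancestor of $\W$.

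One correction to your closing remark: the $\Zu$ inside $\Anc(\W\cup\Zu,\G_{\overline{\X}})$ is not what makes the inclusion work. The inclusion already holds with $\Anc(\W,\G_{\overline{\X}})$, and adding $\Zu$ only makes it easier. The $\Zu$ is there simply because $\Zu$ belongs to the conditioning set of that Rule~$\mathcal{R}_3$ application.
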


\begin{proof}
Let us prove the two implications:
\begin{itemize}
    \item Let us prove the first implication. We have the following identities:
    \begin{align*}
      & \G_{\overline{\X\Zu}} \subseteq \G_{\overline{\X}}\\
      \Rightarrow\quad&\Anc(\W,\G_{\overline{\X\Zu}}) \subseteq \Anc(\W,\G_{\overline{\X}}) \\
      \Rightarrow\quad &\Anc(\W,\G_{\overline{\X\Zu}}) \subseteq \Anc(\W\cup\Zu,\G_{\overline{\X}}) \\
      \Rightarrow\quad &\Zd \setminus \Anc(\W,\G_{\overline{\X\Zu}}) \supseteq \Zd \setminus \Anc(\W\cup\Zu,\G_{\overline{\X}})\\
      \Rightarrow\quad & \G_{\overline{X} \, \overline{\Zd \setminus \Anc(\W,\G_{\overline{\X\Zu}})}} \subseteq \G_{\overline{X} \, \overline{\Zd \setminus \Anc(\W\cup\Zu,\G_{\overline{\X}})}}\\
      \Rightarrow\quad & \G_{\overline{X\Zu} \, \overline{\Zd \setminus \Anc(\W,\G_{\overline{\X\Zu}})}} \subseteq \G_{\overline{X} \, \overline{\Zd \setminus \Anc(\W\cup\Zu,\G_{\overline{\X}})}}\\
      \Rightarrow\quad & \Ga \subseteq \Gc
    \end{align*}
    
    Therefore, \(\Y \ind_{\Gc} \Zd \mid \X,\W,\Zu \Rightarrow \Y \ind_{\Ga} \Zd \mid \X,\Zu,\W\).
    
    \item Let us prove the second implication by contrapositive. Let us assume that $\Y \notind_{\Gd} \Zu \mid \X,\Zd,\W$. Then, $\Gd \subseteq \Gb$ contains a path $\pi$ from $\Y$ to $\Zu$ that is $\X,\Zd,\W$-active. Thus all non colliders on $\pi$ are not in $\W,\X$. Moreover, since $\Gd$ contains the mutilation $\overline{\Zd}$, we know that all collider on $\pi$ are ancestors of $\X,\W$ in $\G_{\overline{\X}\,\underline{\Zu}} = \Gb$. Therefore $\Y \notind_{\Gb} \Zu \mid \X,\W$.
\end{itemize}

The second implication directly follows from $\Gd \subseteq \Gb$.
\end{proof}

\thRdeuxRtrois*

\begin{proof}
    Graphically, using Definition~\ref{def:R2R3:mutilated_graphs} and Figure~\ref{fig:R2R3_conditions}, the theorem corresponds to the following properties:
    \begin{enumerate}
        \item \(\Y \ind_{\Ga} \Zd \mid \X,\Zu,\W\) and  \(\Y \ind_{\Gb} \Zu \mid \X, \W\) $\Leftrightarrow$  \(\Y \ind_{\Gd} \Zu \mid \X,\Zd,\W\) and \(\Y \ind_{\Gc} \Zd \mid \X,\W,\Zu\)
        
        \item \(\Y \ind_{\Gc} \Zd \mid \X,\W, \Zu\) and \(\Y \ind_{\Gb} \Zu \mid \X,\W\) \(\Rightarrow\) \(\Y \ind_{\Ga} \Zd \mid \X,\Zu, \W\) and \(\Y \ind_{\Gd} \Zu \mid \X,\Zd,\W\)
        
    \end{enumerate}
We prove the three implications:
\begin{itemize}
    \item \(\Y \ind_{\Gc} \Zd \mid \X,\W, \Zu\) and \(\Y \ind_{\Gb} \Zu \mid \X,\W\) \(\Rightarrow\) \(\Y \ind_{\Ga} \Zd \mid \X,\Zu, \W\) and \(\Y \ind_{\Gd} \Zu \mid \X,\Zd,\W\): this implication is a direct consequence of Lemma~\ref{lemma:r2R3:implications}.
    
    \item \(\Y \ind_{\Ga} \Zd \mid \X,\Zu,\W\) and  \(\Y \ind_{\Gb} \Zu \mid \X, \W\) $\Rightarrow$  \(\Y \ind_{\Gd} \Zu \mid \X,\Zd,\W\) and \(\Y \ind_{\Gc} \Zd \mid \X,\W,\Zu\): let us assume that \(\Y \ind_{\Ga} \Zd \mid \X,\Zu,\W\) and  \(\Y \ind_{\Gb} \Zu \mid \X, \W\) holds in $\G$. By Lemma~\ref{lemma:r2R3:implications}, \(\Y \ind_{\Gd} \Zu \mid \X,\Zd,\W\) holds in $\G$. Thus, by Theorem \ref{th:cycle_do_calculus}, \(\Y \ind_{\Gc} \Zd \mid \X,\W,\Zu\) also holds in $\G$. Therefore, we have \(\Y \ind_{\Gd} \Zu \mid \X,\Zd,\W\) and \(\Y \ind_{\Gc} \Zd \mid \X,\W,\Zu\).
    
    \item \(\Y \ind_{\Ga} \Zd \mid \X,\Zu,\W\) and  \(\Y \ind_{\Gb} \Zu \mid \X, \W\) $\Leftarrow$  \(\Y \ind_{\Gd} \Zu \mid \X,\Zd,\W\) and \(\Y \ind_{\Gc} \Zd \mid \X,\W,\Zu\): this case is symmetrical to the previous one.
\end{itemize}
\end{proof}

\subsection{Normal Forms of Do-Calculus Derivations}

The results established in the previous sections show that for any equivalent expressions \(Q_1\) and \(Q_2\), 
\(Q_2\) can be deduced from \(Q_1\) by successive applications of variants of rules 2 and 3. 
A priori, the number of applications is unknown and could be large. 
Here, we provide a more precise statement, showing that this sequence can always be reduced to at most four steps. 
This makes the exploration of any connected component of the derivation graph more computationally feasible, 
and moreover, we provide an explicit expression for each intermediate expression leading from \(Q_1\) to \(Q_2\).

\thformenormal*

\begin{proof}
    If the sequence 
    $\Ru_2(\Z_2^\uparrow)\, \Ru_3(\Z_3^\uparrow)\,
        \Rd_3(\Z_3^\downarrow)\,
        \Rd_2(\Z_2^\downarrow)$
    is valid in $\G$ and transforms 
    $P(\mathbf{y}\mid \Do(\mathbf{x}),\mathbf{w})$ into 
    $P(\mathbf{y} \mid \Do(\mathbf{\widetilde x}),\mathbf{\widetilde w})$,
    then the two distributions are equivalent.
    
    \textbf{Conversely}, assume that
    $P(\mathbf{y}\mid \Do(\mathbf{x}),\mathbf{w})$ and $ P(\mathbf{y} \mid \Do(\mathbf{\widetilde x}),\mathbf{\widetilde w})$
    are equivalent. By definition, there exists a finite sequence of applications of do-calculus rules
    \[
    \mathcal{R}_{i_1}(\mathbf{Z_1})\,\mathcal{R}_{i_2}(\mathbf{Z_2})\cdots \mathcal{R}_{i_n}(\mathbf{Z_n})
    \]
    that transforms $P(\mathbf{y}\mid \Do(\mathbf{x}),\mathbf{w})$ into 
    $P(\mathbf{y} \mid \Do(\mathbf{\widetilde x}),\mathbf{\widetilde w})$.
    
    By Theorem~\ref{thm:r1_as_r23}, every application of $\mathcal{R}_1$ can be replaced by a valid sequence of applications of $\mathcal{R}_2$ and $\mathcal{R}_3$. Hence, without loss of generality, we may assume that the sequence only involves rules $\mathcal{R}_2$ and $\mathcal{R}_3$.     Moreover, by Theorems~\ref{th:r2_commute} and \ref{th:r3_commute}, each  application involving a set of variables can be decomposed into a sequence of applications on single variables. Hence, we may further assume without loss of generality that each $\mathbf{Z_j}$ is a singleton, for $1\leq j \leq n$.
    
    By Theorems~\ref{th:r2_commute}, \ref{th:r3_commute}, and
    \ref{th:r2R3_commute}, downward and then upward applications of rules
    $\mathcal{R}_2$ and $\mathcal{R}_3$ commute (i.e. $\G \models \Rd_{k} \, \Ru_{\ell} \Rightarrow \G \models \Ru_{\ell} \, \Rd_{k}$, for $k,\ell \in \{2,3\}$).
    Therefore, by rearranging the sequence, we may assume that all upward
    applications precede all downward applications, that is, the sequence can be written as
    \[
    \Ru_{i_1}(Z_1)\cdots \Ru_{i_k}(Z_k)\,
    \Rd_{i_{k+1}}(Z_{k+1})\cdots \Rd_{i_n}(Z_n),
    \]
    with $i_j\in\{2,3\}$ for all $j \in \{1,\ldots,n\}$.

    Using again the commutativity of the applications of the rules, applications in the same direction commute (i.e. $\G \models \Ru_{k} \, \Ru_{l} \Rightarrow \G \models \Ru_{l} \, \Ru_{k}$ and $\G \models \Rd_{k} \, \Rd_{l} \Rightarrow \G \models \Rd_{l} \, \Rd_{k}$, for $k,l \in \{2,3\}$). Therefore, any application
    $\Ru_i(Z)$ can be moved arbitrarily far to the right among the upward
    applications $\Ru_2$ and $\Ru_3$, and similarly any application $\Rd_i(Z)$ can be moved arbitrarily far to the left among the downward
    applications of rules $\Rd_2$ and $\Rd_3$.
    Moreover, since any consecutive pair of inverse applications of the form
    $\Ru_i(Z)\,\Rd_i(Z)$ can be removed from the sequence, we may assume that no
    variable $Z$ is acted upon twice by the same rule in opposite directions.

    Using again the commutativity properties of the rules, applications of the same
    rule and in the same direction commute.
    As a consequence, the remaining applications can be sorted by direction and rule type. Hence, the sequence can be written as
    
    \[ \Ru_{2}(Z_1) \dots \Ru_{2}(Z_k) \, \Ru_{3}(Z_{k+1}) \dots \Ru_{3}(Z_{l})\, \Rd_{3}(Z_{l+1}) \dots \Rd_{3}(Z_{m}) \cdots \Rd_{3}(Z_l) \dots \Rd_{3}(Z_n) \]
    
    Moreover, by Theorems~\ref{th:r2_commute} and \ref{th:r3_commute}, multiple applications of the same rule in the same direction can be combined into a single application on the union of the corresponding variables. Consequently, there exist (possibly empty) subsets
    $\Z_2^\uparrow, \Z_2^\downarrow, \Z_3^\uparrow$, and $\Z_3^\downarrow$ of $\V$
    such that the sequential application
    \[
    \Ru_2(\Z_2^\uparrow)\,
    \Ru_3(\Z_3^\uparrow)\,
    \Rd_3(\Z_3^\downarrow)\,
    \Rd_2(\Z_2^\downarrow)
    \]
    is valid in $\G$ and transforms
    $P(\y \mid \Do(\x_1),\x_1)$ into
    $P(\y \mid \Do(\x_2),\w_2)$.
    
    Moreover, since no variable $Z$ was acted upon twice by the same rule in opposite directions we know that $\Z_2^\uparrow \cap \Z_2^\downarrow = \emptyset$ and $\Z_3^\uparrow \cap \Z_3^\downarrow = \emptyset$. Therefore, we have $\Z_2^\uparrow = \W \setminus \widetilde \W$, $\Z_3^\uparrow = (\widetilde \X \cup \widetilde \W) \setminus (\X \cup  \W)$, $\Z_2^\downarrow = \widetilde \W \setminus \W$ and $\Z_3^\downarrow = (\X \cup  \W) \setminus (\widetilde \X \cup \widetilde \W)$.
\end{proof}

\corcriteregraphique*

\begin{proof}
This criterion is the graphical conditions of the sequence of applications given in Theorem~\ref{th:formenormal}.
\end{proof}

\cordiametre*

\begin{proof}
Theorem~\ref{th:formenormal} shows that there is always a sequence of application of length at most four between two equivalent densities.
\end{proof}

\section{Another Example: A Causal Graph With Four Nodes, and Its Derivation Graph Restricted to One Connected Component}
\label{app:examples}

\begin{figure}
    \centering
\begin{tikzpicture}[baseline=-0.5ex, node distance=8mm,
      node/.style = {inner sep=.3pt, minimum size=6mm, font=\small},
      >={Stealth[length=5pt]}, every edge/.style={->, semithick}]
      
      \node[node] (B) {$B$};
      \node[node] (A) [right=of B] {$A$};
      \node[node] (D) [right=of A] {$D$};
      \node[node] (C) [right=of D] {$C$};
      
      \draw[->] (B) -- (A);
      \draw[dashed,<->,bend left=35] (A) to (D);

    \end{tikzpicture}
\begin{tikzpicture}[
    dofree/.style = {
      draw=coolorange!60!black,
      fill=coolorange!80,
      line width=0.6pt,
      inner sep=1pt,
      font=\scriptsize
    },
    withdo/.style = {
      draw=blue!60!black,
      fill=blue!25,
      line width=0.6pt,
      inner sep=1pt,
      font=\scriptsize
    },
    styleR1/.style = {thick, -, gray!65, line width=1.2pt},
    styleR2/.style = {thick, -, orange!60!brown, dashed, line width=1.2pt},
    styleR3/.style = {thick, -, blue!55, dotted, line width=1.5pt},
    scale=1.7,
]
    \node[dofree] (n11) at (0, 0) {$P(a \mid b)$};
    \node[dofree] (n2) at (1, 0) {$P(a \mid b,c)$};
    \node[withdo] (n9) at (0.5, .866) {$P(a \mid \mathrm{do}(c),b)$};
    
    \node[withdo] (n5) at (0, 2.5) {$P(a \mid \mathrm{do}(b,d))$};
    \node[withdo] (n4) at (1, 2.5) {$P(a \mid \mathrm{do}(b,d),c)$};
    \node[withdo] (n7) at (.5, 3.366) {$P(a \mid \mathrm{do}(b,c,d))$};

    \node[withdo] (n1) at (1.5, 1.25) {$P(a \mid \mathrm{do}(b))$};    
    \node[withdo] (n0) at (2.5, 1.25) {$P(a \mid \mathrm{do}(b),c)$};
    \node[withdo] (n3) at (2, 2.116) {$P(a \mid \mathrm{do}(b,c))$};

    \node[withdo] (n10) at (-1.5, 1.25) {$P(a \mid\mathrm{do}(d),b)$};
    \node[withdo] (n6) at (-.5, 1.25) {$P(a \mid \mathrm{do}(d),b,c)$};
    \node[withdo] (n8) at (-1, 2.116) {$P(a \mid \mathrm{do}(c,d),b)$};

\begin{pgfonlayer}{background}
    \draw[styleR1] (n0) -- (n1);
    \draw[styleR2] (n0) -- (n2);
    \draw[styleR2] (n0) -- (n3);
    \draw[styleR3,line width = 1pt] (n0) -- (n4);
    \draw[styleR2,line width = .9pt] (n1) -- (n11);
    \draw[styleR3] (n1) -- (n5);
    \draw[styleR3] (n1) -- (n3);
    \draw[styleR1] (n2) -- (n11);
    \draw[styleR2] (n2) -- (n9);
    \draw[styleR3,line width = 1pt] (n2) -- (n6);
    \draw[styleR3] (n3) -- (n7);
    \draw[styleR2] (n3) -- (n9);
    \draw[styleR1] (n4) -- (n5);
    \draw[styleR2] (n4) -- (n6);
    \draw[styleR2] (n4) -- (n7);
    \draw[styleR3] (n5) -- (n7);
    \draw[styleR2,line width = .9pt] (n5) -- (n10);
    \draw[styleR2] (n6) -- (n8);
    \draw[styleR1] (n6) -- (n10);
    \draw[styleR2] (n7) -- (n8);
    \draw[styleR3] (n8) -- (n9);
    \draw[styleR3] (n8) -- (n10);
    \draw[styleR3] (n9) -- (n11);
    \draw[styleR3] (n10) -- (n11);
\end{pgfonlayer}

\end{tikzpicture}
\caption{The causal graph (left) and its derivation graph restricted to the connected component of $P(a \mid \Do(b))$ (right). Blue nodes denote expressions involving interventions, while orange nodes denote observational (do-free) expressions. 
Edges represent atomic applications of the do-calculus rules (one variable at a time): gray solid lines for Rule $\mathcal{R}1$, orange dashed lines for Rule $\mathcal{R}2$, and blue dotted lines for Rule $\mathcal{R}3$.}
\label{fig:derivation_graph_index3}
\end{figure}

Figure~\ref{fig:derivation_graph_index3} shows the corresponding causal diagram (left) together with its derivation graph (right).

This example illustrates all the favorable configurations: we observe the triangles formed by $\mathcal{R}_1$, $\mathcal{R}_2$, and $\mathcal{R}_3$, as well as the different types of quadrilaterals—namely the blue dotted one, the orange dashed one, and the mixed one.
In this case, all compositions are commutative; consequently, no “hole” appears, in contrast to the napkin graph.
\newpage

\section{Application: Deriving Several Identification Formulae for an Interventional Query}
\label{app:example_estimation}

\subsection{Estimation in a Simple Case: Details}
\label{app:example_estimation:1}

We illustrate our framework on a simple example,
and  compute two interventional quantities that belong
to the same connected component of the derivation graph and estimate them on a generated sample.
This example highlights how different valid derivations correspond
to different decompositions of the causal effect.

\begin{figure}[H]\centering
\begin{tikzpicture}
    \node (A) at (0,0) {W};
    \node (B) at (0.9,0) { Z};
    \node (C) at (1.8,0) { X};
    \node (D) at (2.7,0) { Y};

    \draw[->] (A) -- (B);
    \draw[->] (B) -- (C);
    \draw[->] (C) -- (D);
    \draw[dashed, <->] (A) to[bend left=30] (D);
\end{tikzpicture} 
\hspace{2cm}
\begin{tikzpicture}[
    dofree/.style = {
      draw=coolorange!60!black,
      fill=coolorange!80,
      line width=0.6pt,
      inner sep=1pt,
      font=\small
    },
    withdo/.style = {
      draw=blue!60!black,
      fill=blue!25,
      line width=0.6pt,
      inner sep=1pt,
      font=\small
    },
    styleR1/.style = {thick, -, gray!65, line width=1.2pt},
    styleR2/.style = {thick, -, orange!60!brown, dashed, line width=1.2pt},
    styleR3/.style = {thick, -, blue!55, dotted, line width=1.5pt},
    scale=2.5,
]
    \node[withdo] (n1) at (0, 0) {$P(y \mid \mathrm{do}(w,z))$};
    \node[withdo] (n2) at (1, 0) {$P(y \mid \mathrm{do}(w),z)$};
    \node[withdo] (n3) at (-1, 0) {$P(y \mid \mathrm{do}(z))$};

\begin{pgfonlayer}{background}
    \draw[styleR3] (n1) -- (n3);
    \draw[styleR2] (n1) -- (n2);
\end{pgfonlayer}

\end{tikzpicture}
\caption{A causal diagram over four variables, used in Example~\ref{ex:1}, and the connected component of the expression $P(y \mid \Do(z))$ in the derivation graph.}
\label{fig:1bis}
\end{figure}

We generate data with the following process: the vector $(W,Z,X,Y)$ is Gaussian, and the relations are linear, with
\begin{align*}
    &\text{Latent confounder:} && U \sim \mathcal{N}(0,1)\\[1mm]
    & && W = a_{UW} \, U + \epsilon_W, && \epsilon_W \sim \mathcal{N}(0,1), \quad a_{UW} = 1.0\\[1mm]
    &\text{Mediator 1:} && Z = a_{WZ} \, W + \epsilon_Z, && \epsilon_Z \sim \mathcal{N}(0,1), \quad a_{WZ} = 1.5\\[1mm]
    &\text{Mediator 2:} && X = a_{ZX} \, Z + \epsilon_X, && \epsilon_X \sim \mathcal{N}(0,1), \quad a_{ZX} = 2.0\\[1mm]
    &\text{Outcome:} && Y = a_{XY} \, X + a_{UY} \, U + \epsilon_Y, && \epsilon_Y \sim \mathcal{N}(0,1), \quad a_{XY} = 1.2, a_{UY} =  1.5 .
\end{align*}

From the derivation graph, we know that 
$$\mathcal{A}^{\mathcal{G}}_{\mathbb{E}(Y \mid \Do(w),z)}
= \{\mathbb{E}(Y \mid \Do(w),z), \mathbb{E}(Y \mid \Do(z)), \mathbb{E}(Y \mid \Do(w,z))\}.$$

We first consider the interventional distribution $\mathbb{E}(Y \mid \Do(z))$, which corresponds to marginalizing over $W$:
\begin{align*}
    \mathbb{E}(Y \mid \Do(z)) 
    &= \int y \, f(y \mid \Do(z)) \, dy
     = \int y \int  f(y \mid w,z) f(w) \, dw \, dy\\
    &= \int  f(w) \, \mathbb{E}(y \mid w,z) \, dw 
     = \gamma + \alpha \, \mathbb{E}(W) + \beta z,
\end{align*}
where we used the regression model $\mathbb{E}(Y \mid w,z) = \gamma + \alpha w + \beta z$.  
The total effect of $Z$ on $Y$, $\frac{\partial}{\partial Z} \mathbb{E}(Y \mid \Do(z))$, is $\beta$.

Next, we consider $\mathbb{E}(Y \mid \Do(w,z))$:
\begin{align*}
    \mathbb{E}(Y \mid \Do(w,z)) 
    &= \int y \, f(y \mid \Do(w,z)) \, dy \\
    &= \int y \int  f(y \mid w,z,x) f(x \mid w,z) \, dx \, dy\\
    &= \int f(x \mid w,z) (\widetilde\gamma + \widetilde\alpha w + \widetilde\beta z + \widetilde\delta x) \, dx\\
    &= \widetilde\gamma + \widetilde\alpha w + \widetilde\beta z + \widetilde\delta \, \mathbb{E}(X \mid w,z) \\
    &= \widetilde\gamma + \widetilde\alpha w + \widetilde\beta z + \widetilde\delta (\gamma' + \alpha' w + \beta' z),
\end{align*}
where the regression models are $\mathbb{E}(Y \mid w,z,x) = \gamma + \alpha w + \beta z + \delta x$ and 
$\mathbb{E}(X \mid w,z) = \gamma' + \alpha' w + \beta' z$.  
Hence, the total effect of $(W,Z)$ on $Y$ is $(\delta \alpha', \delta \beta')$.

We provide a Jupyter notebook in \href{https://gricad-gitlab.univ-grenoble-alpes.fr/yvernesc/do-calculus-derivation-graphs}{this repository}\footnote{https://gricad-gitlab.univ-grenoble-alpes.fr/yvernesc/do-calculus-derivation-graphs} to compute the estimator and get the boxplot in Figure \ref{fig:boxplot}.

This example illustrates how different elements of the same connected component in the derivation graph correspond to distinct but equivalent expressions of causal effects, and how the proposed framework makes these relationships explicit.

\subsection{Experiment on the Sachs Protein‑Signaling Data}
\label{app:sachs}

We applied our method to the protein signaling dataset introduced by \citet{Sachs2005}. The dataset is a widely used benchmark in causal inference and contains single-cell measurements of 11 variables, namely Raf, Mek, Plcg, PIP2, PIP3, Erk, Akt, PKA, PKC, P38, and Jnk, recorded across several thousand cells under multiple experimental conditions. We use the observational data available in the \texttt{bnlearn} R package, together with the biologically validated DAG associated with this dataset~\cite{zzzSachsDAG,zzzSachsDataset} (see Figure~\ref{fig:SAchsDAG}), which we take as ground truth.

We consider the estimation of the causal effect $P(\mathrm{P38} \mid do(\mathrm{Mek}))$. According to the reference graph, this effect is equal to zero. Starting from the causal graph, the derivation graph produced by our method contains 32 equivalent interventional densities. Since the underlying graph is a DAG, each density leads to an identifiable adjustment formula via parental adjustment. Several of these densities induce the same adjustment set and therefore correspond to the same estimator. For each distinct adjustment set, we estimate the causal effect following the procedure described in Section~\ref{sec:app}.

To assess the variability of the resulting estimators, we compute bootstrap variances using 500 resamples. Table~\ref{tab:sachs_resultsd} reports the distinct estimators obtained from our approach, together with their adjustment sets, point estimates, and bootstrap variances. All estimators return values close to zero, in agreement with the ground truth. As in the synthetic example of Section~\ref{sec:app}, the different equivalent formulae exhibit noticeably different variances. In particular, the estimators based on adjustment sets involving Mek, PKA, and PKC, with or without PIP3, achieve the smallest bootstrap variances, indicating better statistical efficiency than alternatives such as the unadjusted estimator.

\begin{figure}[t]
    \centering
    \begin{tikzpicture}[
        node distance = 1.9cm,
        every node/.style={circle, draw, minimum size=7mm},
        arrow/.style={-Latex, thick},
        scale=3
    ]
    
    \node (Plcg) at (-0.5,-.772) {Plcg};
    \node (PIP3) at (1.5,-.772) {PIP3};
    \node (PIP2) at (.5,-1) {PIP2};
    \node (PKC) at (1,-1.772) {PKC};
    
    \node (PKA) [right=of PKC] {PKA};
    
    \node (Raf) at (1,-2.772)  {Raf};
    \node (Mek) at (2,-2.772)  {Mek};
    \node (Erk) [right=of Mek] {Erk};
    \node (Akt) [right=of PKA] {Akt};
    
    \node (Jnk) [ above right=of PKC] {Jnk};
    \node (P38) [above left=of PKA] {P38};
    
    \path (Plcg) edge[-Latex, thick,bend right = 30] (PKC);
    \draw[arrow] (Plcg) -- (PIP2);
    \draw[arrow] (Plcg) -- (PIP3);
    
    \draw[arrow] (PIP3) -- (Akt);
    \draw[arrow] (PIP3) -- (PIP2);
    \draw[arrow] (PIP2) -- (PKC);
    
    \draw[arrow] (PKC) -- (Jnk);
    \draw[arrow] (PKC) -- (P38);
    \draw[arrow] (PKC) -- (Mek);
    \draw[arrow] (PKC) -- (Raf);
    \draw[arrow] (PKC) -- (PKA);
    
    \draw[arrow] (PKA) -- (Jnk);
    \draw[arrow] (PKA) -- (P38);
    \draw[arrow] (PKA) -- (Mek);
    \draw[arrow] (PKA) -- (Raf);
    \draw[arrow] (PKA) -- (Erk);
    \draw[arrow] (PKA) -- (Akt);
    
    \draw[arrow] (Raf) -- (Mek);
    \draw[arrow] (Mek) -- (Erk);
    \draw[arrow] (Erk) -- (Akt);
    
    \end{tikzpicture}
    \caption{The DAG in \citet{Sachs2005} biologically validated.}
    \label{fig:SAchsDAG}
\end{figure}

\begin{table}[t]
    \centering
    \caption{Estimators obtained from the equivalent identification formulae for the causal effect $P(\mathrm{P38} \mid do(\mathrm{Mek}))$ on the Sachs protein-signaling dataset.}
    \label{tab:sachs_resultsd}
    \begin{tabular}{l l r r}
    \hline
    Density & Adjustment set & Prediction & Bootstrap variance \\
    \hline
    $P(\mathrm{P38} \mid do(\mathrm{Mek}))$ & $\{\mathrm{PKA}, \mathrm{PKC}, \mathrm{Raf}\}$ & 0.022723 & 0.000572 \\
    $P(\mathrm{P38})$ & $\emptyset$ & -0.018090 & 0.001029 \\
    $P(\mathrm{P38} \mid do(\mathrm{Mek},\mathrm{Raf}))$ & $\{\mathrm{PKA}, \mathrm{PKC}\}$ & -0.006157 & 0.000265 \\
    $P(\mathrm{P38} \mid do(\mathrm{Akt},\mathrm{Mek}))$ & $\{\mathrm{PIP3}, \mathrm{PKA}, \mathrm{PKC}, \mathrm{Raf}\}$ & 0.021494 & 0.000574 \\
    $P(\mathrm{P38} \mid do(\mathrm{Erk}))$ & $\{\mathrm{Mek}, \mathrm{PKA}\}$ & -0.004451 & 0.000065 \\
    $P(\mathrm{P38} \mid do(\mathrm{Akt},\mathrm{Erk}))$ & $\{\mathrm{Mek}, \mathrm{PIP3}, \mathrm{PKA}\}$ & -0.004954 & 0.000064 \\
    $P(\mathrm{P38} \mid do(\mathrm{Akt},\mathrm{Erk},\mathrm{Mek},\mathrm{Raf}))$ & $\{\mathrm{PIP3}, \mathrm{PKA}, \mathrm{PKC}\}$ & -0.007231 & 0.000262 \\
    $P(\mathrm{P38} \mid do(\mathrm{Akt},\mathrm{Erk},\mathrm{Jnk}))$ & $\{\mathrm{Mek}, \mathrm{PIP3}, \mathrm{PKA}, \mathrm{PKC}\}$ & -0.001808 & \textbf{0.000016} \\
    $P(\mathrm{P38} \mid do(\mathrm{Akt},\mathrm{Jnk}))$ & $\{\mathrm{Erk}, \mathrm{PIP3}, \mathrm{PKA}, \mathrm{PKC}\}$ & -0.007302 & 0.000269 \\
    $P(\mathrm{P38} \mid do(\mathrm{Akt}))$ & $\{\mathrm{Erk}, \mathrm{PIP3}, \mathrm{PKA}\}$ & -0.019925 & 0.001032 \\
    $P(\mathrm{P38} \mid do(\mathrm{Erk},\mathrm{Jnk}))$ & $\{\mathrm{Mek}, \mathrm{PKA}, \mathrm{PKC}\}$ & -0.001539 & \textbf{0.000017} \\
    \hline
    \end{tabular}
\end{table}

\newpage
\subsection{Details on a More Complex Causal Graph}
\label{app:details}
\begin{figure}[h]
\centering
\begin{tikzpicture}[
    node/.style={draw, circle, inner sep=1.5pt, font=\small},
    dir/.style={->, thick},
    bidir/.style={<->, dashed, thick},
    scale=1
]

\node[node] (x)  at (0,0) {$x$};
\node[node] (y)  at (4,2.8) {$y$};

\node[node] (z1) at (1,1.5) {$z_1$};
\node[node] (z2) at (0,2.8) {$z_2$};
\node[node] (z3) at (-1.5,1.2) {$z_3$};

\node[node] (z4) at (4,1.5) {$z_4$};
\node[node] (z5) at (2.5,1.5) {$z_5$};

\draw[dir] (z1) -- (x);
\draw[dir] (x) -- (z4);
\draw[dir] (z4) -- (y);
\draw[dir] (z2) -- (z1);
\draw[dir] (z3) -- (z2);
\draw[dir] (z3) to (x);
\draw[dir] (z5) -- (z1);
\draw[dir] (z5) -- (z4);

\draw[bidir] (x)  to (z2);
\draw[bidir] (z2) to[bend right=30] (z3);
\draw[bidir] (y)  to (z2);
\draw[bidir] (y)  to[bend left=30] (z4);
\draw[bidir] (z4) to[bend right=30] (z5);

\end{tikzpicture}
\begin{tikzpicture}[
    dofree/.style = {
      draw=coolorange!60!black,
      fill=coolorange!80,
      line width=0.6pt,
      inner sep=1pt,
      font=\footnotesize
    },
    withdo/.style = {
      draw=blue!60!black,
      fill=blue!25,
      line width=0.6pt,
      inner sep=1pt,
      font=\footnotesize
    },
    styleR1/.style = {thick, -, gray, line width=1.2pt},
    styleR2/.style = {thick, -, orange!60!brown, dashed, line width=1.2pt},
    styleR3/.style = {thick, -, blue!55, dotted, line width=1.5pt},
    scale=2.9
]
    
    \node[withdo] (n2) at (.6, -.3) {$P(y \mid \Do(x),z_3)$};
    \node[withdo] (n3) at (0, 0) {$P(y \mid \Do(x,z_3))$};
    \node[withdo] (n4) at (0.6, 0.7) {$P(y \mid \Do(x,z_2),z_3)$};
    \node[withdo] (n5) at (1, 0) {$P(y \mid \Do(x))$};
    \node[withdo] (n6) at (1, 1) {$P(y \mid \Do(x,z_2))$};
    \node[withdo] (n7) at (0, 1) {$P(y \mid \Do(x,z_2,z_3))$};

   \node[withdo] (n22) at (2.1, 0) {$P(y \mid \Do(x,z_1),z_3)$};
    \node[withdo] (n32) at (1.5, 0.3) {$P(y \mid \Do(x,z_1,z_3))$};
    \node[withdo] (n42) at (2.1, 1) {$P(y \mid \Do(x,z_1,z_2),z_3)$};
    \node[withdo] (n52) at (2.5, 0.3) {$P(y \mid \Do(x,z_1))$};
    \node[withdo] (n62) at (2.5, 1.3) {$P(y \mid \Do(x,z_1,z_2))$};
    \node[withdo] (n72) at (1.5, 1.3) {$P(y \mid \Do(x,z_1,z_2,z_3))$};
    
        \node[withdo] (n23) at (3.6, 0.3) {$P(y \mid \Do(z_1),x,z_3)$};
    \node[withdo] (n33) at (3, 0.6) {$P(y \mid \Do(z_1,z_3),x)$};
    \node[withdo] (n43) at (3.6, 1.3) {$P(y \mid \Do(z_1,z_2),x,z_3)$};
    \node[withdo] (n53) at (4, 0.6) {$P(y \mid \Do(z_1),x)$};
    \node[withdo] (n63) at (4, 1.6) {$P(y \mid \Do(z_1,z_2),x)$};
    \node[withdo] (n73) at (3, 1.6) {$P(y \mid \Do(z_1,z_2,z_3),x)$};
\begin{pgfonlayer}{background}
     \draw[styleR2] (n2) -- (n3);
    \draw[styleR3] (n2) -- (n4);
    \draw[styleR1] (n2) -- (n5);
    \draw[styleR3] (n3) -- (n5);
    \draw[styleR3] (n3) -- (n7);
    \draw[styleR1] (n4) -- (n6);
    \draw[styleR2] (n4) -- (n7);
    \draw[styleR3] (n5) -- (n6);
    \draw[styleR3] (n6) -- (n7);
    
    \draw[styleR2] (n22) -- (n32);
    \draw[styleR3] (n22) -- (n42);
    \draw[styleR1] (n22) -- (n52);
    \draw[styleR3] (n32) -- (n52);
    \draw[styleR3] (n32) -- (n72);
    \draw[styleR1] (n42) -- (n62);
    \draw[styleR2] (n42) -- (n72);
    \draw[styleR3] (n52) -- (n62);
    \draw[styleR3] (n62) -- (n72);
    
         \draw[styleR2] (n23) -- (n33);
    \draw[styleR3] (n23) -- (n43);
    \draw[styleR1] (n23) -- (n53);
    \draw[styleR3] (n33) -- (n53);
    \draw[styleR3] (n33) -- (n73);
    \draw[styleR1] (n43) -- (n63);
    \draw[styleR2] (n43) -- (n73);
    \draw[styleR3] (n53) -- (n63);
    \draw[styleR3] (n63) -- (n73);
    
    \draw[styleR3] (n6) -- (n62);
    \draw[styleR2] (n62) -- (n63);
    \draw[styleR3] (n7) -- (n72);
    \draw[styleR2] (n72) -- (n73);
        \draw[styleR3] (n4) -- (n42);
    \draw[styleR2] (n42) -- (n43);
        \draw[styleR3] (n2) -- (n22);
    \draw[styleR2] (n22) -- (n23);
        \draw[styleR3] (n3) -- (n32);
    \draw[styleR2] (n32) -- (n33);
        \draw[styleR3] (n5) -- (n52);
    \draw[styleR2] (n52) -- (n53);
\end{pgfonlayer}
\end{tikzpicture}
\caption{Causal graph (top) and connected component of $P(y\mid\Do(x))$ in its derivation graph (bottom).}
\end{figure}

We list all identification formulae obtained by applying the ID algorithm (via the R package \texttt{causaleffect}) to all equivalent interventional queries within the same connected component of the derivation graph. This yields eight distinct formulae, each exhibiting different statistical properties.

\begin{align*}
    P(y \mid \Do(x))&=     P(y \mid \Do(x,z_3))
    =     P(y \mid \Do(x,z_1,z_3))=     P(y \mid \Do(x,z_1))
=
\frac{
\sum_{z_2,z_5}
P(y \mid x,z_1,z_2,z_5)\,
P(x \mid z_1,z_2,z_5)\,
P(z_2 \mid z_5)\,
P(z_5)
}{
\sum_{z_2}
P(x \mid z_1,z_2)\,
P(z_2)
}\\
P(y \mid \Do(x,z_2)) &= P(y \mid \Do(x,z_2,z_3))=
\frac{
\sum_{z_2', z_5} 
P(y \mid z_5, z_2', z_1, x)\,
P(x \mid z_5, z_2', z_1)\,
P(z_2' \mid z_5)\,
P(z_5)
}{
\sum_{z_2'} 
P(x \mid z_5, z_2', z_1)\,
P(z_2')
}\\
P(y \mid \Do(x,z_1,z_2)) &= P(y \mid \Do(x,z_1,z_2,z_3))=
\frac{
\sum_{z_2', z_5} 
P(y \mid x, z_1, z_2', z_5)\,
P(x \mid z_1, z_2', z_5)\,
P(z_2' \mid z_5)\,
P(z_5)
}{
\sum_{z_2'} 
P(x \mid z_1, z_2')\,
P(z_2')
}\\
P(y \mid \Do(x, z_1, z_2), z_3) &= P(y \mid \Do(z_1, z_2), x,z_3)
= P(y \mid \Do(z_1, z_2,z_3), x)\\
&=
\frac{
\sum_{z_3', z_5, z_2', z_4}
P(y \mid z_3', z_5, z_2', z_1, x, z_4)\,
P(z_4 \mid z_3', z_5, z_2', z_1, x)\,
P(x \mid z_3', z_5, z_2', z_1)\,
P(z_2' \mid z_3', z_5)\,
P(z_5 \mid z_3')\,
P(z_3')
}{
\sum_{z_3', z_5, z_2', z_4, y'}
P(y' \mid z_3', z_5, z_2', z_1, x, z_4)\,
P(z_4 \mid z_3', z_5, z_2', z_1, x)\,
P(x \mid z_3', z_5, z_2', z_1)\,
P(z_2' \mid z_3', z_5)\,
P(z_5 \mid z_3')\,
P(z_3')
}\\
P(y \mid \Do(x, z_2), z_3)
&=
\frac{
\sum_{z_5, z_2', z_4}
P(y \mid z_5, z_2', z_1, x, z_4)\,
P(z_4 \mid z_5, z_2', z_1, x)\,
P(x \mid z_5, z_2', z_1)\,
P(z_2' \mid z_5)\,
P(z_5)
}{
\sum_{z_5, z_2', z_4, y'}
P(y' \mid z_5, z_2', z_1, x, z_4)\,
P(z_4 \mid z_5, z_2', z_1, x)\,
P(x \mid z_5, z_2', z_1)\,
P(z_2' \mid z_5)\,
P(z_5)
}\\
P(y \mid \Do(z_1), x)
&=
\frac{
\sum_{z_3, z_5, z_2, z_4}
P(y \mid z_3, z_5, z_2, z_1, x, z_4)\,
P(z_4 \mid z_3, z_5, z_2, z_1, x)\,
P(x \mid z_3, z_5, z_2, z_1)\,
P(z_2 \mid z_3, z_5)\,
P(z_5 \mid z_3)\,
P(z_3)
}{
\sum_{z_3, z_5, z_2, z_4, y'}
P(y' \mid z_3, z_5, z_2, z_1, x, z_4)\,
P(z_4 \mid z_3, z_5, z_2, z_1, x)\,
P(x \mid z_3, z_5, z_2, z_1)\,
P(z_2 \mid z_3, z_5)\,
P(z_5 \mid z_3)\,
P(z_3)
}\\
P(y \mid \Do(z_1, z_2), x)
&=
\frac{
\sum_{z_3, z_5, z_2', z_4}
P(y \mid z_3, z_5, z_2', z_1, x, z_4)\,
P(z_4 \mid z_3, z_5, z_2', z_1, x)\,
P(x \mid z_3, z_5, z_2', z_1)\,
P(z_2' \mid z_3, z_5)\,
P(z_5 \mid z_3)\,
P(z_3)
}{
\sum_{z_3, z_5, z_2', z_4, y'}
P(y' \mid z_3, z_5, z_2', z_1, x, z_4)\,
P(z_4 \mid z_3, z_5, z_2', z_1, x)\,
P(x \mid z_3, z_5, z_2', z_1)\,
P(z_2' \mid z_3, z_5)\,
P(z_5 \mid z_3)\,
P(z_3)
}\\
P(y \mid \Do(x, z_1), z_3) 
&=
P(y \mid \Do(z_1), x, z_3) 
= P(y \mid \Do(x), z_3) 
=
P(y \mid \Do(z_1, z_3), x)\\
&=
\frac{
\sum_{z_3', z_5, z_2, z_4}
P(y \mid z_3', z_5, z_2, z_1, x, z_4)\,
P(z_4 \mid z_3', z_5, z_2, z_1, x)\,
P(x \mid z_3', z_5, z_2, z_1)\,
P(z_2 \mid z_3', z_5)\,
P(z_5 \mid z_3')\,
P(z_3')
}{
\sum_{z_3', z_5, z_2, z_4, y'}
P(y' \mid z_3', z_5, z_2, z_1, x, z_4)\,
P(z_4 \mid z_3', z_5, z_2, z_1, x)\,
P(x \mid z_3', z_5, z_2, z_1)\,
P(z_2 \mid z_3', z_5)\,
P(z_5 \mid z_3')\,
P(z_3')
}.
\end{align*}


\end{document}